\documentclass{article}

\usepackage{PRIMEarxiv}

\usepackage[utf8]{inputenc} 
\usepackage[T1]{fontenc}    
\usepackage{hyperref}       
\usepackage{url}            
\usepackage{booktabs}       
\usepackage{amsfonts}       
\usepackage{nicefrac}       
\usepackage{microtype}      
\usepackage{lipsum}
\usepackage{fancyhdr}       
\usepackage{graphicx}       
\graphicspath{{media/}}     

\usepackage{amsmath}
\usepackage{amssymb}
\usepackage{mathtools}
\usepackage{amsthm}
\usepackage{natbib}
\newtheorem{theorem}{Theorem}
\newtheorem{lemma}{Lemma}
\newtheorem{assumption}[theorem]{Assumption}

\usepackage{appendix}
\usepackage{minitoc}

\usepackage{subcaption}
\usepackage{makecell}
\usepackage{xcolor}
\definecolor{elegantred}{RGB}{150,40,40}
\definecolor{elegantgreen}{RGB}{40,150,40}
\definecolor{promptblue}{HTML}{4988C4}
\definecolor{darkblue}{HTML}{0F2854}
\definecolor{medblue}{HTML}{1C4D8D}
\definecolor{mildblue}{HTML}{4988C4}
\definecolor{lightblue}{HTML}{BDE8F5}
\usepackage{fvextra}

\usepackage{algorithm}
\usepackage{algorithmic}

\usepackage{tcolorbox}
\tcbuselibrary{breakable,skins}
\newtcolorbox{promptbox}[1][]{
  breakable,
  enhanced,
  colback=mildblue!5!white,
  colframe=mildblue!70!black,
  boxrule=0.8pt,
  arc=3pt,
  left=12pt,
  right=12pt,
  top=10pt,
  bottom=10pt,
  fonttitle=\bfseries,
  title={#1},
  before skip=12pt,
  after skip=12pt
}
\newtcolorbox{qsbox}[2][]{%
  enhanced,
  colback=lightblue!5!white,
  colframe=lightblue!70!black,
  boxrule=1pt,
  arc=3pt,
  left=10pt,right=10pt,top=9pt,bottom=9pt,
  fonttitle=\bfseries,
  coltitle=white,
  title={#2},
  #1
}
\newtcolorbox{resultboxours}[2][]{%
  enhanced,
  colback=mildblue!5!white,
  colframe=mildblue!70!black,
  boxrule=1pt,
  arc=3pt,
  left=10pt,right=10pt,top=9pt,bottom=9pt,
  fonttitle=\bfseries,
  coltitle=white,
  title={#2},
  #1
}
\newtcolorbox{resultboxothers}[2][]{%
  enhanced,
  colback=gray!4!white,
  colframe=gray,
  boxrule=1pt,
  arc=3pt,
  left=10pt,right=10pt,top=9pt,bottom=9pt,
  fonttitle=\bfseries,
  coltitle=white,
  title={#2},
  #1
}

\title{ \smash{\raisebox{-0.58\height}{\includegraphics[height=1.9em]{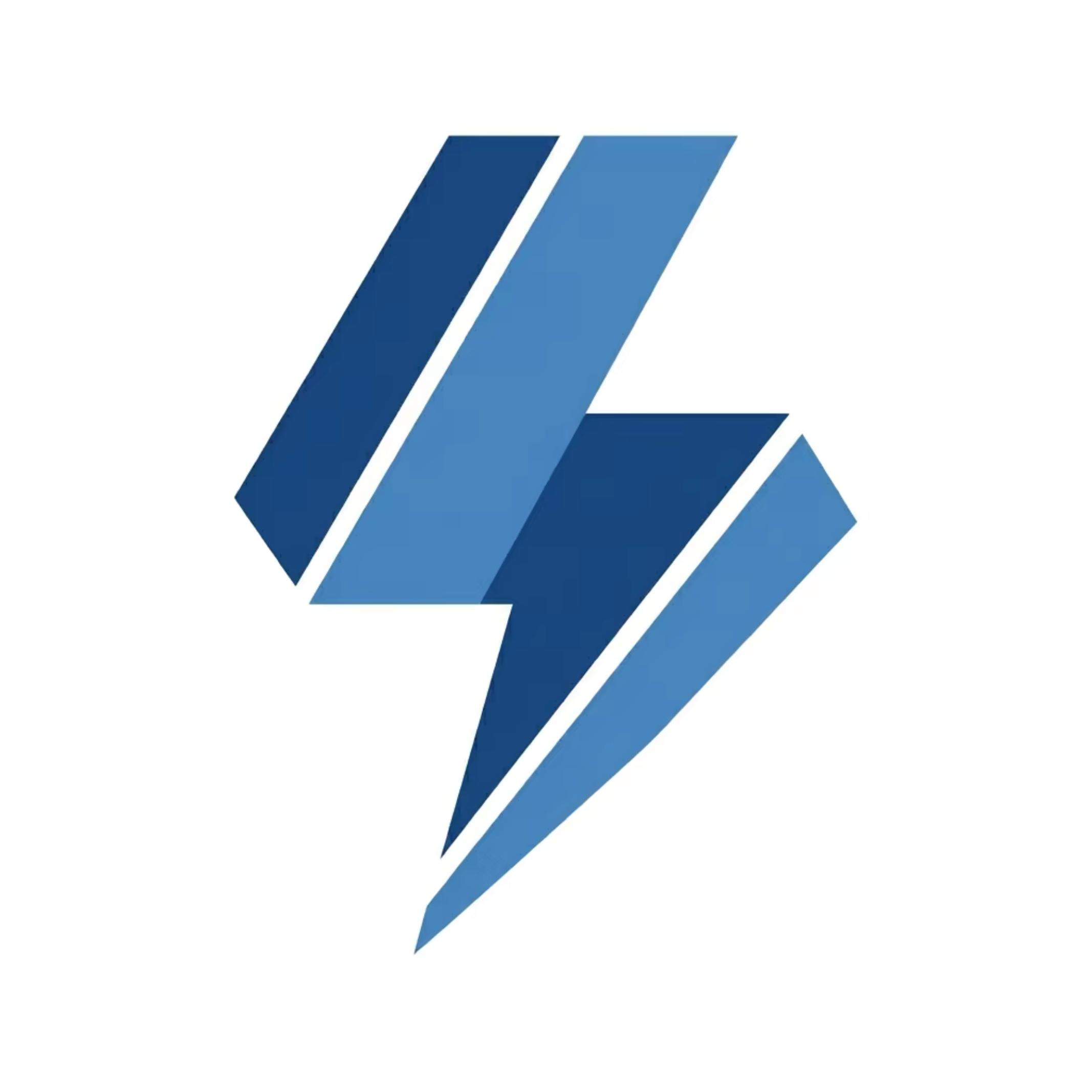}}}%
      \ Bifrost: Steering Strategic Trajectories to Bridge Contextual Gaps for Self-Improving Agents
}

\author{
  \textbf{Quan M. Tran}$^{1}$, 
  \textbf{Zhuo Huang}$^{1}$,
  \textbf{Wenbin Zhang}$^{2}$,
  \textbf{Bo Han}$^{3,4}$,\\
  \textbf{Koji Yatani}$^5$,
  \textbf{Masashi Sugiyama}$^{4,5}$
  \textbf{Tongliang Liu}$^{1}$\\[1ex]
  \small{$^1$Sydney AI Centre, The University of Sydney;}
  \small{$^2$Florida International University;}\\
  \small{$^3$Hong Kong Baptist University;}
  \small{$^4$RIKEN Center for Advanced Intelligence Project;}
  \small{$^5$The University of Tokyo}\\
}

\begin{document}
\maketitle

\begin{abstract}
Autonomous agents excel in self-improvement through reflection and iterative refinement, which reuse successful task trajectories as in-context examples to assist subsequent reasoning. 
However, shifting across tasks often introduces a context mismatch. Hence, existing approaches either discard the trajectories or manipulate them using heuristics, leading to a non-negligible fine-tuning cost or unguaranteed performance.
To bridge this gap, we reveal a context-trajectory correlation, where shifts of context are highly parallel with shifts of trajectory.
Based on this finding, we propose BrIdge contextual gap FoR imprOvised trajectory STeering (Bifrost), a training-free method that leverages context differences to precisely guide the adaptation of previously solved trajectories towards the target task, mitigating the misalignment caused by context shifts. 
Our trajectory adaptation is conducted at the representation level using agent hidden states, ensuring trajectory transformation accurately aligns with the target context in a shared space. 
Across diverse benchmarks, Bifrost consistently outperforms existing trajectory reuse and finetuned self-improvement methods, demonstrating that agents can effectively leverage past experiences despite substantial context shifts.
\end{abstract}

\keywords{Self-Improving Agents \and In-Context Learning \and Representation Engineering}

\section{Introduction}
Autonomous agents have fundamentally transformed fields like robotics \cite{bousmalis2023robocat, sharma2023self, puthumanaillam2025trace} and autonomous driving \cite{mei2024continuously, dagdanov2022self} by harnessing the power of self-improvement mechanisms \cite{weng2023large, madaan2023self, gou2023critic, pang2023language, bo2024reflective, zhang2024agent}. By integrating systemic reflection and autonomous error correction, these agents iteratively evolve their decision-making to master increasingly high-dimensional tasks \cite{wang2023voyager, zhao2024expel, qiu2025alita, zheng2025skillweaver, sarch2024vlm, zhang2025darwin, jin2025reveal, ma2023eureka}. To optimize this evolutionary process, emerging research focuses on the strategic reuse of successful task trajectories as in-context demonstrations \cite{wang2023voyager, lingam2025enhancing, zheng2025skillweaver, sarukkai2025self}. This paradigm allows agents to capitalize on historical successes \cite{xie2021explanation, lin2024dual, sarch2024vlm, lin2025se}, significantly enhancing efficiency while reducing the computational burden of iterative self-reflection \cite{brown2020language}.

Despite the promise of trajectory reuse, existing self-improvement frameworks remain suboptimal when faced with contextual non-stationarity across tasks. Current approaches often treat past trajectories as context-specific artifacts \cite{shinn2023reflexion, yang2024buffer, lin2025se, zhao2024expel, yuan2025reinforce}, either discarding them to rely on computationally intensive fine-tuning \cite{rebuffi2017icarl, zweiger2025self, narayanan2024aviary, pang2024kalm, costello2025think, amin2025escaping} or attempting to repurpose them via static heuristics \cite{bensal2025reflect, qu2024recursive, tajwar2025training, fu2025agentrefine, lin2025se, ge2025samule}. However, because these trajectories are inherently coupled with their original task environments, such a direct calibration leads to significant contextual misalignment \cite{an2023context, lin2024dual, gai2025differential, afonin2025emergent}. This mismatch not only degrades the generalization capabilities but can also actively hinder their performance when applied to novel domains \cite{fu2025agentrefine, huang2023harnessing, hong2024improving}. Consequently, the fundamental principle of cross-context trajectory adaptation remains a critical and largely unexplored frontier.

To address this challenge, we uncover a fundamental context-trajectory correlation within the agentic representation space. We observe that shifts in task context are strikingly parallel to shifts in the corresponding successful trajectories when projected into the latent manifold. Leveraging this geometric insight, we introduce BrIdge contextual gap FoR imprOvised trajectory STeering (Bifrost), a training-free framework that utilizes context differentials to precisely steer historical trajectories toward the target task requirements. By modeling the transition between past and current tasks through hidden-state representations, Bifrost effectively aligns prior experiences with the target context. This allows the agent to utilize past trajectories as highly accurate in-context examples, achieving seamless adaptation without the need for parameter updates or environment-specific fine-tuning.

In practice, Bifrost identifies a precise steering vector within the representation space that maps historical trajectories onto the manifold of the target task. Unlike existing heuristics that rely on untargeted or global calibration, Bifrost uses the target context as a dynamic anchor to selectively suppress context-specific artifacts while preserving high-utility, transferable knowledge. This targeted transformation ensures that the agent can exploit previously solved trajectories even across substantial domain gaps. Consequently, Bifrost significantly enhances cross-context generalization of the model, allowing agents to reap the full rewards of self-improvement without the performance degradation typically caused by environment shifts.

Across diverse benchmarks, Bifrost consistently outperforms trajectory reuse and finetuned self-improvement methods, demonstrating that our proposed method enables self-improving agents to effectively leverage past experiences despite substantial context shifts. Our contributions are:
\begin{itemize}
    \item We reveal the context-trajectory correlation, where context shifts are highly parallel with trajectory shifts. Based on this observation, we propose Bifrost, a training-free method that induces context differences to guide agents leveraging previous trajectories as in-context examples toward a target task.
    \item We provide theoretical analyses, showing that Bifrost can generalize experiences learned from prior trajectories to adapt toward a target task.
    \item We conduct experiments on diverse benchmarks, including question answering, math reasoning, and code generation, with different model sizes. Our experiments show that Bifrost consistently outperforms other popular baselines and state-of-the-art methods based on trajectory reuse or finetuning.
    
\end{itemize}

The rest of the paper is organized as follows. In Sec.~\ref{sec:method}, we demonstrate our methodology by revealing the context-trajectory correlation and presenting Bifrost. In Sec.~\ref{sec:theoretical_analysis}, we conduct a theoretical analysis showing that Bifrost enables agents to generalize from prior trajectories to adapt to target tasks. Next, we conduct experiments and further analysis on Sec.~\ref{sec:experiments}, showing that Bifrost outperforms baseline and recent self-improving agent approaches. We present related work in Sec.~\ref{sec:related_work}. Finally, we conclude our work in Sec.~\ref{sec:conclusion}.

\begin{figure}[t]
    \centering
    \includegraphics[width=0.6\linewidth]{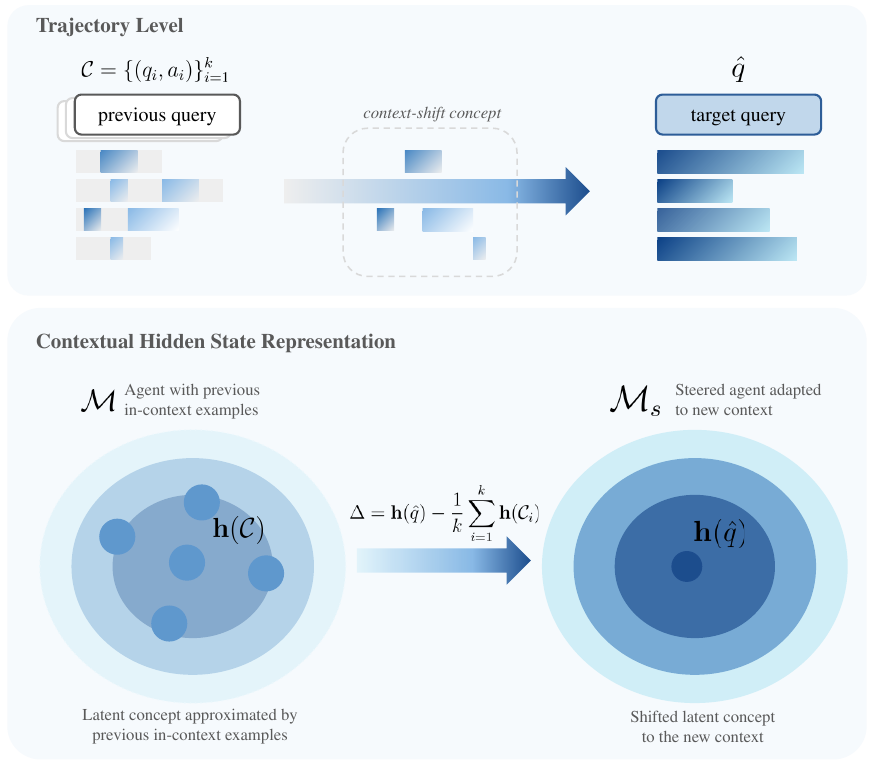}
    \caption{Illustration of Bifrost: It identifies the context-shift concept between previous and target tasks, which contains essential knowledge for bridging the context gap. Further, by steering along the context shift direction, Bifrost helps find an optimal strategy that is effective under the target context.}
    \label{fig:bifrost}
\end{figure}

\section{Methodology}\label{sec:method}
Our work focuses on how an LLM Agent can effectively leverage its successful trajectories solved in the past to adapt to a target task under a substantial context shift. 

Let $\mathcal{M}$ be a LLM that solved tasks under a previous context where successful trajectories are preserved as $\mathcal{C}=\{(q_i, a_i)\}_{i=1}^{k}$. Here, each pair $(q_i, a_i)$ denotes a trajectory consisting of a query and its corresponding reasoning process leading to the answer. 
Our objective is to find an optimal reasoning strategy by leveraging previous trajectories that maximizes the log likelihood of the LLM agent on the target task $\hat{\mathcal{C}}=\{(\hat{q}_i, \hat{a}_i)\}_{i=1}^{n}$,
\begin{equation}
    \mathcal{S^*} = \arg\max_{\mathcal{S}} \mathbb{E}_{(\hat{q}, \hat{a}) \in \hat{\mathcal{C}}}
    \left[
        \log \mathcal{M}_{\mathcal{S}}(\hat{a} \mid \hat{q})
    \right],
\end{equation}
where $\mathcal{M}_\mathcal{S}$ denotes that the model $\mathcal{M}$ reasons following the reasoning strategy $\mathcal{S}$. Although the strategy $\mathcal{S}$ succeeded under the previous context $\mathcal{C}$, it might not align with the target context $\hat{\mathcal{C}}$. Therefore, instead of fully fine-tuning the agent or naively reusing previous strategies, we aim to bridge the contextual gap between previous tasks and any target tasks by proposing Bifrost, which can improvise under any target contexts by properly steering the trajectory of an existing strategy. As a result, Bifrost improves in-context learning via an optimal reasoning strategy tailored for the target context, as shown in Fig.~\ref{fig:bifrost}.

To demonstrate Bifrost, we first reveal our findings of context-trajectory correlation in Sec.~\ref{subsec:method:correlation}. Further, based on this finding, we perform trajectory steering to adapt previous trajectories to the target task in Sec.~\ref{subsec:method:proposed_method}.  

\subsection{Context-Trajectory Correlation}
\label{subsec:method:correlation}
To conduct proper reasoning under a target context, it is vital to employ an effective trajectory. Therefore, we investigate the correlation between trajectories and context under task shift. To accomplish this, we follow Linear Representation Hypothesis \cite{park2023linear} by regarding LLMs as an embedding-unembedding process.

Particularly, the embedding encodes context inputs into hidden states, \textit{e.g.}, embedding vectors, which contain essential concepts that enable knowledge generalization across tasks. For example, ``king'' and ``queen'' are conceptually related to ``man'' and ``woman'', respectively. Formally, for a context text $x=(q_i, a_i)$, we assume the hidden state mapped by an LLM is $\mathbf{h}(x)$ in a representation space $\Lambda$. Further, the model represents each word $y$ in its vocabulary into an unembedding vector $\mathbf{g}(y)$ in a separate space $\Gamma$ with the same dimension. To this end, the probability distribution for next-word prediction is formulated as:
\begin{equation}
    P(y \mid x)\propto \exp(\mathbf{h}(x)^\top\mathbf{g}(y)).
    \label{eq:llm_inference}
\end{equation}
Intuitively, the reasoning process can be explained as two representation spaces that are connected via an inner product, where LLMs aim to find the best match between a context input and a reasoning output. By conducting such a process iteratively, the input-output logic chain formulates a strategic trajectory that is optimized for a certain context.

In agentic AI, it is common to apply agents across various tasks, thus encountering a novel context. To ensure the target task is solvable, we have the following assumption:
\begin{assumption}\label{assumption:conceptual_knowledge}
For an LLM agent $\mathcal{M}$ capable of solving task A and task B with contexts $\mathcal{C}^A$ and $\mathcal{C}^B$, respectively, the concepts $\mathcal{W}^A=\{W_i^A\}_{i=1}^k$ and $\mathcal{W}^B=\{W_i^B\}_{i=1}^n$ extracted from task A and task B must satisfy:
\begin{itemize}
    \item The vocabulary space of $\mathcal{M}$ covers $\mathcal{W}^A\cup \mathcal{W}^B$.
    \item The shared concepts $\mathcal{W}^A\cap \mathcal{W}^B$ are causally separable.
    \item For any private concepts $W^A\in\mathcal{W}^A\setminus \mathcal{W}^B$ and $W^B\in\mathcal{W}^B\setminus\mathcal{W}^A$, the embedding-unembedding vectors satisfy the subspace independence $\mathbf{h}_{W^A}^\top\mathbf{g}_{W^B}=0$.
\end{itemize}
\end{assumption}

To handle a target context $\hat{\mathcal{C}}$ specific to concept $W$, the contextual gap can be captured by a steering vector $\bar{\mathbf{h}}_W$ scaled by $\alpha$ in the hidden states:
\begin{equation}\label{eq:contextual_direction}
    \hat{\mathbf{h}} = \mathbf{h} + \alpha\bar{\mathbf{h}}_W.
\end{equation}
Considering a binary-value concept, \textit{e.g.}, man and woman, the probability of the output $Y$ taking value $Y(1)$ versus its counterfactual $Y(0)$ is measured by the logit. According to the measurement representation theorem in Park et al.~\cite{park2023linear}, such a logit is linear with respect to $\mathbf{h}$:
\begin{equation}
\label{eq:logit_measurement}
    \!\!\!\text{logit\! } P(Y\! =\! Y(1) \mid \mathbf{h}; Y\!\in\!\{Y(0), Y(1)\}) = \beta\mathbf{h}^\top \bar{\mathbf{g}}_W,\!\!
\end{equation}
where $\bar{\mathbf{g}}_W$ is the unembedding representation (the output direction) of the task and $\beta>0$ is a function of $\{Y(0), Y(1)\}$. The following theorem shows that output tokens change linearly with the contextual shift.

\begin{theorem}\label{theorem:context_traj_correlation}
    Under the Linear Representation Hypothesis, for a concept $W$ satisfying the conditions in Assumption~\ref{assumption:conceptual_knowledge}, for an embedding vector $\bar{\mathbf{h}}_W$ and unembedding vector $\bar{\mathbf{g}}_W$ of $W$, the change of output logits is a linear function of the magnitude of the context shift $\beta\bar{\mathbf{h}}_W$.
\end{theorem}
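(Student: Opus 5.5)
We substitute the steered hidden state of Eq.~\eqref{eq:contextual_direction} into the logit measurement of Eq.~\eqref{eq:logit_measurement} and expand by bilinearity of the inner product. For a binary concept $W$ with counterfactual pair $\{Y(0),Y(1)\}$, the unsteered logit is $\beta\mathbf{h}^\top\bar{\mathbf{g}}_W$. After steering, it becomes
\begin{equation*}
\beta\hat{\mathbf{h}}^\top\bar{\mathbf{g}}_W=\beta\mathbf{h}^\top\bar{\mathbf{g}}_W+\alpha\,\beta\,\bar{\mathbf{h}}_W^\top\bar{\mathbf{g}}_W .
\end{equation*}
The change in logit is therefore $\Delta=\alpha\,(\beta\bar{\mathbf{h}}_W)^\top\bar{\mathbf{g}}_W$. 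This is linear in the steering magnitude $\alpha$, and equivalently a linear functional of the context-shift vector $\beta\bar{\mathbf{h}}_W$. The factor $\beta>0$ depends only on $\{Y(0),Y(1)\}$, not on $\mathbf{h}$, so it passes through unchanged.

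\textbf{Step 1: the slope is nonzero and positive.} Following Park et al.~\cite{park2023linear}, the embedding and unembedding representations of the same concept are related through the causal inner product, which gives $\bar{\mathbf{h}}_W^\top\bar{\mathbf{g}}_W>0$. As a result, steering along $\bar{\mathbf{h}}_W$ monotonically increases the probability of $Y(1)$ relative to $Y(0)$.

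\textbf{Step 2: the steering leaves other concepts unchanged.} We use Assumption~\ref{assumption:conceptual_knowledge} to show that the logits of every other concept are unaffected. For a concept $Z$ private to the other task, subspace independence gives $\bar{\mathbf{h}}_W^\top\bar{\mathbf{g}}_Z=0$. For shared concepts, causal separability gives the same orthogonality through the unembedding theorem of \cite{park2023linear}. Hence the full logit vector shifts by $\alpha\beta\,\bar{\mathbf{h}}_W^\top\bar{\mathbf{g}}_W$ along the $W$-direction only, and by zero elsewhere. The change in the output logits is therefore a linear function of the context shift, and it moves only the concept distinguishing the two contexts. This is the claimed context-trajectory correlation.

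\textbf{Main obstacle.} The difficulty is justifying that Eq.~\eqref{eq:logit_measurement} still holds at the steered state $\hat{\mathbf{h}}$ and not only at states the model naturally produces. I would handle this by noting that the measurement theorem is stated for arbitrary $\mathbf{h}$ in $\Lambda$, because the logit is the exact softmax log-odds $\mathbf{h}^\top(\mathbf{g}(Y(1))-\mathbf{g}(Y(0)))$. Under the linear representation hypothesis, this difference is proportional to $\bar{\mathbf{g}}_W$ with constant $\beta$, so linearity holds identically in $\mathbf{h}$. Establishing the nonzero slope in Step 1 (the causal-inner-product argument) is the other delicate point. I would import it directly from \cite{park2023linear} rather than reprove it.
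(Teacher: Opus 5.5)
Your argument is the paper's: substitute $\hat{\mathbf{h}}=\mathbf{h}+\alpha\bar{\mathbf{h}}_W$ into the measurement identity and expand bilinearly. Your version is, if anything, cleaner, because you keep the conclusion at the logit level (the paper ends with a probability difference $\Delta_P$, which is not linear in the shift) and you explicitly justify using the identity at steered states. Steps~1--2 are not needed for the statement, and Step~2 overreaches: Assumption~\ref{assumption:conceptual_knowledge} gives orthogonality only between private concepts of different tasks and among shared concepts, so it does not cover pairs such as a private $W$ against a shared $Z$, and ``every other concept is unaffected'' does not follow.
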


\begin{proof}
    Given the optimal logit based on the target context hidden state $\hat{\mathbf{h}}$, we substitute the shift according to Eq.~\eqref{eq:contextual_direction}:
    \begin{align}
        \text{logit } P(Y = Y(1) \mid \bar{\mathbf{h}}) &= \beta (\mathbf{h} + \alpha\bar{\mathbf{h}}_W)^\top \bar{\mathbf{g}}_W\\
        &= \beta\mathbf{h}^\top\bar{\mathbf{g}}_W + \alpha(\beta\bar{\mathbf{h}}_W^\top \bar{\mathbf{g}}_W).
    \end{align}
    Note that $\beta$ is fixed under a certain concept, and $\bar{\mathbf{g}}_W$ denotes the concept direction based on the model understanding, \textit{e.g.,}, ``king'' to ``queen'' and ``man'' to ``woman'' have the same shifting direction when given a fixed model. Thus the term $\beta\bar{\mathbf{g}}_W$ is a constant $K$. Therefore, we have
    \begin{align}
        \!\!\!\text{logit\! } P(Y\! =\! Y(1) \mid \bar{\mathbf{h}})= \text{logit\! } P(Y\! =\! Y(1) \mid \mathbf{h})\! + \!\alpha K\bar{\mathbf{h}}_W.\!\!
    \end{align}
    By defining the shift of logit distribution under different context hidden states $\Delta_P=P(Y = Y(1) \mid \hat{\mathbf{h}})-P(Y = Y(1) \mid \mathbf{h})$, we have $\Delta_P\propto \alpha K\bar{\mathbf{h}}_W$.
\end{proof}

Intuitively, a trajectory is composed of output tokens, which are a sequence of actions $a$. This theorem shows that selecting an optimal trajectory based on an existing one can be guided by the context shift.

\paragraph{Discussion} The above results are focused on one concept, which can be generalized to complex tasks containing multiple concepts. For concepts that are not shared across tasks, the subspace independence ensures that the embedding space are irrelavant to the unembedding space. Thus, the corresponding prediction logits are not affected by the context shift. Moreover, for multiple shared concepts, the causal separability ensures that they are not confusing each other. For example, adapting the context ``He speaks English'' to a target context ``She speaks French'' must be separated from adapting to another target``He speaks French''.

\begin{table}[t]
\setlength{\tabcolsep}{4pt}
    \caption{Hypothesis testing result.}
    \label{tab:hypothesis_testing}
    \begin{center}
        \begin{small}
            \begin{tabular}{cccccc}
                \toprule
                $\alpha$ & 0.1 & 0.2 & 0.3 & 0.4 & 0.5 \\
                \midrule
                \textbf{p-value} & 0.00418 & 0.00665 & 0.01085 & 0.00802 & 0.02019 \\
                \bottomrule
            \end{tabular}
        \end{small}
    \end{center}
\end{table}

\paragraph{Experimental Demonstration}
We conduct a small-scale experiment to demonstrate the context–trajectory correlation described in Theorem~\ref{theorem:context_traj_correlation}. We randomly select 100 context-separated samples from the AQUA and GSM8K datasets, forming 50 paired examples. These pairs are used to compute contextual direction vectors from AQUA to GSM8K. We hypothesize that when an agent solves AQUA tasks, the trajectories steered toward GSM8K exhibit higher similarity to GSM8K trajectories than to those generated by the non-steered model on AQUA. Tab.~\ref{tab:hypothesis_testing} shows a strong statistical significance over varied values of the control parameter $\alpha$.

\subsection{Improvised Trajectory Steering for Target Context}\label{subsec:method:proposed_method}
Further, we build on the context-trajectory correlation and propose Bifrost, a training-free and context-guided strategy that enables agents to self-improve by employing prior successful task trajectories under substantial context shifts. 

As demonstrated in Sec.~\ref{subsec:method:correlation}, trajectories exhibit behavioral adaptation in response to context shift toward a target task. This motivates a trajectory steering of the existing in-context learning mechanism using such trajectories.

\paragraph{Context-Guided Adaptation under In-Context Learning} Given a set of successful task trajectories from previous context $\mathcal{C} = \{(q_i, a_i)\}_{i=1}^{k}$ solved by a LLM agent $\mathcal{M}$.
Our goal is to solve a target task $\hat{q} \in \hat{\mathcal{C}}$ from a different context by leveraging $\mathcal{C}$ as in-context examples and steering the model toward the target context. The steered model then produces the output $\hat{a} = \mathcal{M}_\mathrm{s}(p)$, where $\mathcal{M}_\mathrm{s}$ denotes the contextually steered model, and $p =\mathcal{C} \circ \hat{q}$ denotes the in-context learning input prompt with $\mathcal{C}$ as demonstrations followed by the target question $\hat{q}$.

\paragraph{Contextual Representation Steering} Let $\mathbf{h}^\ell$ be the last token hidden state of the original model $\mathcal{M}$, the trajectory steering can be applied across specific layers $\ell \in L$, yielding the steered model $\mathcal{M}_\mathrm{s}$ with corresponding hidden state $\mathbf{h}^\ell_\mathrm{s} = \mathbf{h}^\ell + \alpha \Delta^\ell$, where $\alpha \in \mathbb{R}$ controls the steering strength. The vector $\Delta^\ell$ that represents the contextual direction from the previous trajectory to the target task is computed as 
\begin{equation}\label{eq:full_context_direction}
    \Delta^\ell = \mathbf{h}^\ell(\hat{q}) - \frac{1}{k} \sum_{i=1}^{k} \mathbf{h}^\ell(\mathcal{C}_i),
\end{equation}
where $\mathbf{h}^\ell(\hat{q})$ is the target question hidden state extracted by $\mathcal{M}(\hat{q})$, and $\mathbf{h}^\ell(\mathcal{C}_i)$ extracts trajectory hidden state when the agent solve the corresponding task $a_i = \mathcal{M}(q_i)$.

The contextual direction $\Delta^\ell$ represents a context-shift concept that appears as the difference between previous trajectory execution and the current task. Averaging the prior trajectory hidden states finds the shared patterns, while the steering model encourages such patterns adaptable to the target task, thereby enabling trajectory reuse under context shift and enhancing agent generalization.

\paragraph{Implementation Details} Algorithm~\ref{alg:method} describes our proposed method in details. In practice, we precompute the average of previous trajectory hidden states to ensure computational efficiency. We apply Bifrost to each target task instead of computing a shared steering vector. This guarantees a precise adaptation of prior trajectories toward each task. In our experiments, we steer middle-to-late layers since they represent reasoning patterns optimal for steering, as stated in previous work \cite{venhoff2025understanding, zhao2025steering, turner2023steering}. Besides subtraction to obtain the steering vector, Bifrost also works well with other projection-based techniques such as principal component analysis (PCA), and Sparse Autoencoder (see Sec.~\ref{sec:experiments:further_analysis}).


\begin{algorithm}[t]
  \footnotesize
  \caption{Bifrost}
  \label{alg:method}
  \begin{algorithmic}[1]
    \STATE \textbf{Input:} $\mathcal{C} = \{(q_i, a_i)\}_{i=1}^k$, $\hat{\mathcal{C}} = \{\hat{q}_j\}_{j=1}^n$, model $\mathcal{M}$, strength $\alpha$, layers $L$.
    \STATE Precompute $\bar{\mathbf{h}}^\ell = \frac{1}{k}\sum_{i=1}^{k}\mathbf{h}^\ell(c_i)$ for all $\ell \in L$.
    \FOR{each $\hat{q} \in \hat{\mathcal{C}}$}
        \STATE $p \gets \mathcal{C} \circ \hat{q}$; obtain unsteered $\mathbf{h}^\ell(p)$.
        \STATE Compute $\Delta^\ell = \mathbf{h}^\ell(\hat{q}) - \bar{\mathbf{h}}^\ell$ and steer: $\mathbf{h}_\mathrm{s}^\ell = \mathbf{h}^\ell + \alpha \Delta^\ell, \forall \ell \in L$.
        \STATE Generate $\hat{a} = \mathcal{M}_\mathrm{s}(p)$ using steered states.
    \ENDFOR
    \STATE \textbf{Output:} $\{\hat{a}_j\}_{j=1}^n$
  \end{algorithmic}
\end{algorithm}


\section{Theoretical Analysis}\label{sec:theoretical_analysis}
We analyze the proposed method from a theoretical perspective, showing that Bifrost effectively leverages past trajectory experiences to adapt toward a target task. Our analysis is based on In-Context Learning (ICL) as Bayesian Inference \cite{xie2021explanation}, demonstrating that Bifrost preserves prior knowledge induced by previous trajectories while reducing posterior uncertainty toward the target tasks. Furthermore, we exceed the risk bound of Bifrost compared with an ordinary LLM agent that solely solves the target task. 
We begin by introducing In-Context Learning as Bayesian Inference. Proofs shown in Appendices \ref{appdix:proof_laplace_update} and ~\ref{appdix:proof_risk_bound}.

\paragraph{In-Context Learning as Bayesian Inference}
The LLM agent $\mathcal{M}$ generates a prediction for a given input $x$ by implicitly inferring a latent concept $\phi$ from a context $\mathcal{C}^*$ represented by in-context examples, $\mathcal{M}(x) \approx \mathbb{E}_{\phi \sim p(\phi \mid \mathcal{C}^*)} \left[ f_{\phi}(x) \right]$, where $p(\phi|C^*)$ is the posterior distribution of concepts given a context. The effectiveness of agents stems from the latent concept inferred from the in-context examples. Thus, we focus on modeling the latent concept given previous trajectories.

In our case, the context is represented by previous trajectories, $\mathcal{C}^* = \mathcal{C}$. By the definition, the posterior probability of the concept given previous trajectories is defined as
\begin{align}
    p(\phi \mid \mathcal{C}) 
    \; \propto \; p(\mathcal{C} \mid \phi) p(\phi)\notag 
    \; \propto \; \exp\left(
    -\frac{1}{k}\sum_{i=1}^{k} \mathcal{L}(f_\phi(q_i), a_i)
    \right)p(\phi),
\end{align}
where $\mathcal{L}$ denotes the loss function. 

\paragraph{Posterior Inference Steering Effect} The latent concept distribution can now be approximated by a Gaussian distribution over the context represented by previous context trajectories. Assuming $\mathcal{L}$ is convex and $L$-Lipschitz in $f_\phi$, and $f_\phi$ is locally smooth in $\phi$, we approximate the posterior of the latent concept via Laplace approximation,
\begin{equation}
    p(\phi \mid \mathcal{C}) \approx \mathcal{N}(\tilde{\phi}|\mu,\Sigma^{-1}),
\end{equation}
where $\mathcal{N}(\phi|\mu,\Sigma^{-1})$ denotes the Gaussian density with mean $\mu$ and inverse covariance matrix $\Sigma^{-1}$, and $\Sigma = \frac{1}{\sigma^2}\mathbf{F}^\top\mathbf{F}$, and $\mathbf{F}: \mathcal{X} \rightarrow \mathbb{R}^{d}$ is the feature map.

When leveraging previous trajectories as in-context examples, the latent concept is additionally conditioned on the target task $\hat{q} \in \hat{\mathcal{C}}$. By further applying a trajectory steering $\Delta\in\Lambda$ from Bifrost, the concept distribution mean shifts toward an ideal concept of such a target task accordingly.
\begin{lemma}\label{lemma:laplace_update}
    Let $p_\mathrm{s}(\phi \mid \mathcal{C}, \hat{q})$ be the latent concept posterior probability of the steered model, conditioned on previous context trajectories $\mathcal{C}$ and the target task $\hat{q} \in \hat{\mathcal{C}}$.
    The contextual shift $\Delta$ shifts the concept distribution mean toward an ideal target task concept distribution by $\alpha P(\Delta)$ where $P$ is the projection, while preserving uncertainty reduction induced by previous context trajectories,
    \begin{equation}
        p_\mathrm{s}(\phi \mid \mathcal{C}, \hat{q}) \approx \mathcal{N}
        \left(
            \tilde{\phi} + \alpha P(\Delta)|\mu, \Sigma^{-1}
        \right)
    \end{equation}
\end{lemma}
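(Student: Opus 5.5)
We would prove Lemma~\ref{lemma:laplace_update} by treating the steering $\mathbf{h}_\mathrm{s}=\mathbf{h}+\alpha\Delta$ as a reparametrization of the latent concept and checking that a Laplace approximation is preserved under it. First we make the link between hidden states and concepts explicit. Under the Linear Representation Hypothesis (Eq.~\eqref{eq:llm_inference}) and Theorem~\ref{theorem:context_traj_correlation}, the output logits depend on the hidden state only through inner products with unembedding directions. We therefore posit a locally linear map $P:\Lambda\to\Phi$ from the representation space to concept space. It is obtained by linearizing $f_\phi$ around the Laplace mode $\tilde\phi$, using the assumed local smoothness of $f_\phi$ in $\phi$, and projecting onto the concept coordinates. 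Under this map, adding $\alpha\Delta$ to $\mathbf{h}$ is equivalent, to first order, to evaluating $f_{\phi+\alpha P(\Delta)}$ in place of $f_\phi$. Assumption~\ref{assumption:conceptual_knowledge} justifies that $P(\Delta)$ acts only on the shared concept coordinates. Subspace independence kills the cross terms between private concepts, and causal separability keeps the shared directions from mixing.

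Second, we write the steered posterior with the same Gibbs form as the unsteered one:
\[
p_\mathrm{s}(\phi\mid\mathcal{C},\hat q)\propto \exp\Big(-\tfrac1k\textstyle\sum_i\mathcal{L}(f_{\phi+\alpha P(\Delta)}(q_i),a_i)\Big)p(\phi).
\]
Conditioning on $\hat q$ enters only through $\Delta$, since $\Delta$ is computed from $\mathbf{h}(\hat q)$ in Eq.~\eqref{eq:full_context_direction}. We then substitute $\psi=\phi+\alpha P(\Delta)$. Because this translation has unit Jacobian, and the prior is taken flat, or as varying slowly on the scale of $\alpha P(\Delta)$, the steered density in $\psi$ coincides with $p(\psi\mid\mathcal{C})$ up to normalization.

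Third, we apply the Laplace approximation already stated before the lemma, $p(\phi\mid\mathcal{C})\approx\mathcal{N}(\tilde\phi\mid\mu,\Sigma^{-1})$, to the variable $\psi$. The mode moves by exactly $\alpha P(\Delta)$. The Hessian at the new mode is the same matrix $\Sigma=\frac1{\sigma^2}\mathbf{F}^\top\mathbf{F}$, because the loss curvature is translation-invariant under the change of variables and the feature map $\mathbf{F}$ is evaluated on the same trajectories $q_i$. This gives the stated form $\mathcal{N}(\tilde\phi+\alpha P(\Delta)\mid\mu,\Sigma^{-1})$. The precision is unchanged, which is exactly the claim that steering preserves the uncertainty reduction induced by $\mathcal{C}$. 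Convexity and the $L$-Lipschitz property of $\mathcal{L}$ are used to guarantee a unique mode, so that the Laplace expansion is valid. They also bound the second-order remainder of the linearization by $O(L\alpha^2\|\Delta\|^2)$, which is what justifies the ``$\approx$''.

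The main obstacle is the first step: rigorously identifying a hidden-state shift with a concept-space translation through a well-defined $P$. This identification is only a first-order statement. We would handle it by assuming $\alpha\|\Delta\|$ is small and absorbing the Jacobian mismatch into the approximation error. We would also argue that the direction of $P(\Delta)$ points toward the target concept because $\Delta$ is the difference between the target-question representation and the mean trajectory representation.
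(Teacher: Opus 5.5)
\paragraph{Genuine gap: the change of variables gives the opposite shift.} Carried out correctly, your second and third steps do not yield the claimed mean. Put $G(\psi)=\exp\bigl(-\tfrac1k\sum_i\mathcal{L}(f_\psi(q_i),a_i)\bigr)$. With a flat prior, your steered Gibbs posterior is $p_\mathrm{s}(\phi\mid\mathcal{C},\hat q)\propto G(\phi+\alpha P(\Delta))$, and the unsteered posterior is $p(\psi\mid\mathcal{C})\propto G(\psi)$. Hence $p_\mathrm{s}(\phi\mid\mathcal{C},\hat q)=p(\phi+\alpha P(\Delta)\mid\mathcal{C})$.

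\begin{itemize}
\item The mode of this density in $\phi$ solves $\phi+\alpha P(\Delta)=\tilde\phi$, so it is $\tilde\phi-\alpha P(\Delta)$.
\item In the variable $\psi$ the mode is $\tilde\phi$, so it does not move at all.
\end{itemize}

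Neither reading gives mean $\tilde\phi+\alpha P(\Delta)$. Read literally, the lemma's display (a density evaluated at a shifted argument) matches $p(\phi+\alpha P(\Delta)\mid\mathcal{C})$. But that is a shift of $-\alpha P(\Delta)$. This contradicts the lemma's stated claim, the appendix's $\mathcal{N}(\tilde\phi+\alpha P(\Delta),\Sigma^{-1})$, and your own ``the mode moves by exactly $\alpha P(\Delta)$''.

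The deeper problem is that you put the steered predictor inside the likelihood of the old demonstrations. The posterior then simply compensates for the steering: $\mathbb{E}_{\phi\sim p_\mathrm{s}}[f_{\phi+\alpha P(\Delta)}(\hat q)]=\mathbb{E}_{\psi\sim p(\cdot\mid\mathcal{C})}[f_\psi(\hat q)]$. So in your model steering has no effect on predictions. That defeats the lemma and Theorem~\ref{theorem:risk_bound}, whose bias term relies on $\mu=\tilde\phi+\alpha P(\Delta)$.

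You could instead apply the translation after inference: take $p_\mathrm{s}$ to be the law of $\phi+\alpha P(\Delta)$ with $\phi\sim p(\phi\mid\mathcal{C})$. Then $\mathcal{N}(\tilde\phi+\alpha P(\Delta),\Sigma^{-1})$ follows immediately, since translation preserves the covariance and no flat prior or re-Laplace step is needed. But the lemma then merely restates the modelling assumption.

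\paragraph{Missing idea: linking $P(\Delta)$ to the target concept.} The real content, that the shift points toward the target concept, is what you defer to the ``main obstacle''. There you give only a heuristic. The paper supplies this link differently, in three steps:
\begin{itemize}
\item It conditions on $\hat q$ through its likelihood $p(\hat q\mid\phi)$. Your posterior never uses this term, because you let $\hat q$ enter only through $\Delta$.
\item It performs a one-step Laplace/Newton update around $\tilde\phi$. This moves the mean by $\Sigma^{-1}\nabla_\phi\log p(\hat q\mid\phi)|_{\tilde\phi}$ and, once the new Hessian $Q$ is dropped, leaves the covariance unchanged.
\item It uses the first-order expansion $\Delta\approx\frac{\partial\mathbf{h}}{\partial\phi}(\phi^*-\tilde\phi)$ to identify this Bayesian mean shift with $P(\Delta)=\mathbf{J}^\top\Delta$.
\end{itemize}
In short, the target likelihood produces a mean shift, and the hidden-state difference is the Jacobian image of that shift. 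This is the idea that ties $P(\Delta)$ to $\phi^*$. Without it, or an explicitly stated equivalent assumption, your argument cannot distinguish steering toward the target from steering in any other direction.
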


\paragraph{Excess Risk Bound} We further analyze the risk bound of Bifrost compared to an ordinary agent that solely solves the target task without providing in-context examples from previous context tasks. Given that the target task shares a common representation with previous trajectories that are transferable, the following theorem shows that the bound of the proposed method reduces linearly with the number of induced previous trajectories.

\begin{theorem}\label{theorem:risk_bound}
    Let $\mathcal{M}_\mathrm{s}$ be the steered agent by Bifrost, and $\mathcal{M}$ be the ordinary agent without leveraging in-context learning. 
    Assuming that the target task aligns with the previous trajectory context,
    and the contextual shift $\Delta$ aligns with the shift of latent concept posterior probability as shown in Lemma~\ref{lemma:laplace_update}.
    The steered agent by Bifrost satisfies
    \begin{equation}
        \mathbb{E}\left[
            \mathcal{L}(\mathcal{M}_\mathrm{s}(\mathcal{C} \circ \hat{q}))
        \right] 
        \leq 
        \mathbb{E}\left[
            \mathcal{L}(\mathcal{M}(\hat{q}))
        \right] 
        - \Omega\left(\frac{1}{k}\right).
    \end{equation}
\end{theorem}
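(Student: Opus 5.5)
We compare two Bayesian predictors, with the ordinary agent $\mathcal{M}(\hat{q})$ as the baseline and Lemma~\ref{lemma:laplace_update} describing the steered agent's posterior. Both are read through the ICL-as-Bayesian-inference view, $\mathcal{M}(x)\approx \mathbb{E}_{\phi\sim p(\phi\mid\cdot)}[f_\phi(x)]$.

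\textbf{Ordinary agent.} It conditions on no demonstrations, so its latent concept is drawn from the prior $p(\phi)$. We model the prior as a Gaussian with covariance $\Sigma_0$ that is not shrunk by any data.

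\textbf{Steered agent.} By Lemma~\ref{lemma:laplace_update}, its posterior is approximately $\mathcal{N}(\tilde{\phi}+\alpha P(\Delta)\mid\mu,\Sigma^{-1})$. Here the precision $\Sigma=\frac{1}{\sigma^2}\mathbf{F}^\top\mathbf{F}$ accumulates over the $k$ trajectories. The assumption that the contextual shift is aligned with the posterior shift lets us take $\mu+\alpha P(\Delta)$ to be close to the ideal target concept $\phi^*_{\hat{q}}$. The alignment between the target task and the previous context lets us take $\mathbf{F}^\top\mathbf{F}$ to grow linearly in $k$ along the shared representation.

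\textbf{Step 1: bias--variance split.} Since $\mathcal{L}$ is convex and $L$-Lipschitz in $f_\phi$ and $f_\phi$ is locally smooth, we Taylor-expand the expected loss of each agent around $\phi^*_{\hat{q}}$. This gives
\begin{equation*}
\mathbb{E}[\mathcal{L}] \approx \mathcal{L}(f_{\phi^*}) + L\,\|\nabla f\|\,\|\mathbb{E}\phi-\phi^*\| + \tfrac{1}{2}\operatorname{tr}(H\,\mathrm{Cov}(\phi)),
\end{equation*}
where $H$ is the local curvature of the loss composed with $f$.

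\textbf{Step 2: bias terms.} For the steered agent, the alignment assumption makes the mean error $\|\mu+\alpha P(\Delta)-\phi^*\|$ at most the prior-mean error. It is small, and we absorb it into lower-order terms.

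\textbf{Step 3: variance terms.} The ordinary agent keeps $\operatorname{tr}(H\Sigma_0)$. The steered agent has $\operatorname{tr}(H\Sigma^{-1})$, and with $\Sigma\succeq \Sigma_0^{-1}$ this covariance is strictly smaller than $\Sigma_0$. Subtracting the two gives the gain $\frac{1}{2}\operatorname{tr}\big(H(\Sigma_0-\Sigma^{-1})\big)\ge 0$.

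\textbf{Step 4: rate.} To obtain $\Omega(1/k)$, we use the $\frac{1}{k}$-averaged likelihood in the paper's posterior definition. Its Hessian $\mathbf{F}^\top\mathbf{F}/\sigma^2$ is an average of per-example curvatures. The resulting variance gap scales as $\frac{1}{k}\operatorname{tr}(H\,\bar{F}^{-1})$, which we lower-bound by $c/k$ with $c>0$ under the shared-representation assumption. Combining this with Step 2 gives the stated inequality.

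\textbf{Main obstacle.} Steps 2--4 need to agree on how $\Sigma$ depends on $k$. The rate claim uses the $\frac{1}{k}$-averaged likelihood as written, under which the posterior precision is $O(1)$ rather than growing linearly in $k$. Yet Step 3 needs an explicit variance gap bounded below, which requires a nondegenerate prior $\Sigma_0$ and a precision that increases with $k$. The first thing to try is to make the precision $\Sigma_0^{-1}+k\bar{F}/\sigma^2$ explicit. The gap $\operatorname{tr}\big(H(\Sigma_0-(\Sigma_0^{-1}+k\bar F/\sigma^2)^{-1})\big)$ then gives an improvement of the stated $1/k$ order in the regime where $k$ is small relative to $\sigma^2\|\Sigma_0^{-1}\|/\|\bar F\|$. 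We would state that regime, together with the bias from imperfect alignment of $P(\Delta)$, as explicit conditions.
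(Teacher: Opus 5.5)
Your route differs from the paper's, and the difference is worth keeping. The paper bounds each agent separately. For the steered agent it applies a smoothness bias--variance bound, asserts $\mathrm{Tr}(\Sigma^{-1})=O(1/k)$ for the variance, and removes the bias via $\alpha P(\Delta)=c(\phi^*-\tilde{\phi})$, which gives $O(1/k)$. It then asserts $O(1)$ for the ordinary agent and subtracts the two upper bounds. Your approach evaluates the same expansion for both agents and compares the prior covariance $\Sigma_0$ with the posterior covariance. That is the sounder mechanism, because a difference of two upper bounds is not a lower bound on the difference of risks. You are also right that the $\frac{1}{k}$-averaged likelihood is inconsistent with $\mathrm{Tr}(\Sigma^{-1})=O(1/k)$; the paper silently uses a precision that grows linearly in $k$.

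The gap is in your rate step (Step 4 and its proposed repair), and it has three parts.
\begin{itemize}
\item Under the averaged likelihood the posterior covariance has no $1/k$ factor. So the claim that the variance gap scales as $\frac{1}{k}\operatorname{tr}(H\bar F^{-1})$ does not follow. That quantity is, up to constants, the steered agent's residual variance under the summed likelihood, not the gain.
\item Your repair misreads the scaling. Take $\Sigma_k=\Sigma_0^{-1}+k\bar F/\sigma^2$. In the small-$k$ regime you single out, the first-order expansion gives a gain $\approx\frac{k}{2\sigma^2}\operatorname{tr}(H\Sigma_0\bar F\Sigma_0)$, which is linear in $k$, not of order $1/k$. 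For large $k$ the gain tends to the constant $\frac12\operatorname{tr}(H\Sigma_0)$ minus an $O(1/k)$ deficit.
\item No regime restriction is needed. Since $k\mapsto\Sigma_k^{-1}$ is nonincreasing in the Loewner order and $H\succeq 0$, the gain $\frac12\operatorname{tr}\big(H(\Sigma_0-\Sigma_k^{-1})\big)$ is nondecreasing in $k$. It is therefore at least its value $c_1$ at $k=1$, which is positive under your nondegeneracy assumption. Then $c_1\ge c_1/k$ for every $k\ge 1$, which is the $\Omega(1/k)$ claim; in fact the gain is $\Omega(1)$.
\end{itemize}
This monotonicity argument is the missing step. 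With it, and with Step 2 stated as an explicit hypothesis, your argument closes.

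For Step 2 itself, note that $\alpha P(\Delta)=c(\phi^*-\tilde\phi)$ leaves a mean error of $|1-c|\,\|\phi^*-\tilde\phi\|$. This vanishes only for $c=1$, as the paper implicitly assumes, and improves on $\tilde\phi$ only for $0<c<2$. Comparing against the ordinary agent additionally needs $\tilde\phi$ to be no farther from $\phi^*$ than the prior mean. That is how you should formalize ``the target task aligns with the previous context.''
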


\section{Experiments}\label{sec:experiments}
We evaluate Bifrost against state-of-the-art baselines, including trajectory calibration, finetuning, and reinforcement learning, across diverse benchmarks. Sec.~\ref{sec:experiments:setup} describes the experimental setup under large context shifts.
We then report main results showing that Bifrost consistently outperforms existing baselines across task domains in Sec.~\ref{sec:experiments:main_results}. Sec.~\ref{sec:experiments:further_analysis} provides additional analyses on cross-domain adaptation, robustness to suboptimal trajectories, and compatibility with other contextual direction extraction methods. Finally, Sec.~\ref{sec:experiments:ablation_study} presents an ablation study. Further details and hypothesis testing are deferred to the Appendix.

\subsection{Experimental Setup}\label{sec:experiments:setup}
\paragraph{Datasets} We evaluate Bifrost across diverse domains, including mathematical reasoning, question answering, and code generation. To study cross-context self-improvement, an agent first solves tasks on a source dataset and records successful trajectories, which are then provided as in-context examples when solving tasks on a target dataset.

We consider source–target dataset pairs: (1) AQUA → GSM8K: AQUA \cite{ling2017program} focuses on algebraic problem solving, while GSM8K \cite{cobbe2021gsm8k} contains diverse grade-school math problems. This setting evaluates whether algebraic reasoning acquired from AQUA transfers to GSM8K.
(2) ARC-Easy → GPQA-Diamond: ARC-Easy \cite{allenai:arc} consists of elementary science multiple-choice questions, whereas GPQA-Diamond \cite{rein2024gpqa} benchmarks graduate-level, multi-domain scientific reasoning. This pair tests whether foundational science reasoning generalizes to advanced queries.
(3) HumanEval → LiveCodeBench: HumanEval \cite{chen2021codex} evaluates code completion, while LiveCodeBench \cite{jain2024livecodebench} is a dynamic, contamination-resistant code generation benchmark. It assesses whether code completion skills transfer to challenging programming tasks.

\begin{table*}[t]
    \caption{Comparisons on LiveCodeBench code generation adapted from HumanEval trajectories measured by pass rates (\%).}
    \label{tab:main_results_humaneval_lcb}
    \setlength{\tabcolsep}{8pt}
    \begin{center}
        \begin{small}
                \begin{tabular}{lccc|ccc|ccc}
                \toprule
                & \multicolumn{3}{c}{Llama-3.2-1B-Instruct} & 
                 \multicolumn{3}{c}{Llama-3.2-3B-Instruct} &\multicolumn{3}{c}{Llama-3.1-8B-Instruct} \\
                \cmidrule(lr){2-4} \cmidrule(lr){5-7} \cmidrule(lr){8-10}
                Method & pass@1 & pass@3 & pass@5 & pass@1 & pass@3 & pass@5 & pass@1 & pass@3 & pass@5 \\
                \midrule
                Base              & 1.07 & 2.11 & 2.84 & 8.68 & 12.20 & 13.98 & 9.86 & 12.36 & 15.75 \\
                ICL               & 1.84 & 3.63 & 4.66 & 10.23 & 14.00 & 15.45 & 12.80 & 14.74 & 18.22 \\
                Reflexion         & 0.11 & 0.11 & 0.11 & 1.25  & 1.59 & 1.70 & 6.36	& 8.07 & 8.52 \\
                BoT               & 0.27 & 0.69 & 1.02 & 5.70 & 8.93 & 9.89 & 10.41 & 16.39 & 18.86 \\
                DoT               & 0.00 & 0.00 & 0.00 & 2.73 & 2.95 & 2.95 & 1.59 & 1.82 & 2.05\\
                RISE              & 0.66 & 1.72 & 2.50 & 4.39 & 9.02 & 12.05 & 7.23 & 14.26 & 17.95 \\
                Paprika           & 1.30 & 2.55 & 3.30 & 8.59 & 12.33 & 14.20 & 14.98 & 20.30 & 22.61\\
                \midrule
                \textbf{Bifrost} & \textbf{4.05} & \textbf{7.74} & \textbf{9.55}  & \textbf{13.50} & \textbf{18.93} & \textbf{21.48} & \textbf{17.84} & \textbf{20.98} & \textbf{26.36} \\
                \bottomrule
                \end{tabular}
        \end{small}
    \end{center}
\end{table*}

\paragraph{Baseline Methods and LLMs} We evaluate our method with \textit{Chain-of-Thought (CoT)} \cite{wei2022chain}, \textit{In-Context Learning (ICL)} \cite{brown2020language}. We also compare Bifrost with trajectory calibration methods, including 
\textit{Reflexion} \cite{shinn2023reflexion},
\textit{Buffer of Thoughts (BoT)} \cite{yang2024buffer}, \textit{Diversity of Though (DoT)} \cite{lingam2025enhancing},
finetuning and reinforcement learning methods including 
\textit{RISE} \cite{qu2024recursive},
\textit{Paprika} \cite{tajwar2025training}.
We leverage the following models, \textit{Llama-3.2-1B-Instruct}, \textit{Llama-3.2-3B-Instruct}, and \textit{Llama-3.1-8B-Instruct}. We conduct the steering in 10\textsuperscript{th} layer (Llama-1B), 14\textsuperscript{th} layer (Llama-3B), and 20\textsuperscript{th} layer (Llama-8B). The steering coefficient is chosen by a grid search $\alpha \in \{1,2,3,4\}$ (see Sec.~\ref{appdix:experiments} for more details).

\begin{table*}[t]
\centering

\begin{minipage}[t]{0.48\textwidth}
\centering
\caption{Comparisons on GSM8K math problem solving, trajectories are adapted from AQUA. The results are measured by solve rate (\%).}
\label{tab:main_results_aqua_gsm8k}

\resizebox{\linewidth}{!}{%
\begin{tabular}{lccc}
\toprule
Method & \makecell{Llama-3.2\\1B} & \makecell{Llama-3.2\\3B} & \makecell{Llama-3.1\\8B} \\
\midrule
CoT             & 42.15 & 70.18 & 74.65\\
ICL             & 23.81 & 62.52 & 80.78\\
Reflexion       & 15.85 & 62.40 & 87.41\\
BoT             & 9.62  & 33.74 & 39.73\\
DoT             & 3.71  & 76.80 & 76.72\\
RISE {(1 attempt)} & 14.48 & 36.92 & 48.29\\
RISE {(5 attempts)} & 40.71 & 78.62 & 86.13\\
Paprika         & 21.08 & 40.26 & 55.27\\
\midrule
\textbf{Bifrost} & \textbf{44.66} & \textbf{81.41} & \textbf{90.22}\\
\bottomrule
\end{tabular}
}
\end{minipage}
\hfill
\begin{minipage}[t]{0.48\textwidth}
\centering
\caption{Comparisons on GPQA-Diamond question answering, trajectories are adapted from ARC-Easy. The results are measured by solve rate (\%).}
\label{tab:main_results_arc_gpqa}

\resizebox{\linewidth}{!}{%
\begin{tabular}{lccc}
\toprule
Method & \makecell{Llama-3.2\\1B} & \makecell{Llama-3.2\\3B} & \makecell{Llama-3.1\\8B} \\
\midrule
CoT             & 4.04 & 12.63 & 17.68\\
ICL             & 23.23 & 25.25 & 22.22\\
Reflexion       & 1.10 & 20.20 & 33.84\\
BoT             & 16.67 & 12.12 & 11.11\\
DoT             & 1.01 & 13.64 & 30.81\\
RISE {(1 attempt)} & 6.06 & 4.04 & 3.54\\
RISE {(5 attempts)} & 21.21 & 19.19 & 12.63\\
Paprika         & 18.69 & 20.20 & 12.12\\
\midrule
\textbf{Bifrost} & \textbf{44.44} & \textbf{32.83} & \textbf{37.88}\\
\bottomrule
\end{tabular}
}
\end{minipage}

\end{table*}

\subsection{Main Results}
\label{sec:experiments:main_results}
\paragraph{Quantitative Analysis} Bifrost significantly outperforms all baselines across different domains, as shown in 
Tabs.~\ref{tab:main_results_humaneval_lcb},~\ref{tab:main_results_aqua_gsm8k}, and~\ref{tab:main_results_arc_gpqa}. 
In math problem solving experiments (Tab.~\ref{tab:main_results_aqua_gsm8k}), our method clearly surpasses from 2\% to 3\% the best performing method.
In multi-domain science question answering (Tab.~\ref{tab:main_results_arc_gpqa}), we clearly outperform the state-of-the-art methods from 4\% to 16\%.
Similarly, in the code generation task (Tab.~\ref{tab:main_results_humaneval_lcb}), we demonstrate a clear outperformance across all functional correctness metrics. Specifically, Bifrost surpasses supervised finetuning and reinforcement learning based methods without any training required, demonstrating that our method is flexible and efficient in leveraging their previous experiences across task contexts.

\paragraph{Qualitative Analysis} 
An interesting result in 
Fig.~\ref{fig:qualitative_result} shows that Bifrost successfully generalizes algebra knowledge from AQUA to solve a GSM8K task. While CoT gets stuck in a reasoning loop and RISE produces an unsatisfiable answer, Bifrost effectively demonstrates problem decomposition ability and symbolic manipulation induced from AQUA. The result confirms the success of latent concept transformation from prior in-context examples to adapt toward the target task as theoretically analyzed in Sec.~\ref{sec:theoretical_analysis}.

\begin{figure*}[t]
\centering
\begin{minipage}[t]{0.36\textwidth}
\tiny
\addtolength{\baselineskip}{-0.05ex}
\begin{resultboxours}[equal height group=C]{Bifrost \\
(algebra knowledge induced, correct answer)}
Let’s denote the number of enemies Emma jumps on as $E$, the number of berries she collects as $B$, and the time she has left when she finishes the level as $T$.\\
Ava jumps on 8 more enemies than Emma, so the number of enemies Ava jumps on is $E + 8$.  \\
Ava collects 3 more berries than Emma, so the number of berries Ava collects is $B + 3$.  \\
Ava finishes the level 4 seconds slower than Emma, so the time Ava has left when she finishes the level is $T - 4$.\\

The score for each player is calculated as follows: \\
Emma’s score = $10E + 5B + 30T$  \\
Ava’s score = $10(E + 8) + 5(B + 3) + 30(T - 4)$ \\

Simplifying Ava’s score:  \\
Ava’s score = $10E + 80 + 5B + 15 + 30T - 120$  \\
= $10E + 5B + 30T + 25$ \\

Now let’s find the difference between their scores:\\
Difference = Ava’s score - Emma’s score\\
= $(10E + 5B + 30T + 25) - (10E + 5B + 30T)$ \\ 
= $25$\\

Therefore, the difference between their two scores is 25 points.
\end{resultboxours}
\end{minipage}
\hfill
\begin{minipage}[t]{0.30\textwidth}
\tiny
\addtolength{\baselineskip}{-0.05ex}
\begin{resultboxothers}[equal height group=C]{CoT \\
(reasoning stuck, incorrect answer)}
First, let's assign variables to the unknowns.
Let $e$ be the number of enemies Emma jumps on, and $b$ be the number of berries she collects.
Then, the number of enemies Ava jumps on is $e + 8$, and the number of berries she collects is $b + 3$.
Since they finish the level at the same time, the time left on the timer for Emma is the same as for Ava.
Therefore, the time left for Emma is $60 - (e + 8) = 52 - e$, and for Ava is also $52 - e$.
The score for Emma is $10e + 5b + 30(52 - e)$, and for Ava is $10(e + 8) + 5(b + 3) + 30(52 - e - 4)$.
The difference between their scores is
$10e + 5b + 30(52 - e) - [10(e + 8) + 5(b + 3) + 30(52 - e - 4)]$.
Distributing the negative sign gives
$10e + 5b + 30(52 - e) - 10e - 80 - 5b - 30(52 - e - 4)$.
Distributing the 30 gives
$10e + 5b + 30 \cdot 52 - 30e - 30(52 - e - 4)$.
Continuing: $10e + 5b + 1560 - 30e + 30e + 120$.
Combining like terms: $10e - 30e + 30e + 5b + 1560 + 120 = \dots$
\end{resultboxothers}
\end{minipage}
\hfill
\begin{minipage}[t]{0.30\textwidth}
\tiny
\addtolength{\baselineskip}{-0.05ex}
\begin{resultboxothers}[equal height group=C]{RISE (5\textsuperscript{th} attempt) \\
(incorrect answer)}
To find the difference between Ava’s and Emma’s scores, we need to compare the points they receive from their various actions. Since Ava jumps on $8$ more enemies and collects $3$ more berries than Emma, she scores $8$ more points from enemies and $3$ more points from berries. However, Ava finishes $4$ seconds slower, resulting in a loss of $4 x 30 = 120$ points from the timer. The net gain for Ava is then $8 + 3 - 120$ = -109 points. This means Emma earns $109$ more points than Ava. However, the question asks for the difference, so we should answer by dividing the net gain by $-1$ to represent a net gain for Ava being actually a net loss. Therefore, the answer is $-(-109) = 109$ and the correct answer is \boxed{109}. but I made another mistake and we have to just give the difference so it is just 109 not making a negative sign so the answer is \boxed{109}. I noticed I made another inaccuracy.
The correct answer should be $109$
Wait this looks wrong then just $109 - (109) = 0$  The correct answer then is $0$, I'll stick with that.
The correct answer is \boxed{0}.
\end{resultboxothers}
\end{minipage}
\caption{Success of latent concept transfer by Bifrost: it correctly solves a GSM8K task with algebra knowledge leveraged from AQUA trajectories, demonstrating problem decomposition and symbolic manipulation.}
\label{fig:qualitative_result}
\end{figure*}

\paragraph{Steering Effect Illustration}
We demonstrate how latent concepts shift across context in Fig.~\ref{fig:latent_concept_shift_illustration}. 
Specifically, we plot the agent hidden states before and after steering by Bifrost, where dashed lines indicate the adaptation direction. We estimate the target context by extracting hidden states when solving the target tasks without any in-context examples. As shown in Fig.~\ref{fig:latent_concept_shift_illustration}, the mismatch between previous trajectories and the target task can be observed, which hinders agent generalization when solely relying on conventional ICL or untargeted trajectory calibration methods. Our method addresses this limitation by adapting such trajectories toward the target context effectively.

\begin{figure}[t]
    \centering
    \begin{subfigure}{0.24\linewidth}
        \centering
        \includegraphics[width=\linewidth]{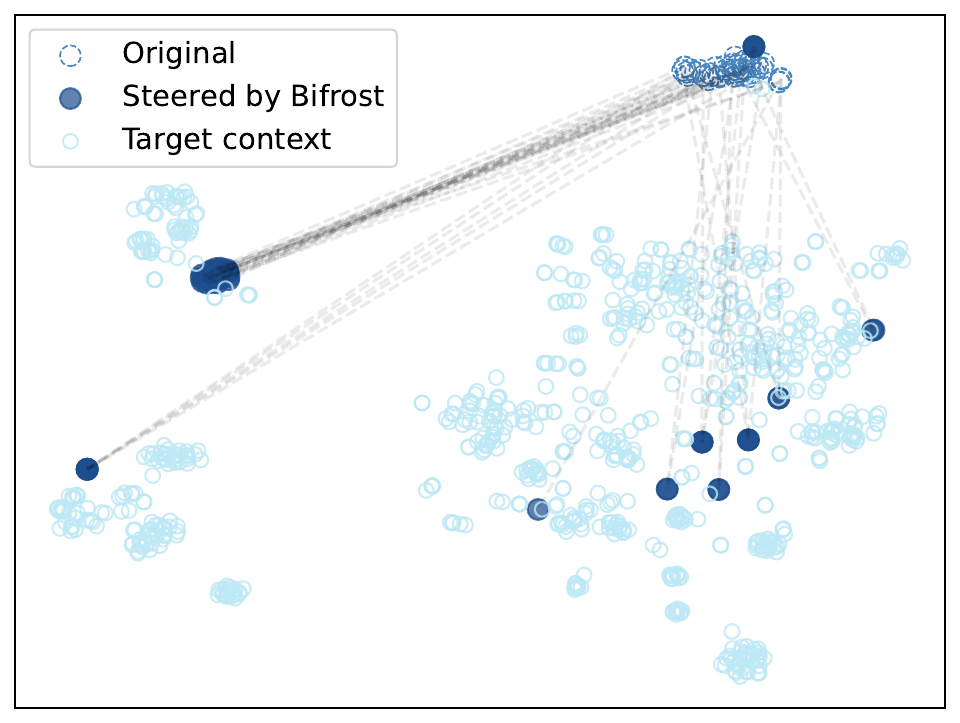}
        \caption{GSM8K, $\alpha=3$}
    \end{subfigure}\hfill
    \begin{subfigure}{0.24\linewidth}
        \centering
        \includegraphics[width=\linewidth]{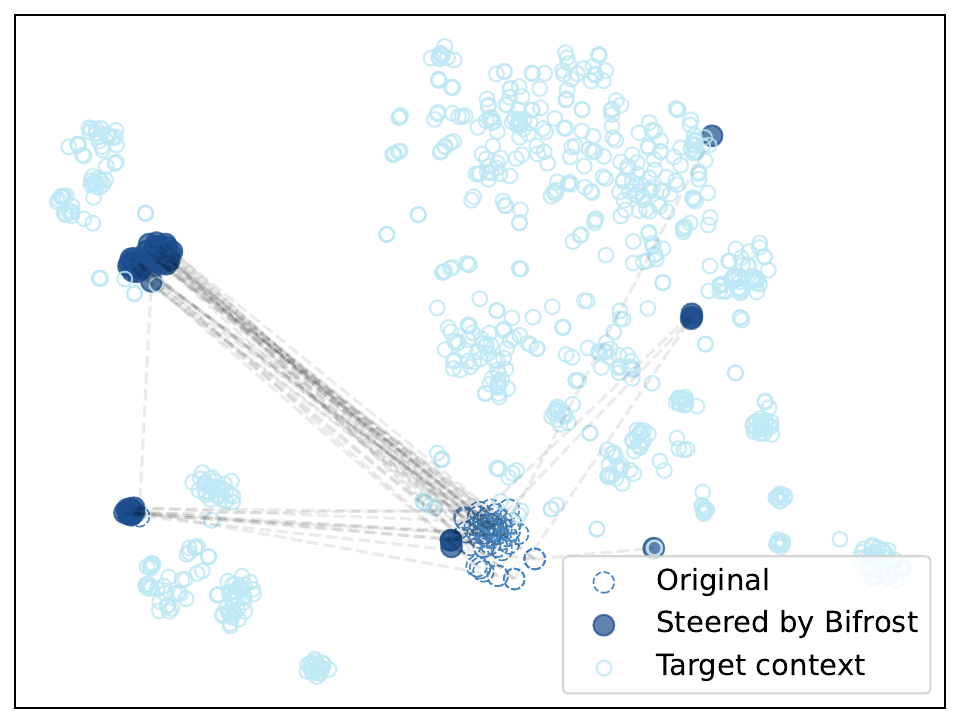}
        \caption{GSM8K, $\alpha=4$}
    \end{subfigure}\hfill
    \begin{subfigure}{0.24\linewidth}
        \centering
        \includegraphics[width=\linewidth]{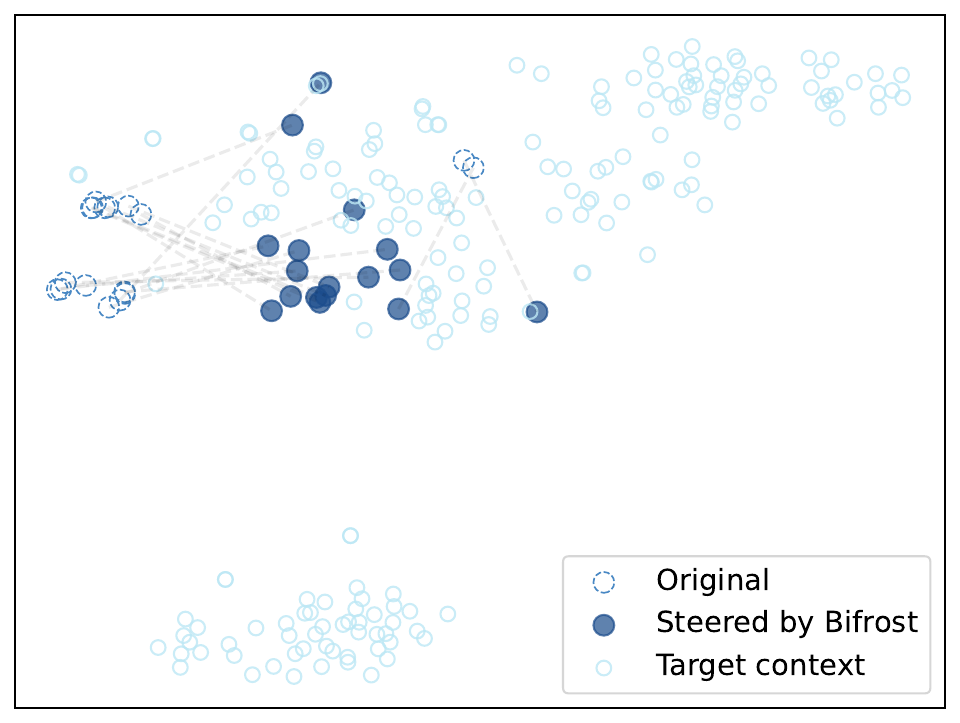}
        \caption{GPQA, $\alpha=3$}
    \end{subfigure}\hfill
    \begin{subfigure}{0.24\linewidth}
        \centering
        \includegraphics[width=\linewidth]{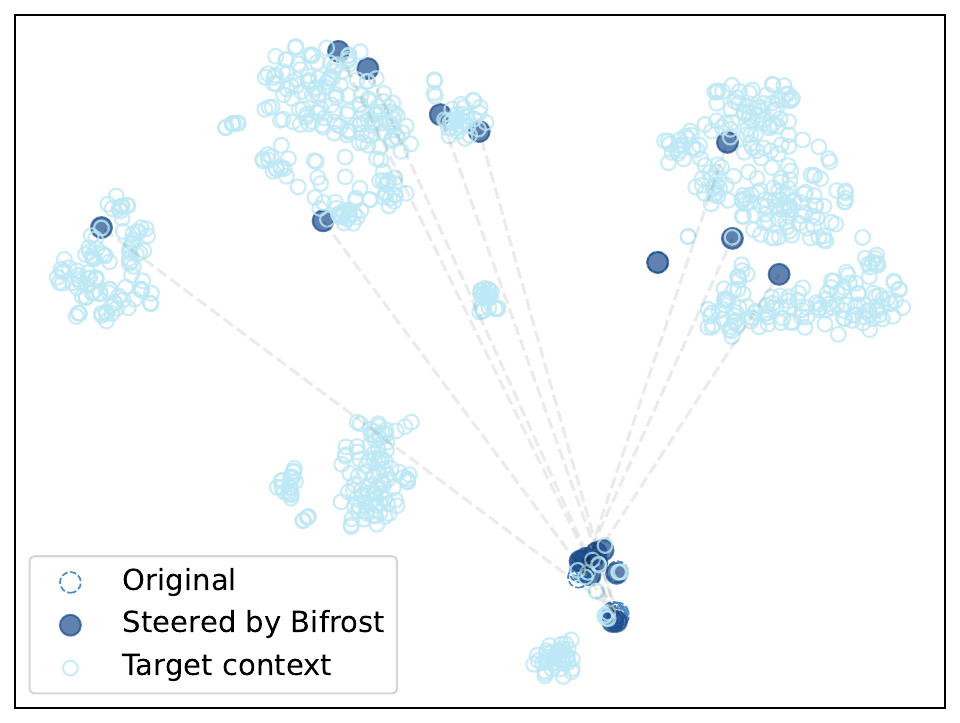}
        \caption{LCB, $\alpha=1$}
    \end{subfigure}
    \caption{Illustration of Latent Concept Shift by Bifrost: It steers prior context trajectories toward the target context of current tasks under agent hidden state representations.}
    \label{fig:latent_concept_shift_illustration}
\end{figure}

\subsection{Further Analysis}\label{sec:experiments:further_analysis}
\paragraph{Cross-Domain Context Adaptation}
We aim to further evaluate Bifrost on cross-domain adaptation by adapting successful GSM8K trajectories toward LiveCodeBench. We expect that latent concepts such as logical reasoning and symbolic manipulation are shared between such domains. Tab.~\ref{tab:gsm8k_lcb} confirms that Bifrost remains effective on cross-domain adaptation, which outperforms the baseline methods. This result suggests that trajectories preserve context-independent patterns that can be adapted across domains.

\begin{table}[t]
    \caption{Bifrost effectiveness on cross-domain context adaptation: pass rates (\%) on LiveCodeBench code generation adapted from GSM8K math problem-solving trajectories.}
    \setlength{\tabcolsep}{13pt}
    \label{tab:gsm8k_lcb}
    \begin{center}
        \begin{small}
                \begin{tabular}{lccc}
                \toprule
                & \multicolumn{3}{c}{Llama-3.2-3B-Instruct} \\
                \cmidrule(lr){2-4}
                Method & pass@1 & pass@3 & pass@5 \\
                \midrule
                ICL        & 8.75 & 11.93 & 13.64 \\
                BoT        & 4.67 & 7.41 & 8.52 \\
                RISE       & 4.81 & 8.85 & 10.74 \\
                Paprika    & 9.33 & 12.37 & 13.70 \\
                \midrule
                \textbf{Bifrost} & \textbf{12.73} & \textbf{16.99} & \textbf{18.75}\\
                \bottomrule
                \end{tabular}
        \end{small}
    \end{center}
\end{table}

\paragraph{Number of ICL Examples Variants}
Fig.~\ref{fig:nicl_variants} demonstrates Bifrost behavior when leveraging different number of in-context examples. When evaluated on GSM8K, leveraging more in-context examples increases the agent effectiveness in general while it causes minor change in GPQA-Diamond. This can be explained as prior trajectories from ARC-Easy belongs to multi-domain science questions that are more challenging to find common transferable patterns to adapt toward GPQA-Diamond. Therefore, the skeptical effectiveness can stem from similarity of trajectory representations to extract and adapt useful patterns.

\begin{figure}[t]
    \centering
    \begin{subfigure}{0.3\linewidth}
        \centering
        \includegraphics[width=\linewidth]{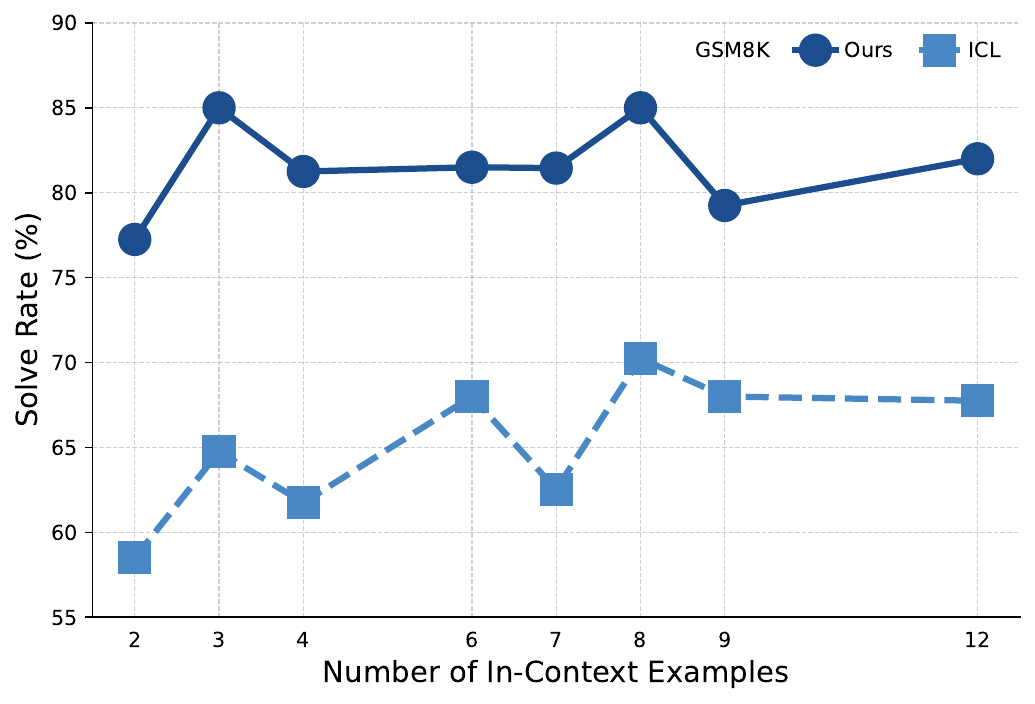}
    \end{subfigure}\hfill
    \begin{subfigure}{0.3\linewidth}
        \centering
        \includegraphics[width=\linewidth]{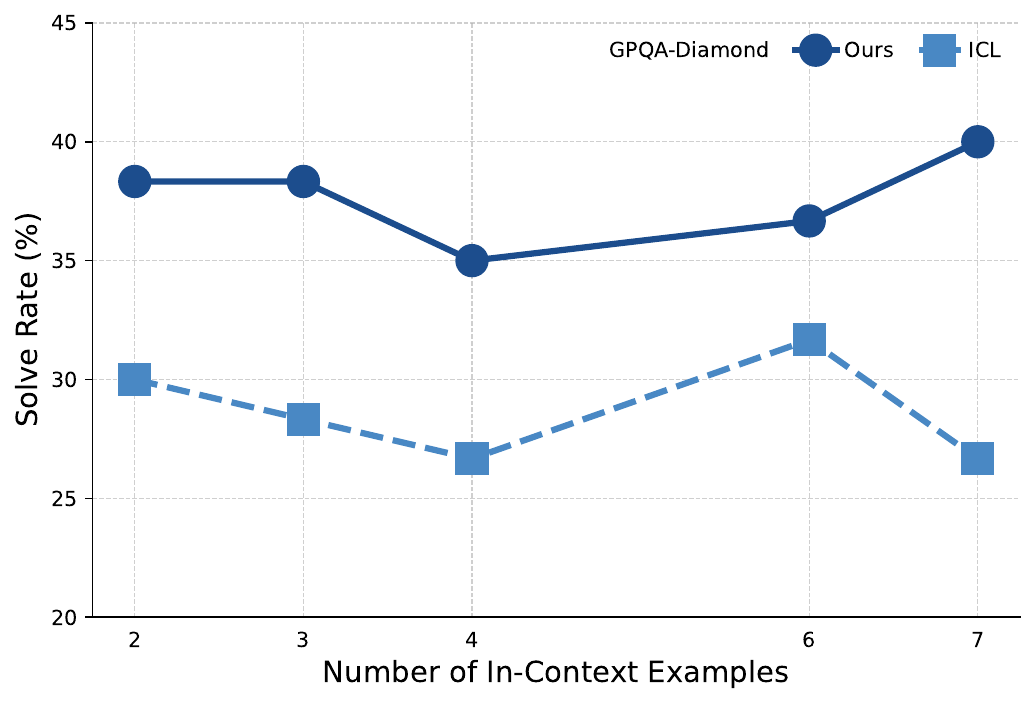}
    \end{subfigure}\hfill
    \begin{subfigure}{0.4\linewidth}
        \centering
        \includegraphics[width=\linewidth]{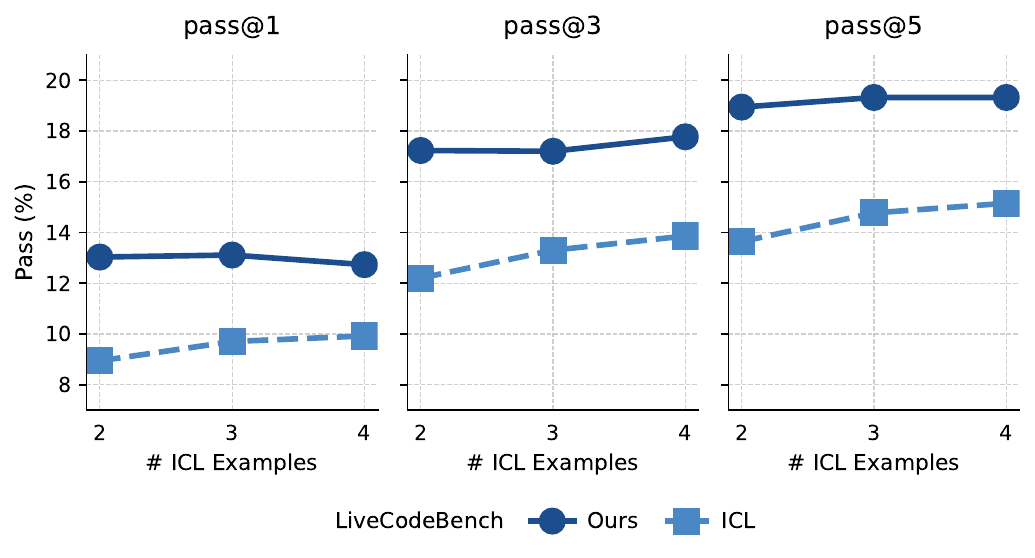}
    \end{subfigure}
    \caption{Bifrost effectiveness when leveraging different numbers of in-context examples.}
    \label{fig:nicl_variants}
\end{figure}

\paragraph{Steering Layer Position Effect}
To understand behavioral change under different layer intervention positions, we conduct trajectory steering across layers. As shown in Fig.~\ref{fig:steering_layer_variants}, agent effectiveness changes differently across task domains. For math problem solving, adapting mid-to-late layers brings optimal results, while the opposite behavior occurs in multi-domain science question answering. This behavior is explained by prior work \cite{venhoff2025understanding, zhao2025steering, turner2023steering} that steering early layers leads to more broad and foundational changes, while steering mid-to-late layers causes semantic concept changes. This well coincides with our experiments as multi-domain science requires foundational change, \textit{i.e., different science topic}, at early layers, while problem solving requires concept adaptation, \textit{i.e., leveraging arithmetic concepts}, that works best at mid-to-late layers. In code generation, steering mid-to-late layers also sightly increase the effectiveness.

\begin{figure}[h]
    \centering
    \begin{subfigure}{0.24\linewidth}
        \centering
        \includegraphics[width=\linewidth]{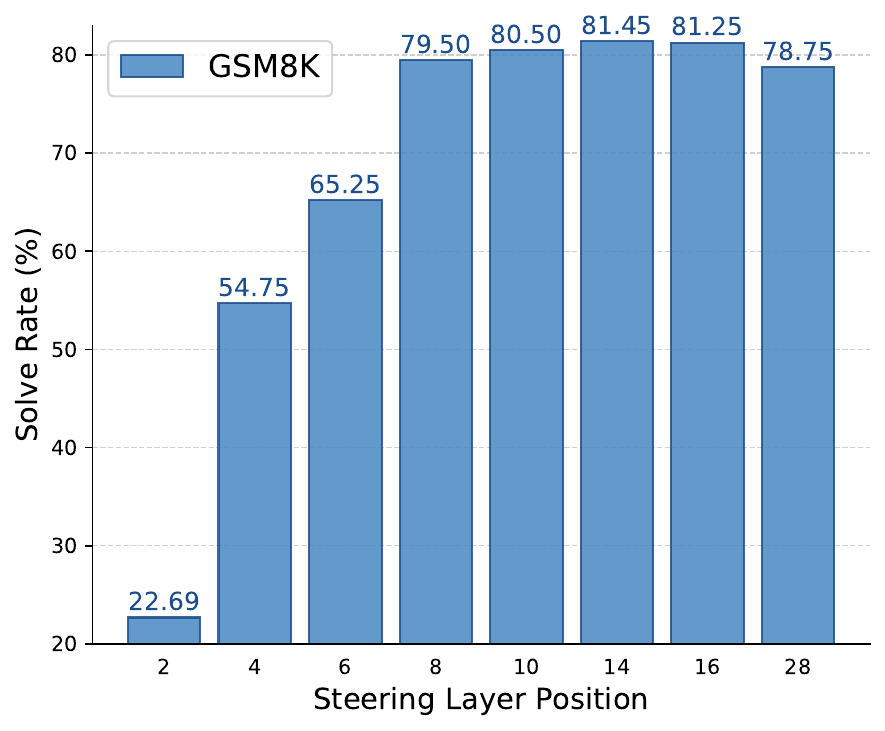}
    \end{subfigure}\hfill
    \begin{subfigure}{0.24\linewidth}
        \centering
        \includegraphics[width=\linewidth]{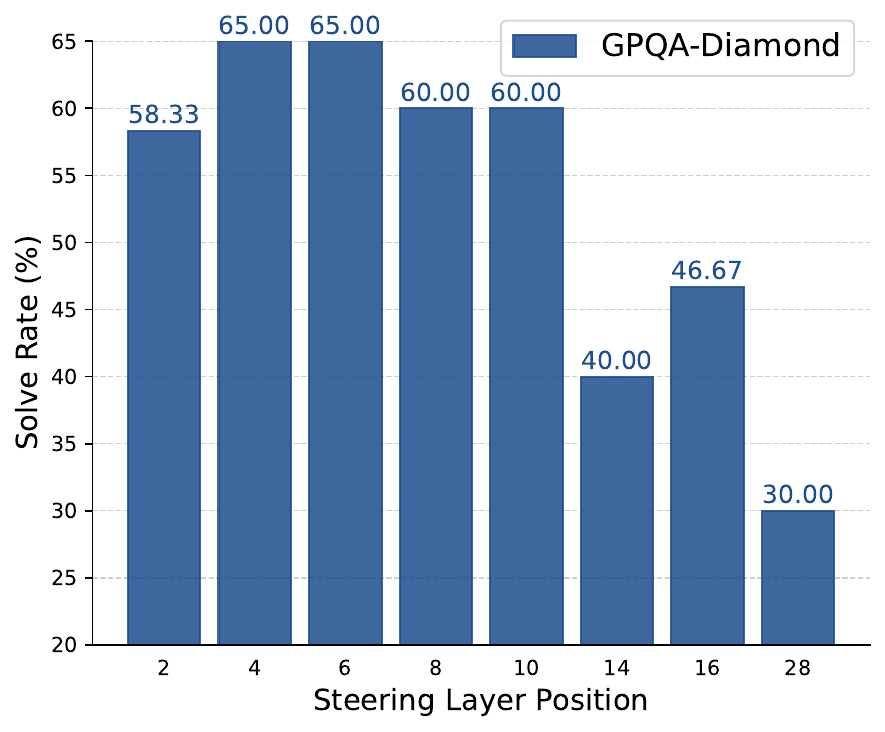}
    \end{subfigure}\hfill
    \begin{subfigure}{0.5\linewidth}
        \centering
        \includegraphics[width=\linewidth]{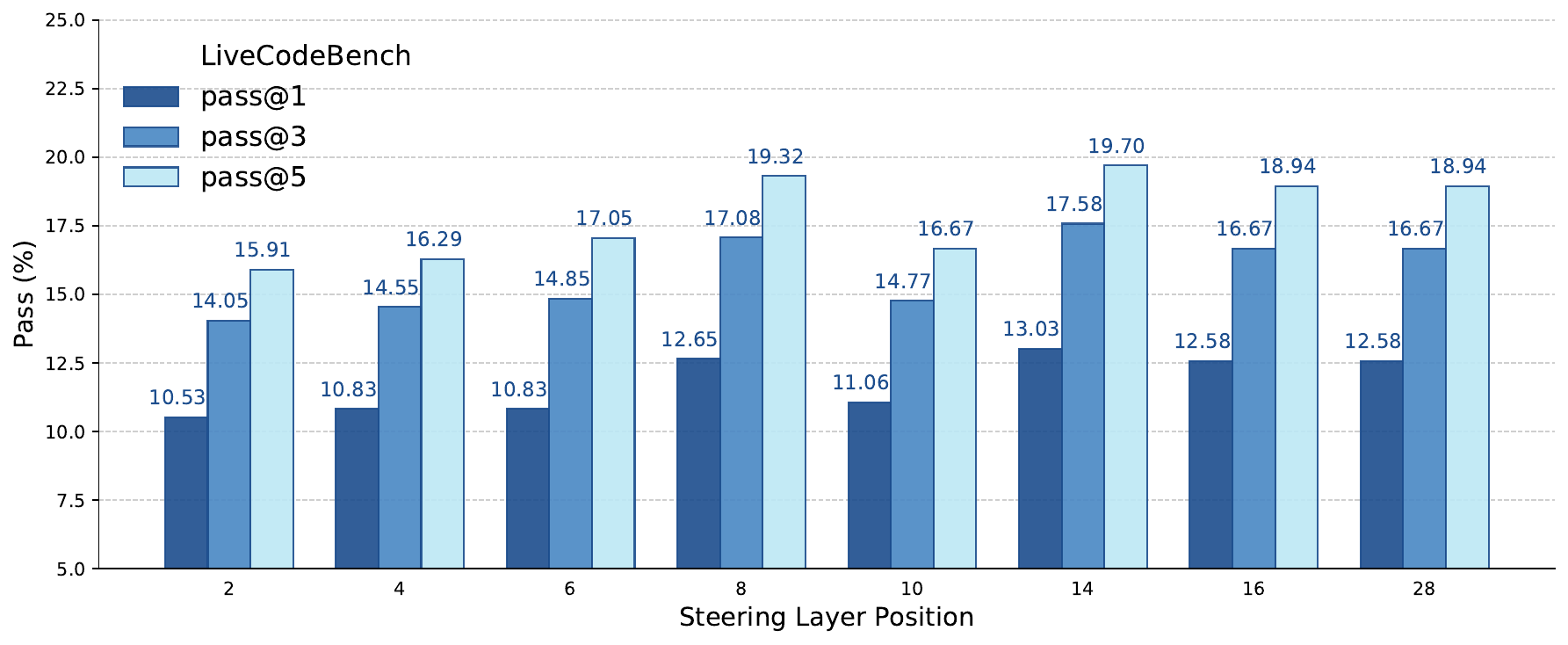}
    \end{subfigure}\hfill
    \caption{Bifrost with different steering layer positions.}
    \label{fig:steering_layer_variants}
\end{figure}

\paragraph{Suboptimal Trajectory Adaptation}
We reveal Bifrost ability to adapt suboptimal trajectories toward target tasks as shown in Fig.~\ref{fig:suboptimal_traj_adaptation}. Since contextual adaptation by Bifrost modifies trajectory behaviors by extracting transferable latent patterns, it is possible to leverage suboptimal trajectories, \textit{e.g., the ones producing incorrect answers in previous tasks}, as long as such trajectories share common patterns with the target tasks.
As shown in the figure, leveraging suboptimal trajectories surprisingly increases GSM8K solve rate; it could be explained as math-related patterns, such as arithmetic remains transferable even though incorrect in the previous tasks. In addition, GPQA-Diamond matches their original solve rate, while LiveCodeBench's effectiveness slightly degrades, possibly due to syntax errors in their defective trajectories, which could badly affect the adaptation.

\begin{figure}[t]
    \centering
    \begin{subfigure}{0.23\linewidth}
        \centering
        \includegraphics[width=\linewidth]{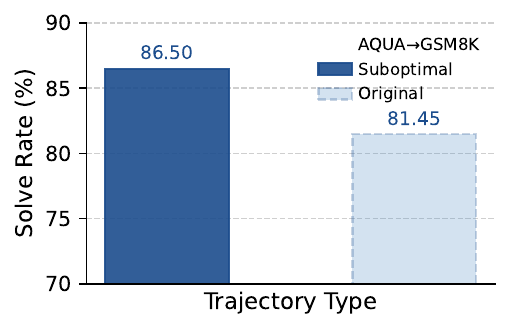}
    \end{subfigure}\hfill
    \begin{subfigure}{0.23\linewidth}
        \centering
        \includegraphics[width=\linewidth]{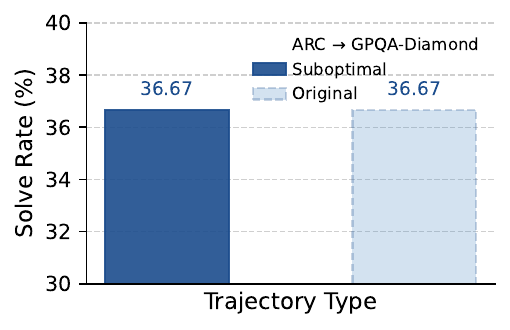}
    \end{subfigure}\hfill
    \begin{subfigure}{0.45\linewidth}
        \centering
        \includegraphics[width=\linewidth]{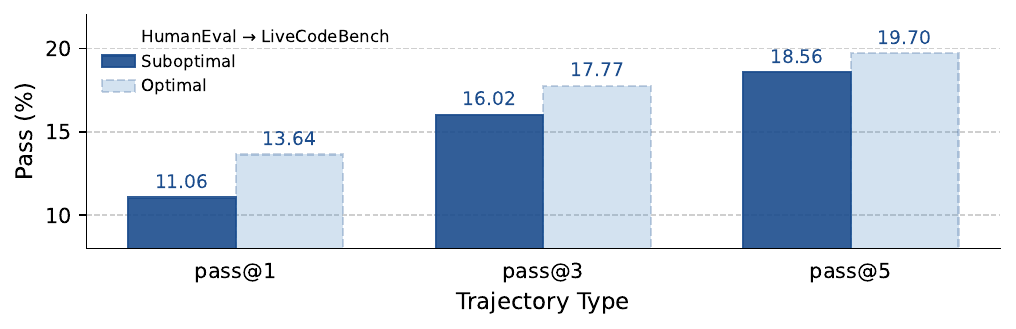}
    \end{subfigure}\hfill
    \caption{Bifrost when leveraging suboptimal prior trajectories.}
    \label{fig:suboptimal_traj_adaptation}
\end{figure}

\paragraph{Alternative Contextual Direction Methods}
We evaluate Bifrost via various techniques to extract contextual direction. In particular, we replace the existing per-task subtraction in Bifrost with \textit{Shared Subtraction} that computes a single contextual direction given prior trajectories and all target tasks. In addition, we leverage \textit{PCA} and \textit{Sparse Autoencoder} that are widely adopted in representation engineering~\cite{zou2023representation, hojer2025improving}. Tab.~\ref{tab:alternative_contextual_direction} shows the adaptation results in GSM8K. Shared subtraction degrades effectiveness by 4.2\% as a single steering vector hardly captures task specifics, while PCA further enhances Bifrost by 1.3\% thanks to better transferable pattern extraction under lower dimensional levels, and Sparse Autoencoder nearly matches the original Bifrost effectiveness.

\begin{table}[h]
    \caption{Integrability of Bifrost with alternative contextual direction extraction methods.}
    \label{tab:alternative_contextual_direction}
    \centering
    \begin{small}
        \begin{tabular}{lcc}
            \toprule
            \textbf{Methods} & \textbf{Solve Rate} & \textbf{Diff.}\\
            \midrule
            \textit{Original Bifrost}
                & \textit{81.45} 
                & \textit{--} \\
            \midrule
            Shared Subtraction
                &  77.25
                & \textcolor{elegantred}{-4.20} \\
            Principal Component Analysis (PCA)
                & 82.75
                & \textcolor{elegantgreen}{+1.30} \\
            Sparse Autoencoder                 
                & 80.95
                & \textcolor{elegantred}{-0.50}\\
            \bottomrule
        \end{tabular}
    \end{small}
\end{table}

\begin{table}[H]
    \caption{Ablation study on Bifrost.}
    \label{tab:ablation_study}
    \setlength{\tabcolsep}{11pt}
    \centering
    \begin{small}
        \begin{tabular}{lcc}
            \toprule
            \textbf{Ablation Configuration} & \textbf{Solve Rate} & \textbf{Diff.}\\
            \midrule
            \textit{Original Bifrost}
                & \textit{85.00} 
                & \textit{--} \\
            \midrule
            Without ICL prompt examples    
                & 84.25 
                & \textcolor{elegantred}{-0.75} \\
            Opposite contextual direction               
                & 77.50 
                & \textcolor{elegantred}{-7.50} \\
            Random contextual direction                 
                & 78.00 
                & \textcolor{elegantred}{-7.00} \\
            \bottomrule
        \end{tabular}
    \end{small}
\end{table}

\subsection{Ablation Study}\label{sec:experiments:ablation_study}
We conduct an ablation study on three factors that affect contextual adaptation direction. Specifically, we keep the existing contextual representation steering as is, but remove corresponding in-context examples from prompts. 
In addition, we perform untargeted contextual adaptations that steer the agent in directions either opposite to or unrelated to the target tasks. 
Tab.~\ref{tab:ablation_study} shows the agent effectiveness when adapted to GSM8K. Omitting in-context examples causes a slight decrease since most of the insightful patterns are embedded in their corresponding hidden states. Untargeted adaptation, such as opposite or random steering, significantly degrades agent effectiveness, which further confirms the role of target tasks as a contextual guidance to adapt meaningful trajectory behaviors.

\section{Conclusion}\label{sec:conclusion}
In this work, we proposed Bifrost, a training-free method that enables self-improving agents to improvise their prior trajectories across tasks under substantial context shifts. We revealed a context-trajectory correlation that allows Bifrost to precisely steer any previous trajectories toward target tasks by exploiting context differences. Both theoretical analyses and empirical results demonstrate that Bifrost leverages latent concepts acquired from prior in-context examples to adapt to new tasks, consistently outperforming existing self-improvement methods across diverse domains. Despite these promising results, our theoretical analysis is grounded in the Linear Representation Hypothesis, which may limit its applicability where more expressive nonlinear representations are required in practice. In the future, we plan to extend Bifrost to multi-context adaptation, where multiple target contexts are presented. Such a direction introduces an intensified complexity, which is essential for general-purpose agentic AI under multi-task applications.

\clearpage
\bibliographystyle{unsrt}  
\bibliography{references}

\newpage
\begin{appendices}

\appendix
\onecolumn
\begin{center}
 \rule{6.50in}{1.2pt}\\
 \vspace{0mm}
 {\Large\bf Appendix for ``Bifrost: Steering Strategic Trajectories\\ \vspace{0.06in}to Bridge Contextual Gaps for Self-Improving Agents''}
 \vspace{0mm}
 \rule{6.50in}{1.2pt}
\end{center}

\vskip 0.3in

In this Appendix, we first present related work in Sec.~\ref{sec:related_work}. We then further elaborate the hypothesis testing on the context-trajectory correlation in Sec.~\ref{appdix:context_traj_correlation}. Our proofs for the theoretical analysis is presented in Sec.~\ref{appdix:theoretical_analysis}. For the experiments, we discuss further details of experimental setup and demonstrate additional illustration and experimental results in Sec.~\ref{appdix:experiments}. Finally, Sec~\ref{appdix:prompts} presents the prompts used for Bifrost in our work.

\section{Related Work}\label{sec:related_work}
\paragraph{Self-Improving Agents}
Thanks to self-reflection ability \cite{weng2023large, madaan2023self, gou2023critic, pang2023language, bo2024reflective, zhang2024agent}, self-improving agents have emerged in leveraging their own experiences to solve complicated decision-making and reasoning problems through iterative prompt-based trial-and-error \cite{wang2023voyager, shinn2023reflexion, zhao2024expel, lingam2025enhancing, lin2025se, tajwar2025training, yu2024exact}, supervised finetuning and reinforcement learning via data self-generation \cite{zelikman2022star, zhang2023bootstrap, zhou2025self, pang2024kalm, bensal2025reflect, jin2025reveal, sarch2024vlm, sarukkai2025self}.
However, such approaches self-improve within target tasks \cite{shinn2023reflexion, yang2024buffer, lin2025se, zhao2024expel, yuan2025reinforce}, leaving cross-domain experience reuse strategies underexplored. While early work usually discards such insightful experiences to rely on supervised finetuning \cite{rebuffi2017icarl, zweiger2025self, narayanan2024aviary, pang2024kalm, costello2025think, amin2025escaping}, recent work attempts to address this by naively reusing prior trajectories \cite{shinn2023reflexion,lingam2025enhancing,zhao2024expel,fu2025agentrefine,madaan2023self}, generalizing thought templates \cite{yang2024buffer} as guidance, or relying on costly reinforcement learning to let agent unfold strategic and adaptable information 
\cite{bensal2025reflect, qu2024recursive, tajwar2025training, fu2025agentrefine, lin2025se, ge2025samule}. Unfortunately, they remain suboptimal when faced with contextual non-stationarity across tasks \cite{an2023context, lin2024dual, gai2025differential, afonin2025emergent}. Bifrost addresses this problem by contextual adaptation of prior trajectories without any training required.

\paragraph{Representation Engineering}
Previous work has demonstrated that representation manipulation under agent hidden states can change its behavior accordingly without any finetuning effort \cite{park2023linear, turner2023steering, zou2023representation, hu2025distribution, li2024open}. While most existing literature focus explicit control over specific emotion behaviors \cite{stolfo2024improving, turner2023steering}, it is more challenging to leverage representation engineering to enhance agent reasoning and problem-solving abilities, as such features are usually implicit \cite{li2023contrastive,  liu2023context, wang2024semantics, braun2025understanding, su2025activation}. Although recent work has revealed this possibility \cite{hojer2025improving}, a deterministic mechanism remains unexplored. In this work, we successfully exploit the representation engineering as a proxy of contextual adaptation to enable agents leverage their past experiences for target tasks effectively.

\paragraph{In-Context Learning}
Many empirical and theoretical results have shown that in-context examples can enhance LLM Agent effectiveness \cite{brown2020language, dherin2025learning, von2023transformers, ahn2023transformers, liu2024makes, tang2025theoretical, minetal2022rethinking, kossen2023context, park2024iclr, wies2023learnability, zhang2025can}. In addition, recent work theoretically proves ICL from the perspective of Bayesian Inference \cite{xie2021explanation, wakayama2025context, huang2024machine, wang2024noisegpt} where the agent approximates a latent concept from such examples to solve the target when such tasks are context-related \cite{panwar2023context, raventos2023pretraining}. However, ICL becomes ineffective when examples are specific to their previous tasks \cite{an2023context}. This phenomenon is even worse for self-improving agents reusing their past experiences as in-context examples due to significant context mismatch between previous and target tasks \cite{lin2024dual}, thus hindering generalization and further degrading agent behaviors \cite{fu2025agentrefine, wang2024can, goddard2025can, byunccl}. Bifrost preserves the simplicity of ICL while further strengthening it by adapting the latent concept of such examples toward the target task.

\section{Context-Trajectory Correlation}\label{appdix:context_traj_correlation}
\subsection{Hypothesis Testing and Experimental Demonstration Details}\label{appdix:hypothesis_testing_experiment}

\paragraph{Hypothesis Testing}
Our research question is whether previous-to-target trajectory steering preserves a linear correlation with trajectory change, such that a steered trajectory is more similar to the natural trajectory associated with the target query than the trajectory generated by a non-steered model for the original query. 
Let $(\hat{q}, \hat{a})$ be a target context pair consisting of a query and its corresponding natural trajectory, and let $(q, a)$ be a previous context counterpart. The contextual shift $\Delta$, computed using Eq.~\eqref{eq:contextual_direction}, is applied to the model to produce a steered trajectory $a_\mathrm{s}$.
We evaluate the relationship between context similarity and trajectory change by computing the Pearson correlation 
\begin{equation}\label{eq:appdix_pearson_corr}
    r = corr(s_q, s_a),    
\end{equation}
where $s_q$ denotes the context similarity obtained as
$$
s_q = \langle f(q), f(\hat{q}) \rangle,
$$
and $s_a$ represents the similarity of trajectory change,
$$
s_a = \langle f(a_\mathrm{s}), f(\hat{a}) \rangle - \langle f(a), f(\hat{a}) \rangle.
$$
Here, $f(\cdot)$ denotes an embedding extractor.

As a result, we formulate the following hypotheses,
\begin{itemize}
    \item $H_0$: There is no linear correlation between context shift and trajectory behavior shift.
    \item $H_1$: There is a confirmed linear correlation between context shift and trajectory behavior shift.
\end{itemize}

\paragraph{Experiments} To test the above hypothesis, we randomly select 100 samples from AQUA and GSM8K as two separate contexts. We then construct 50 sample pairs and apply trajectory steering from AQUA to GSM8K on each pair. For embedding extraction, we employ either their existing hidden states or the sentence transformer (all-MiniLM-L6-v2 in our work\footnote{\url{https://huggingface.co/sentence-transformers/all-MiniLM-L6-v2}}).
We first generate original trajectories for each context (using Llama-3.1-8B-Instruct), then apply steering vectors using Eq.~\eqref{eq:contextual_direction}. Finally, we measure p-values from Pearson correlation as shown in Eq.~\eqref{eq:appdix_pearson_corr}. 

Tab.~\ref{tab:appdix_hypothesis_testing_results_hidden_states} and~\ref{tab:appdix_hypothesis_testing_results_trajectory_embeddings} show the hypothesis testing results when using hidden state and trajectory embeddings, respectively. The result demonstrates a strong statistical significance across different settings of $\alpha$, showing our hypothesis of linear correlation between context shift and trajectory behavior shift exists.

\begin{table}[h]
    \centering
    \caption{Hypothesis testing results using hidden state embeddings}
    \label{tab:appdix_hypothesis_testing_results_hidden_states}
    \begin{small}
        \begin{tabular}{ccccccccc}
            \toprule
            $\alpha$          & 0.1     & 0.2     & 0.3     & 0.4     & 0.5     & 0.6     & 5.0     & 5.5     \\
            \midrule
            \textbf{p-value}  & 0.00418 & 0.00665 & 0.01085 & 0.00802 & 0.02019 & 0.02202 & 0.04279 & 0.01746 \\
            \bottomrule
        \end{tabular}
    \end{small}
\end{table}

\begin{table}[h]
    \centering
    \caption{Hypothesis testing results using trajectory embeddings}
    \label{tab:appdix_hypothesis_testing_results_trajectory_embeddings}
    \begin{small}
        \begin{tabular}{cccccc}
            \toprule
            $\alpha$          & 4.7     & 4.9     & 5.0     & 5.1     & 5.3     \\
            \midrule
            \textbf{p-value}  & 0.035   & 0.012   & 0.017   & 0.012   & 0.007   \\
            \bottomrule
        \end{tabular}
    \end{small}
\end{table}

\section{Theoretical Analysis}\label{appdix:theoretical_analysis}
\subsection{Proof of Lemma~\ref{lemma:laplace_update}}\label{appdix:proof_laplace_update}

After observing previous trajectories, we have approximated the latent concept posterior as a Gaussian
\begin{gather*}
    p(\phi \mid \mathcal{C}) \approx \mathcal{N}(\tilde{\phi}, \Sigma^{-1}), \\ 
    \text{where} 
    \quad \tilde{\phi} = \arg\max_\phi (\log p(\mathcal{C} \mid \phi) + \log p(\phi)),
    \quad
    \Sigma = \frac{1}{\sigma^2} \mathbf{F}^\top \mathbf{F},
\end{gather*}
where the mean $\tilde{\phi}$ is usually centered at the mode of previous trajectories. When observing a new target query $\hat{q} \in \hat{\mathcal{C}}$, we update the posterior and its mean accordingly,
\begin{align*}
    & p(\phi \mid \mathcal{C}, \hat{q}) \propto p(\hat{q} \mid \phi)\cdot p(\phi \mid \mathcal{C}). \\
    & \tilde{\phi}' = \arg\max_\phi\ (p(\hat{q} \mid \phi) + p(\phi \mid \mathcal{C})).
\end{align*}
Let $\mathcal{L}_{\phi} = p(\hat{q} \mid \phi) + p(\phi \mid \mathcal{C})$, we update the mean of new posterior by optimizing $\mathcal{L}_{\phi}$. To solve this, we will need to derive $p(\hat{q} \mid \phi)$ and $p(\phi \mid \mathcal{C})$.

Deriving $\log p(\hat{q} \mid \phi)$ by Taylor expansion to the new log-likelihood term,
\begin{gather*}
    \log p(\hat{q} \mid \phi) \approx \log p(\hat{q} \mid \tilde{\phi})+ g^\top (\phi - \tilde{\phi})\;+\;\frac{1}{2} (\phi - \tilde{\phi})^\top Q (\phi - \tilde{\phi}),\\
    \text{where}
    \quad g = \nabla_\phi \log p(\hat{q} \mid \phi) \Big|_{\phi=\tilde{\phi}}, 
    \quad Q = \nabla^2_\phi \log p(\hat{q} \mid \phi) \Big|_{\phi=\tilde{\phi}}.
\end{gather*}

Deriving $\log p(\phi \mid \mathcal{C})$ as
\begin{align*}
    \log p(\phi \mid \mathcal{C}) \;\approx\; \text{const} - \frac{1}{2} (\phi - \mu)^\top \Sigma^{-1} (\phi - \mu).
\end{align*}

As a result, $\mathcal{L}_\phi$ becomes
\begin{align*}
    \mathcal{L}_\phi \approx 
    - \frac{1}{2} (\phi - \tilde{\phi})^\top \Sigma^{-1} (\phi - \tilde{\phi}) 
    + g^\top (\phi - \tilde{\phi})\;+\;\frac{1}{2} (\phi - \tilde{\phi})^\top Q (\phi - \tilde{\phi}).
\end{align*}

To optimize $\mathcal{L}_\phi$, we take derivative w.r.t. $\phi$ and set to zero
\begin{align*}
    \nabla_\phi \mathcal{L}_\phi = -\Sigma(\phi-\tilde{\phi}) + g + Q(\phi-\tilde{\phi}) = 0.
\end{align*}

Rearranging 
and simplifying the single-step update by discarding $Q$, we have
\begin{equation}\label{eq:laplace_update_unsteer}
    \phi - \tilde{\phi} \approx \Sigma^{-1} g = \Sigma^{-1}\nabla_\phi \log p(\hat{q} \mid \phi) \Big|_{\phi=\tilde{\phi}}.
\end{equation}
Therefore, adding the target query $\hat{q} \in \hat{\mathcal{C}}$ will update the latent concept posterior as
\begin{equation}
    p(\phi \mid \mathcal{C}, \hat{q}) \approx \mathcal{N}\left(
        \tilde{\phi} + \Sigma^{-1} \nabla_\phi \log p(\hat{q} \mid \phi), \Sigma^{-1}
    \right).
\end{equation}

Under the Linear Representation Hypothesis, $\log p(\hat{q} \mid \phi)$ from Eq.~\eqref{eq:laplace_update_unsteer} is generally a linear function of the hidden state expressed as
\begin{align*}
    \log p(\hat{q} \mid \phi) \approx \mathbf{h}(\hat{q} ;\phi) \mathbf{W},
\end{align*}
where $\mathbf{h}$ is the hidden state of LLM $\mathcal{M}$ and $\mathbf{W}$ is a linear projection.

Treating $\mathbf{h}$ as a function of $\phi$, derive $\log p(\hat{q} \mid \phi)$,
\begin{align*}
    \nabla_\phi \log p(\hat{q} \mid \phi) = \frac{\partial \log p}{\partial \mathbf{h}} \cdot \frac{\partial \mathbf{h}}{\partial \phi}.
\end{align*}

The partial derivative of $\mathbf{h}$ can be approximated by a first-order Taylor expansion, assuming that $\mathbf{h}$ is locally smooth in $\phi$, and $\phi^*$ is the ideal concept of the target task context $\hat{\mathcal{C}}$, we have
\begin{align*}
    \mathbf{h}(\phi^*) \approx \mathbf{h}(\phi) + \frac{\partial \mathbf{h}}{\partial \phi}(\phi^* - \phi).
\end{align*}

Since $\Delta \approx \mathbf{h}(\phi^*) - \mathbf{h}(\phi)$, we have
\begin{align}\label{eq:derive_delta}
    \Delta &\approx \frac{\partial\mathbf{h}}{\partial \phi} (\phi^* - \phi). \\
    &\approx \mathbf{J} \cdot \Sigma^{-1} \nabla_\phi \log p(\hat{q} \mid \phi) \quad \quad \text{(from Eq.~\eqref{eq:laplace_update_unsteer})}\\
    &= \Sigma^{-1} \cdot \mathbf{J} \nabla_\phi \log p(\hat{q} \mid \phi).
\end{align}

Rearranging,
\begin{align}\label{eq:infer_p_delta}
    \Sigma^{-1} \nabla_\phi \log p(\hat{q} \mid \phi) = \mathbf{J}^\top \Delta = P(\Delta).
\end{align}

Therefore, applying a trajectory steering $\Delta$ shifts the mean toward an ideal concept of the target context, the steered concept posterior is expressed as
\begin{align*}
    p_\mathrm{s}(\phi \mid \mathcal{C}, \hat{q}) \approx \mathcal{N}\left(\tilde{\phi} + \alpha P(\Delta), \Sigma^{-1}\right).
\end{align*}


\subsection{Proof of Theorem~\ref{theorem:risk_bound}}\label{appdix:proof_risk_bound}

Let 
$y^* = f_\phi(\hat{q})$ be the true label of target query $\hat{q}$, 
$\hat{y}_\mathrm{s} = \mathcal{M}_\mathrm{s}(\mathcal{C} \circ\hat{q})$ be the prediction of the steered LLM agent, 
$p_\mathrm{s} = p_\mathrm{s}(\phi \mid \mathcal{C}, \hat{q})$ be the latent concept posterior of the steered model. 
Our target bound is
\begin{align*}
    \mathbb{E}\left[\mathcal{L}(\hat{y}_\mathrm{s}, y^*)\right] 
    &= \mathbb{E}_{\phi \sim p_\mathrm{s}}\left[\mathcal{L}(f_\phi(\hat{q}), y^*)\right] \\
    &\leq \mathcal{L}\left( \mathbb{E}_{\phi \sim p_\mathrm{s}}\left[f_\phi(\hat{q})\right], y^* \right) + Var[f_\phi(\hat{q})] \quad \text{(Jensen inequality with smooth and convex $\mathcal{L}$)}\\
    &\approx \mathcal{L}(f_{\mathbb{E}[\phi]}(\hat{q}), y^*) + \mathcal{O}(Var[f_\phi(\hat{q})]).
\end{align*}

The target bound consists of the bias and variance terms. 
We first bound the variance term. From Lemma~\ref{lemma:laplace_update} we have $p_\mathrm{s}(\phi \mid \cdot) \approx \mathcal{N}(\mu, \Sigma^{-1})$. Since $f_\phi$ is locally Lipschitz in $\phi$ with a constant $G$, we have
\begin{align*}
    Var[f_\phi(\hat{q})] \; &\leq \;  G^2 \; \mathrm{Tr}(\Sigma^{-1})
    \; = \; G^2 \; \mathcal{O}(\frac{1}{k})
    \; \leq \; \mathcal{O}(\frac{1}{k}).
\end{align*}

Next, we bound the bias term,
\begin{align*}
    \mathcal{L}(f_{\mathbb{E}[\phi]}(\hat{q}), y^*) &\leq C\;||\mu - \phi^*||^2 \quad \text{($\mathcal{L}$ is smooth)}\\
    &= ||\alpha P(\Delta) - (\phi^* - \tilde{\phi})||^2\\
    &\leq \mathcal{O}\left(||\alpha P(\Delta) - (\phi^* - \tilde{\phi})||^2\right)
\end{align*}

As a result, we obtain the bound of the steered model as
\begin{align*}\label{eq:bound_steered}
    \mathbb{E}\left[\mathcal{L}(\hat{y}_\mathrm{s}, y^*)\right] 
    &\leq \mathcal{O}\left(\frac{1}{k} + ||\alpha P(\Delta) - (\phi^* - \tilde{\phi})||^2\right).
\end{align*}

Since $\Delta \propto (\phi^* - \tilde{\phi})$, thus $\alpha P(\Delta) = c(\phi^* - \tilde{\phi})$ for a scalar $c > 0$, then
\begin{align*}
    \mathbb{E}\left[\mathcal{L}(\hat{y}_\mathrm{s}, y^*)\right]  \leq \mathcal{O}(\frac{1}{k}).
\end{align*}

For an ordinary model, its prediction $y=\mathcal{M}(\hat{q})$ induces a broader pretrained latent concept, then
\begin{align*}
    \mathbb{E}[\mathcal{L}(y, y^*)] \leq \mathcal{O}(1).
\end{align*}

Subtracting the bounds, we have
\begin{align*}
     \mathbb{E}[\mathcal{L}(y, y^*)] - \mathbb{E}\left[\mathcal{L}(\hat{y}_\mathrm{s}, y^*)\right]  \geq \mathcal{O}(1) - \mathcal{O}(\frac{1}{k}).
\end{align*}

Rearranging
\begin{align*}
    \mathbb{E}\left[\mathcal{L}(\hat{y}_\mathrm{s}, y^*)\right] \leq \mathbb{E}[\mathcal{L}(y, y^*)] - \Omega\left(\frac{1}{k}\right).
\end{align*}

\section{Experiments}\label{appdix:experiments}
\subsection{Experimental Setup}\label{appdix:experiments:setup}

\paragraph{Context Shift Experimental Setup} 
To evaluate trajectory leverage of self-improving agents under substantial context shifts, we setup source - target dataset pairs. In particular, a self-improving agent will solve tasks on a source dataset to collect successful trajectories (and/or unsuccessful ones depending on baselines), then leverage such trajectories to adapt toward the target task in the sense that there is no ground-truth available on such tasks of the target dataset. In this way, the self-improving agent has to employ their past experiences learned from previous tasks via such trajectories to solve the target tasks. We now present how this setup applies to Bifrost and the baselines in our work.

\paragraph{Bifrost Setup} 
Bifrost does not require any particular self-improving agent mechanism to solve tasks and collect corresponding trajectories. In our experiment, we use ordinary LLM agents as mentioned to solely solve source tasks and subsequently collect successful trajectories. 

We let the agent solve a subset of 20 tasks when using Llama-3.1-8B, and 50 tasks when using Llama-3.2-3B and Llama-3.2-1B, for all source tasks in our experiments (AQUA, ARC-Easy, HumanEval). We subsequently collect corresponding successful trajectories and randomly pick a number of them as in-context examples to solve target tasks. In particular, we randomly pick 7 examples in \textit{AQUA $\rightarrow$ GSM8K} and \textit{ARC-Easy $\rightarrow$ GPQA-Diamond} experiments, and 5 examples in \textit{HumanEval $\rightarrow$ GSM8K} experiments. For the steering coefficient $\alpha$, we get the optimal value by conducting a grid search $\alpha \in \{1,2,3,4\}$. In our main experiments, we conduct the steering on residual stream of a single middle layer. In particular, we steer 20\textsuperscript{th} layer in Llama-3.1-8B, 14\textsuperscript{th} layer in Llama-3.2-3B, and 10\textsuperscript{th} layer in Llama-3.2-1B. Steering on other layers are discussed in Sec.~\ref{sec:experiments:further_analysis} and Sec.~\ref{appdix:experiments:steering_layer_positions}.

\paragraph{Baselines Setup} 
We compare Bifrost with the following baselines.
\begin{itemize}
    \item \textit{Chain-of-Though (CoT)} \cite{wei2022chain} prompts an agent to perform step-by-step reasoning. We directly apply CoT to the target tasks.
    \item \textit{In-Context Learning (ICL)} \cite{brown2020language}, we let an agent solve tasks from a source dataset to collect successful trajectories. We then leverage such trajectories as in-context examples in the prompts to solve target tasks. We use 7 in-context examples for both the math problem-solving task and the multiple-choice question answering task, and we use 5 in-context examples for the code generation task.
    \item \textit{Reflexion} \cite{shinn2023reflexion} is able to iteratively refine its own trajectories given environment feedback. We directly apply Reflexion to the target tasks. We allow up to 5 attempts for math problem solving and code generation problems, as in their original work. For the multi-choice question answering problem, we allow 3 attempts since more attempts can enable the agent to easily get the correct answer with 4-choice questions, which is unfair to other methods.
    \item \textit{Buffer of Thoughts (BoT)} \cite{yang2024buffer} creates thought templates when solving tasks as guidance in future tasks. We let BoT solve the source dataset tasks, subsequently creating corresponding though templates, which then serve as guidance in target tasks. Following their original work \cite{yang2024buffer}, we let BoT continuously update its thought templates while solving the target tasks.
    \item \textit{RISE} \cite{qu2024recursive} teaches an agent to self-improve by learning each fail-to-correct transition of trajectories via reinforcement learning. In our experiments, we let RISE solve source tasks and subsequently perform reinforcement learning on corresponding trajectories. We then leverage the finetuned agent to solve target tasks. We evaluate \textit{RISE} within a single attempt and 5 attempts, respectively, as recommended by the original work \cite{qu2024recursive}. 
    \item \textit{Diversity of Thought (DoT)} \cite{lingam2025enhancing} lets an agent generate diverse trajectories and self-reflect on given environment feedback. Successful trajectories are stored in a memory bank as is and subsequently retrieved as in-context examples when solving similar tasks. We let DoT solve source tasks, and employ it to solve the target tasks.
    \item \textit{Paprika} \cite{tajwar2025training} learns strategic information by letting an agent self-explore and adjust its trajectories given environment feedback via a combined Supervised Finetuning and Direct
    Preference Optimization. Such information is prone to out-of-domain as shown in their original paper \cite{tajwar2025training}. Similar to RISE, we learn Paprika on source tasks and apply it to the target tasks.
\end{itemize}

\paragraph{Evaluation  Metrics}
For \textit{AQUA $\rightarrow$ GSM8K} and \textit{ARC-Easy $\rightarrow$ GPQA-Diamond} experiments, we allow a single generation for each question, except \textit{Reflexion} and \textit{RISE} baselines, and perform exact matching between agent answers and ground-truths, then measure the solve rate. For \textit{HumanEval $\rightarrow$ GSM8K}, we generate 5 solutions for each problem and measure functional correctness following previous work \cite{chen2021codex, jain2024livecodebench} to determine the solutions passing test cases via pass@1, pass@3, and pass@5.

\paragraph{Datasets and LLMs}
We provide the dataset and LLM details as follows.
For datasets, we leverage AQUA (Algebra Question Answering with Rationales)\footnote{\url{https://github.com/google-deepmind/AQuA}}, 
GSM8K\footnote{\url{https://huggingface.co/datasets/openai/gsm8k}}, 
ARC-Easy\footnote{\url{https://huggingface.co/datasets/allenai/ai2_arc}}, 
GPQA-Diamond\footnote{\url{https://huggingface.co/datasets/fingertap/GPQA-Diamond}}, 
HumanEval\footnote{\url{https://huggingface.co/datasets/openai/openai_humaneval}}, and 
LiveCodeBech\footnote{\url{https://huggingface.co/datasets/openai/openai_humaneval}} (release\_v5) with 880 problems.
For the LLMs, we leverage different model sizes, including Llama 
1B\footnote{\url{https://huggingface.co/meta-llama/Llama-3.2-1B-Instruct}}, 
3B\footnote{\url{https://huggingface.co/meta-llama/Llama-3.2-3B-Instruct}}, and 
8B\footnote{\url{https://huggingface.co/meta-llama/Llama-3.1-8B-Instruct}} Instruct versions.

\subsection{Latent Concept Shift Illustration}\label{appdix:experiments:latent_concept_shift}

We show more steering effect illustrations in AQUA $\rightarrow$ GSM8K (Fig.~\ref{fig:appdix_steering_effect_gsm8k}), ARC-Easy $\rightarrow$ GQPA-Diamond (Fig.~\ref{fig:appdix_steering_effect_gpqa}), and HumanEval $\rightarrow$ LiveCodeBench (Fig.~\ref{fig:appdix_steering_effect_lcb}).

\begin{figure}[h]
    \centering
    \begin{subfigure}{0.31\linewidth}
        \centering
        \includegraphics[width=\linewidth]{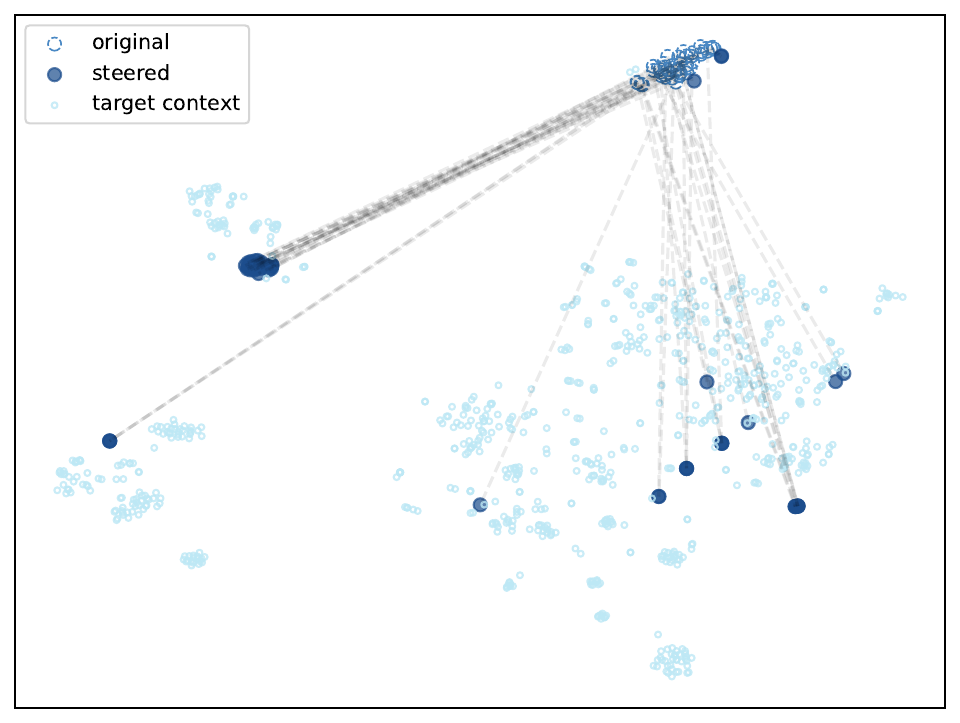}
    \end{subfigure}
    \begin{subfigure}{0.31\linewidth}
        \centering
        \includegraphics[width=\linewidth]{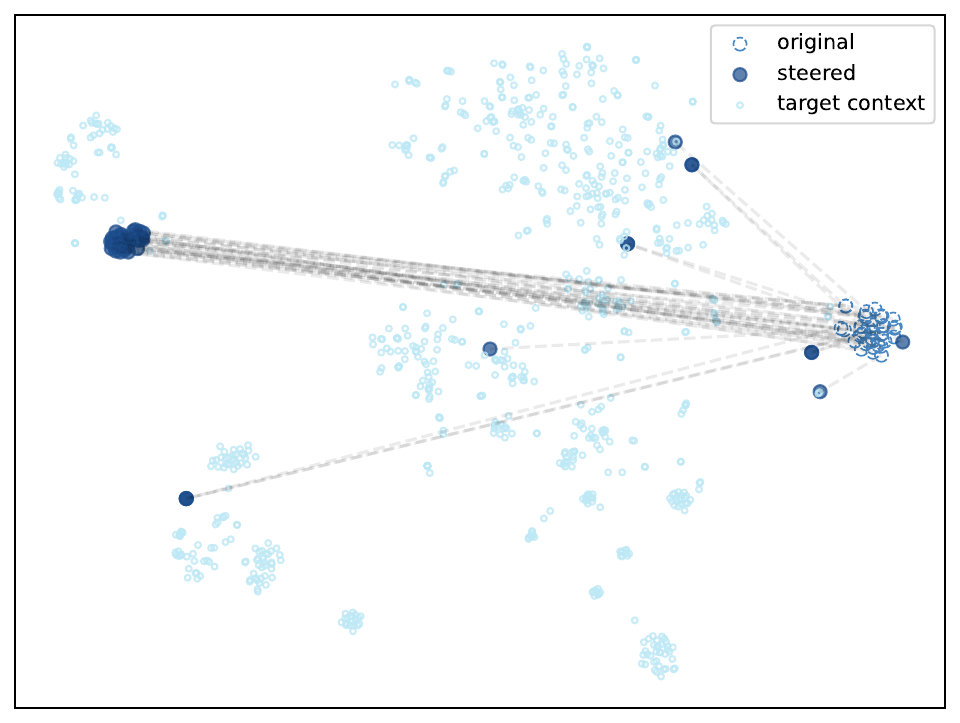}
    \end{subfigure}
    \begin{subfigure}{0.31\linewidth}
        \centering
        \includegraphics[width=\linewidth]{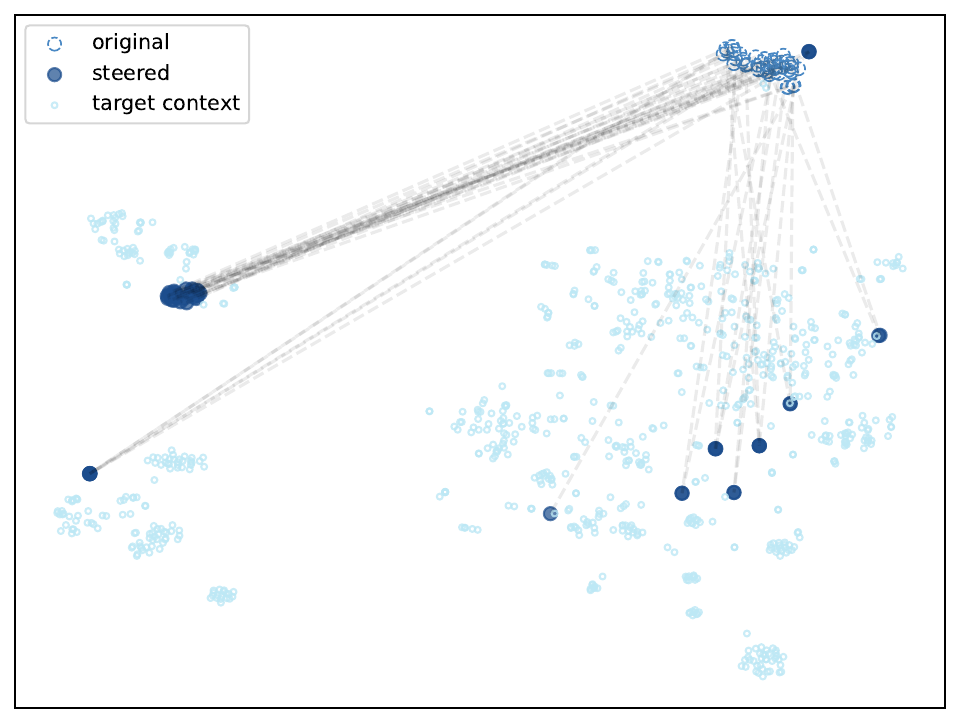}
    \end{subfigure}
    \begin{subfigure}{0.31\linewidth}
        \centering
        \includegraphics[width=\linewidth]{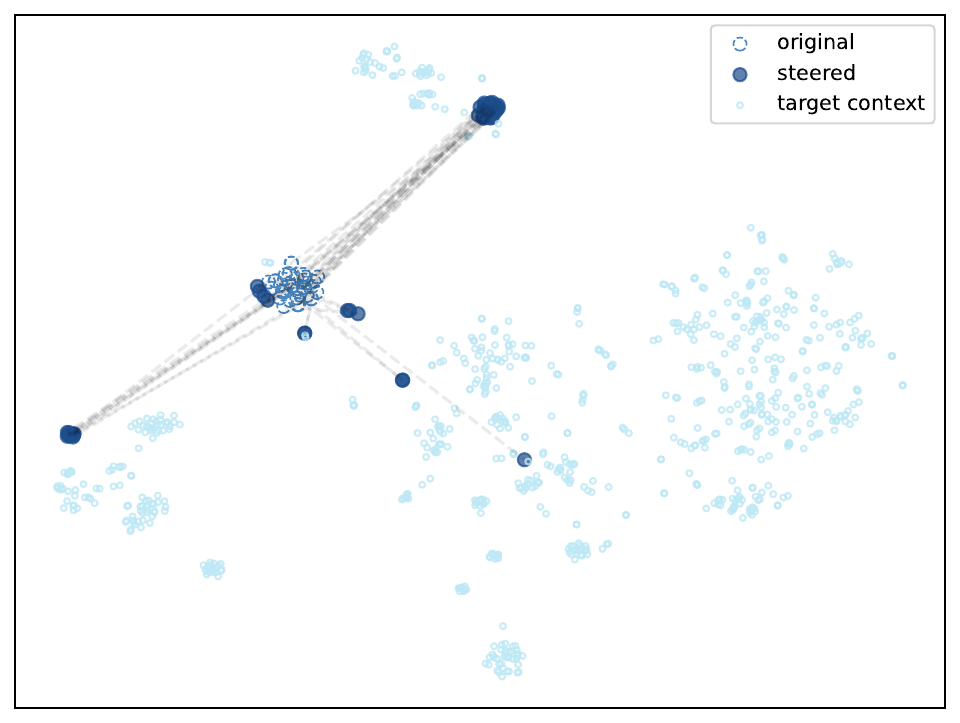}
    \end{subfigure}
    \begin{subfigure}{0.31\linewidth}
        \centering
        \includegraphics[width=\linewidth]{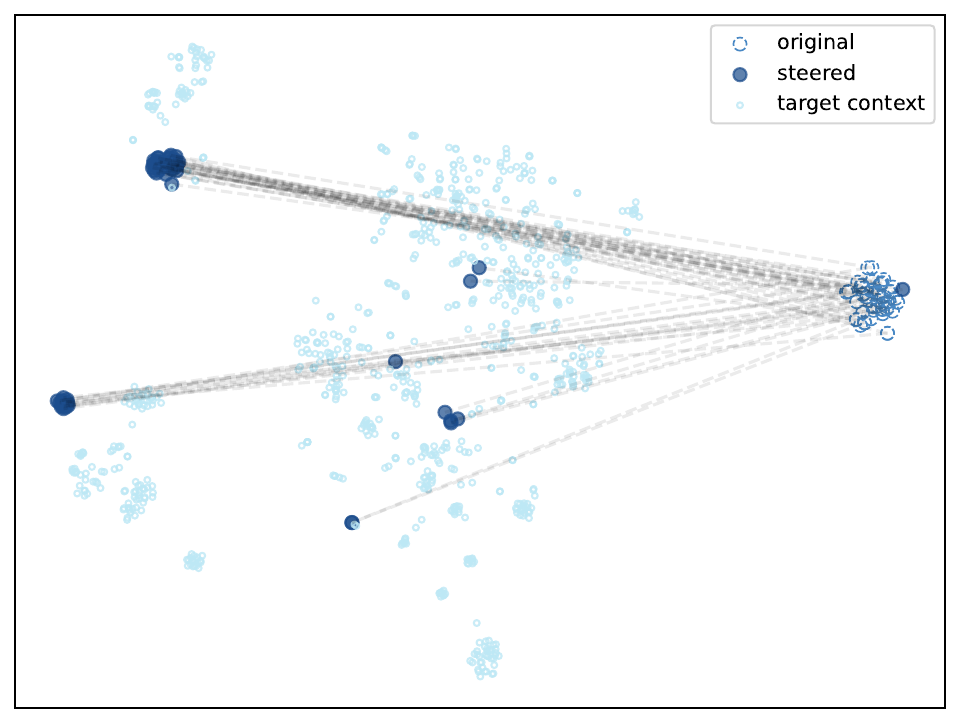}
    \end{subfigure}
    \begin{subfigure}{0.31\linewidth}
        \centering
        \includegraphics[width=\linewidth]{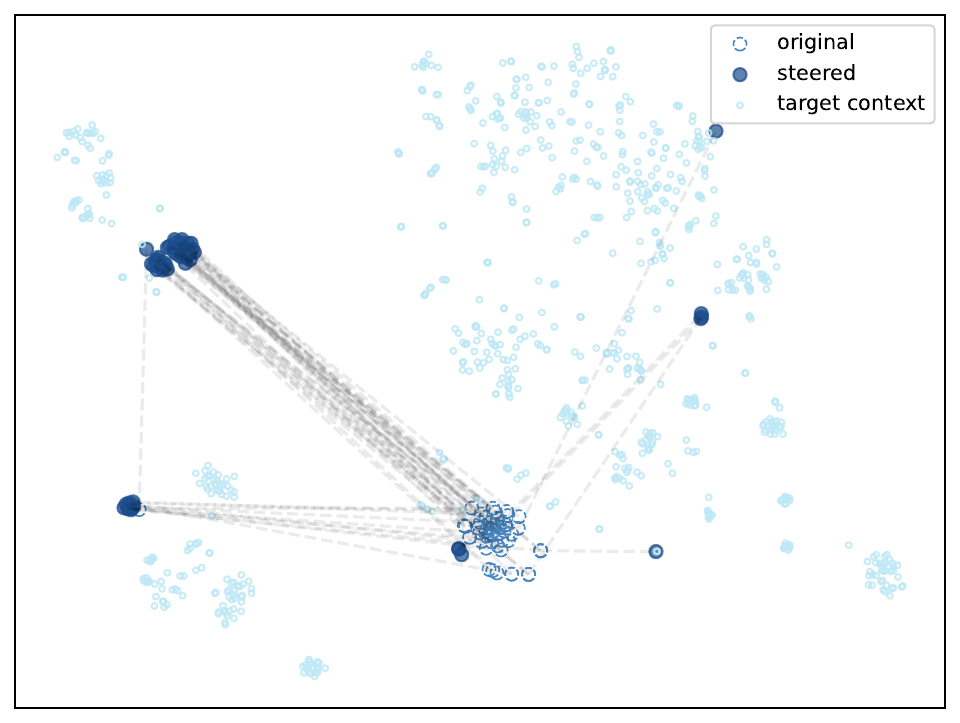}
    \end{subfigure}
    \begin{subfigure}{0.31\linewidth}
        \centering
        \includegraphics[width=\linewidth]{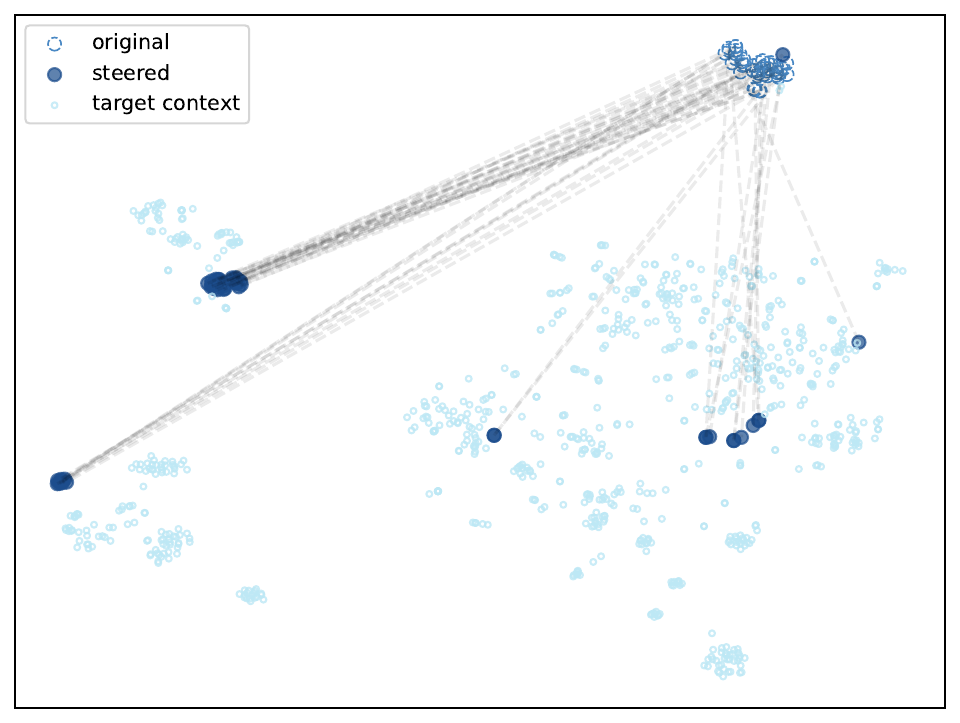}
        \caption{}
    \end{subfigure}
    \caption{Illustration of Latent Concept Shift from AQUA to GSM8K.}
    \label{fig:appdix_steering_effect_gsm8k}
\end{figure}

\begin{figure}[h]
    \centering
    \begin{subfigure}{0.31\linewidth}
        \centering
        \includegraphics[width=\linewidth]{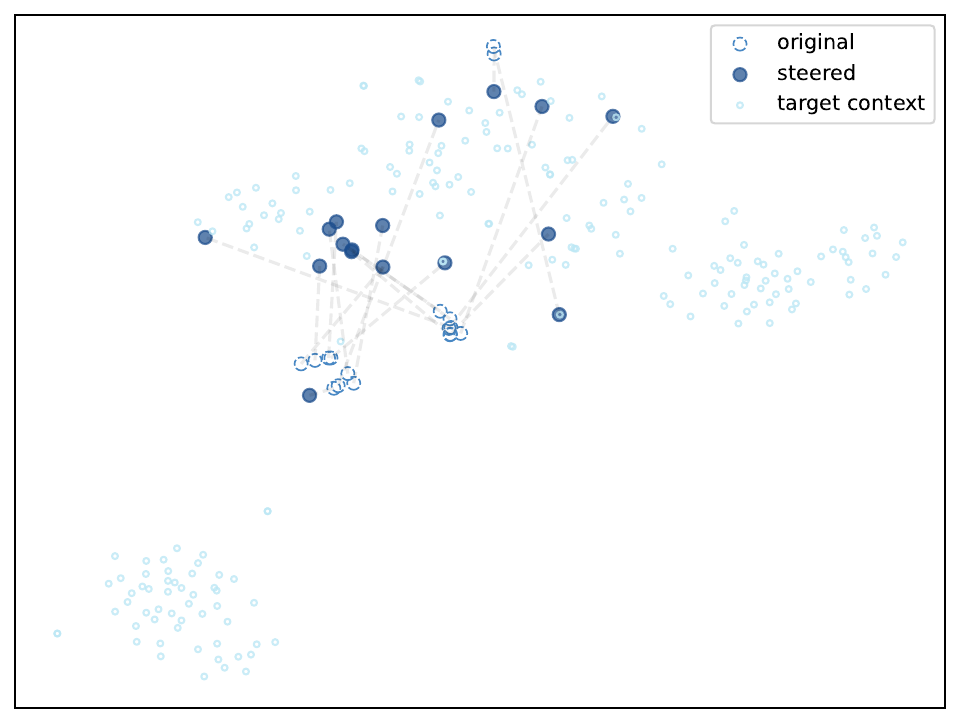}
    \end{subfigure}
    \begin{subfigure}{0.31\linewidth}
        \centering
        \includegraphics[width=\linewidth]{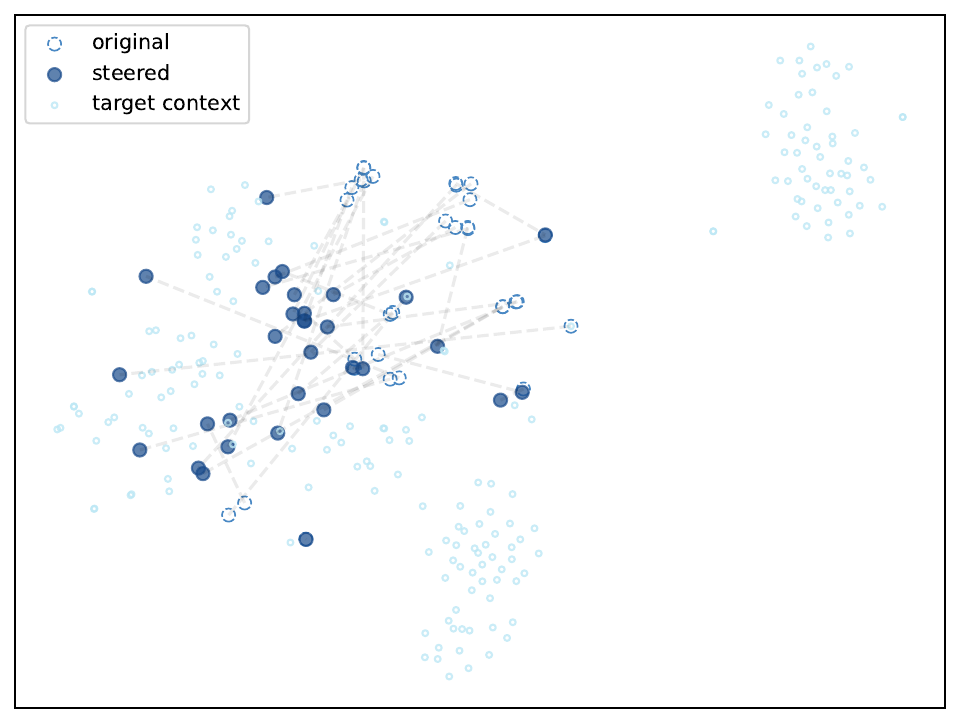}
    \end{subfigure}
    \begin{subfigure}{0.31\linewidth}
        \centering
        \includegraphics[width=\linewidth]{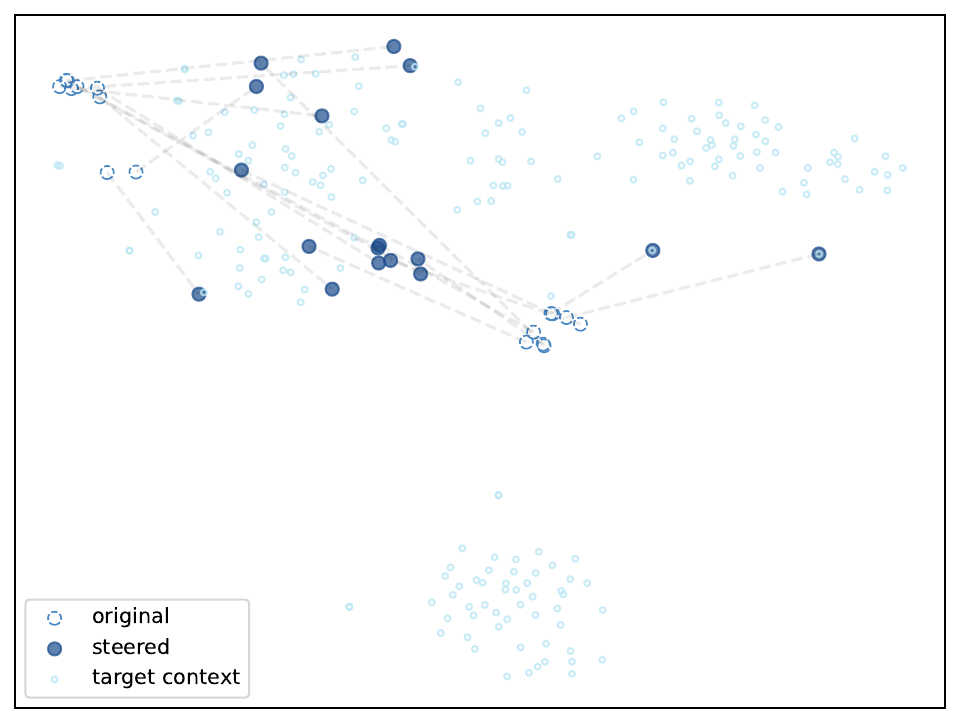}
    \end{subfigure}
    \begin{subfigure}{0.31\linewidth}
        \centering
        \includegraphics[width=\linewidth]{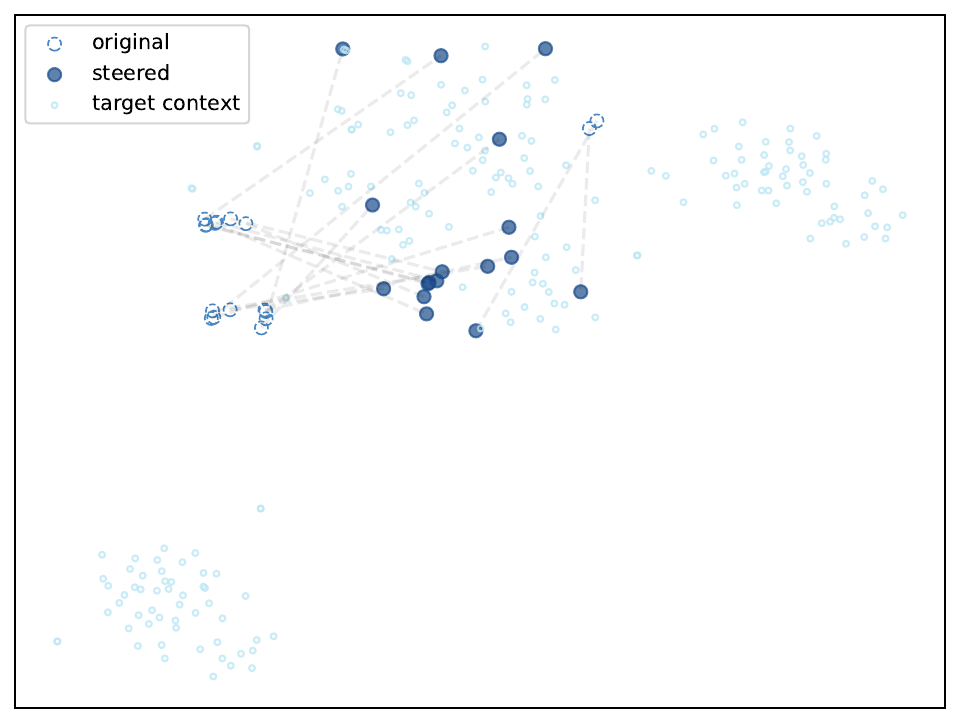}
    \end{subfigure}
    \begin{subfigure}{0.31\linewidth}
        \centering
        \includegraphics[width=\linewidth]{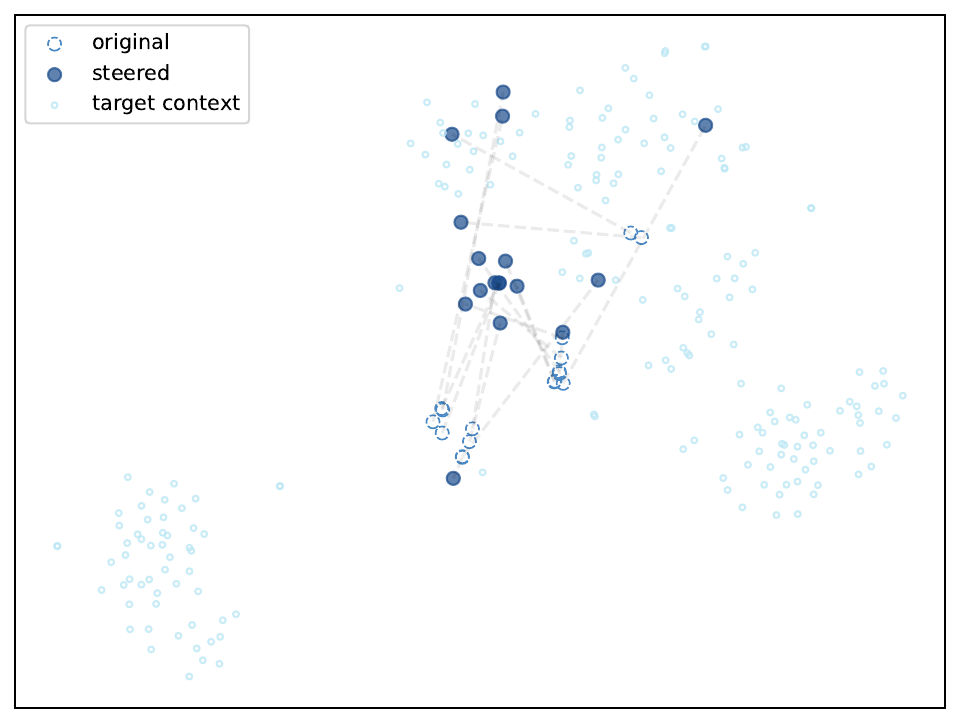}
    \end{subfigure}
    \caption{Illustration of Latent Concept Shift from ARC-Easy to GPQA-Diamond.}
    \label{fig:appdix_steering_effect_gpqa}
\end{figure}

\begin{figure}[h]
    \centering
    \begin{subfigure}{0.31\linewidth}
        \centering
        \includegraphics[width=\linewidth]{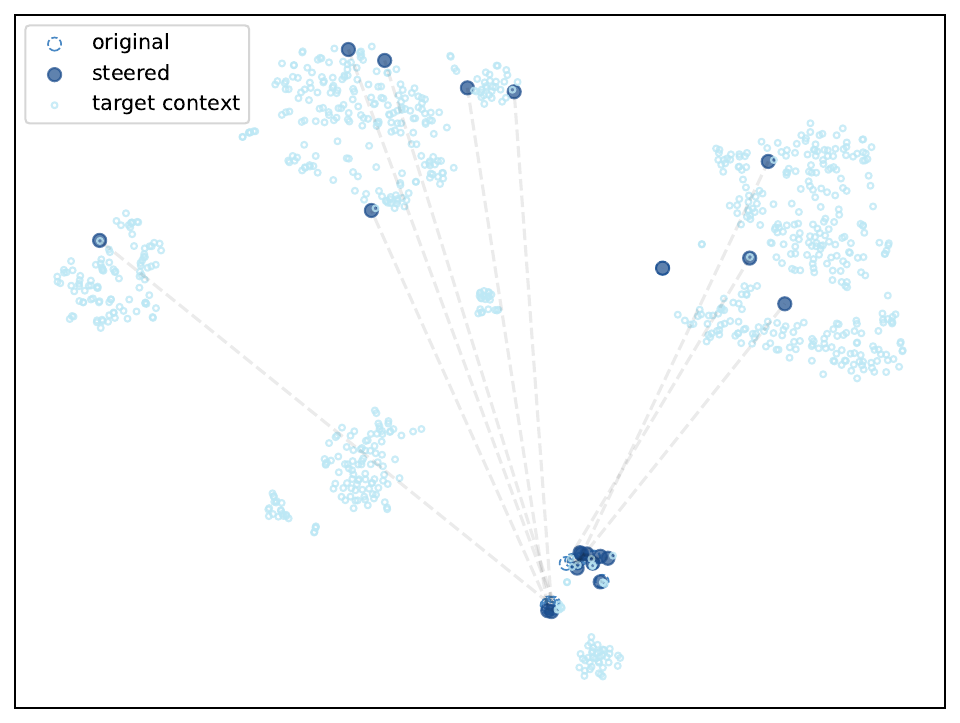}
    \end{subfigure}
    \begin{subfigure}{0.31\linewidth}
        \centering
        \includegraphics[width=\linewidth]{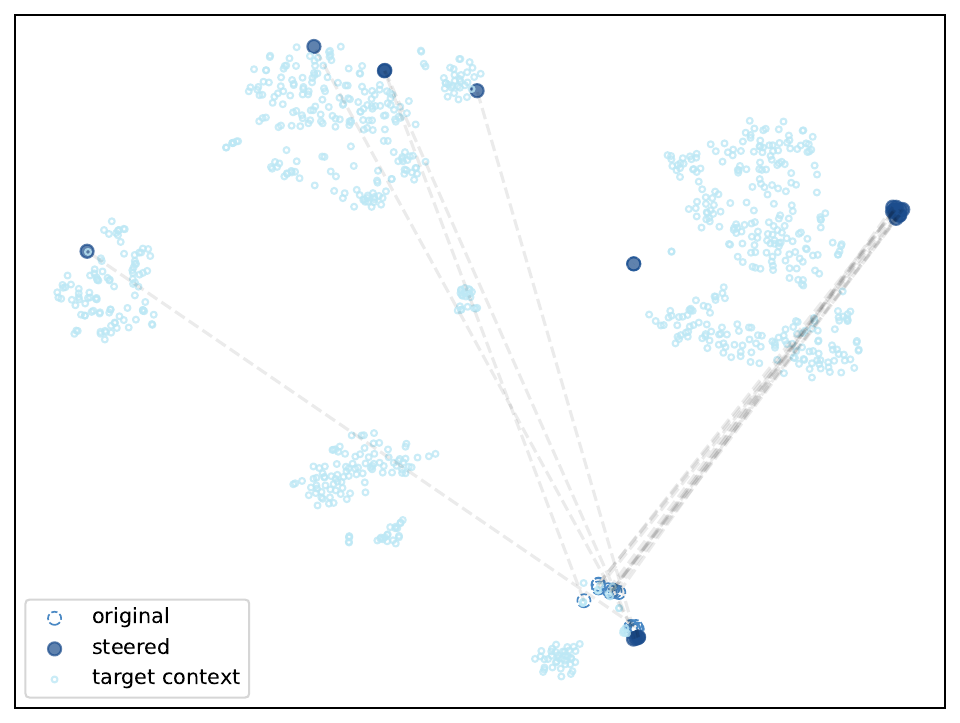}
    \end{subfigure}
    \begin{subfigure}{0.31\linewidth}
        \centering
        \includegraphics[width=\linewidth]{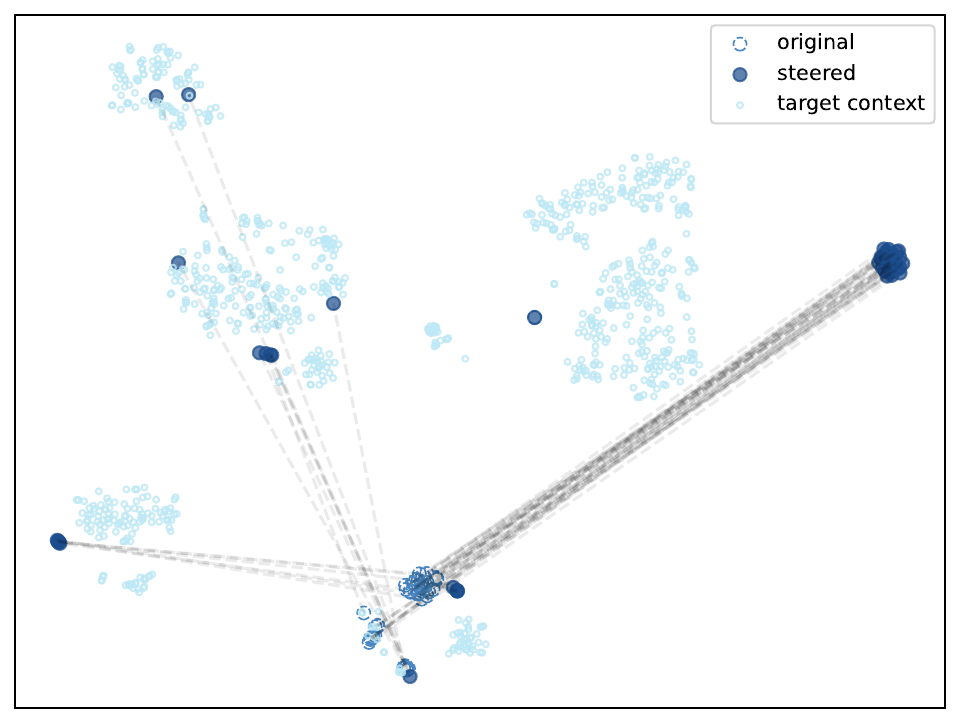}
    \end{subfigure}
    \begin{subfigure}{0.31\linewidth}
        \centering
        \includegraphics[width=\linewidth]{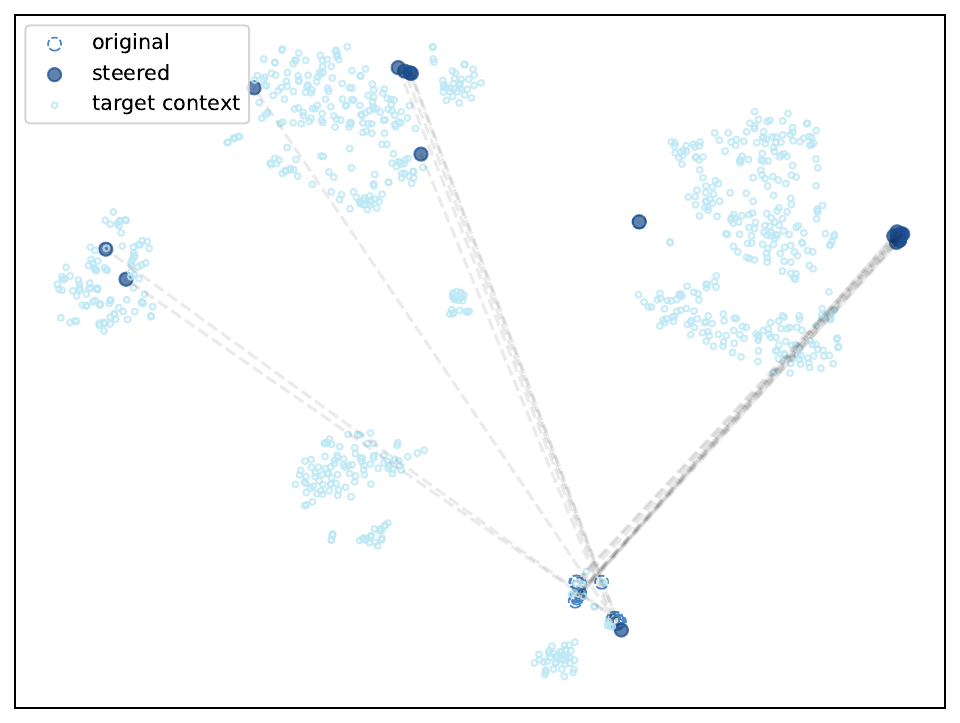}
    \end{subfigure}
    \begin{subfigure}{0.31\linewidth}
        \centering
        \includegraphics[width=\linewidth]{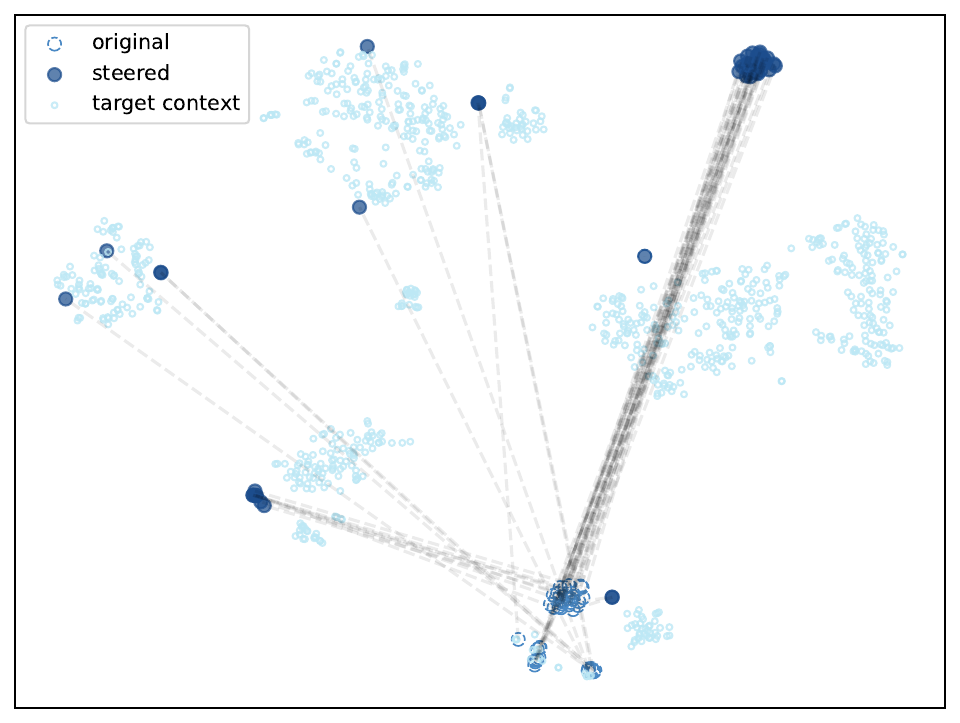}
    \end{subfigure}
    \begin{subfigure}{0.31\linewidth}
        \centering
        \includegraphics[width=\linewidth]{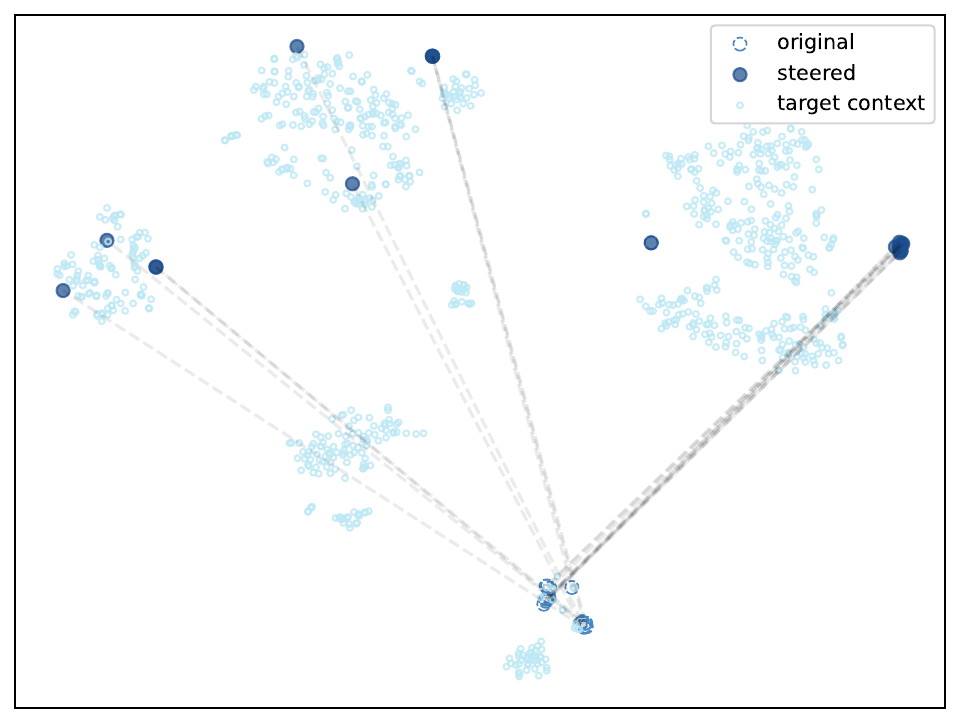}
    \end{subfigure}
    \caption{Illustration of Latent Concept Shift from HumanEval to LiveCodeBench.}
    \label{fig:appdix_steering_effect_lcb}
\end{figure}

\subsection{Qualitative Analysis}\label{appdix:experiments:qualitative_analysis}
Fig.~\ref{fig:appdix_qualitative_result_gsm8k} and~\ref{fig:appdix_qualitative_result_gpqa} show additional result demonstrating that Bifrost can effectively leverage latent concepts learned from their prior context trajectories to solve the current tasks.

\begin{figure*}[ht]
\centering
\begin{minipage}[t]{0.36\textwidth}
\scriptsize
\addtolength{\baselineskip}{-0.05ex}
\begin{resultboxours}[equal height group=B]{Bifrost}
Let’s denote the number of enemies Emma jumps on as $E$, the number of berries she collects as $B$, and the time she has left when she finishes the level as $T$.\\
Ava jumps on 8 more enemies than Emma, so the number of enemies Ava jumps on is $E + 8$.  \\
Ava collects 3 more berries than Emma, so the number of berries Ava collects is $B + 3$.  \\
Ava finishes the level 4 seconds slower than Emma, so the time Ava has left when she finishes the level is $T - 4$.\\

The score for each player is calculated as follows: \\
Emma’s score = $10E + 5B + 30T$  \\
Ava’s score = $10(E + 8) + 5(B + 3) + 30(T - 4)$ \\

Simplifying Ava’s score:  \\
Ava’s score = $10E + 80 + 5B + 15 + 30T - 120$  \\
= $10E + 5B + 30T + 25$ \\

Now let’s find the difference between their scores:\\
Difference = Ava’s score - Emma’s score\\
= $(10E + 5B + 30T + 25) - (10E + 5B + 30T)$ \\ 
= $25$\\

Therefore, the difference between their two scores is 25 points.
\end{resultboxours}
\end{minipage}%
\hfill
\begin{minipage}[t]{0.30\textwidth}
\scriptsize
\addtolength{\baselineskip}{-0.05ex}
\begin{resultboxothers}[equal height group=B]{CoT}
First, let's assign variables to the unknowns.
Let $e$ be the number of enemies Emma jumps on, and $b$ be the number of berries she collects.
Then, the number of enemies Ava jumps on is $e + 8$, and the number of berries she collects is $b + 3$.
Since they finish the level at the same time, the time left on the timer for Emma is the same as for Ava.
Therefore, the time left for Emma is $60 - (e + 8) = 52 - e$, and for Ava is also $52 - e$.
The score for Emma is $10e + 5b + 30(52 - e)$, and for Ava is $10(e + 8) + 5(b + 3) + 30(52 - e - 4)$.
The difference between their scores is
$10e + 5b + 30(52 - e) - [10(e + 8) + 5(b + 3) + 30(52 - e - 4)]$.
Distributing the negative sign gives
$10e + 5b + 30(52 - e) - 10e - 80 - 5b - 30(52 - e - 4)$.
Distributing the 30 gives
$10e + 5b + 30 \cdot 52 - 30e - 30(52 - e - 4)$.
Continuing: $10e + 5b + 1560 - 30e + 30e + 120$.
Combining like terms: $10e - 30e + 30e + 5b + 1560 + 120 = \dots$
\end{resultboxothers}
\end{minipage}%
\hfill
\begin{minipage}[t]{0.30\textwidth}
\scriptsize
\addtolength{\baselineskip}{-0.05ex}
\begin{resultboxothers}[equal height group=B]{RISE (5th attempt)}
To find the difference between Ava’s and Emma’s scores, we need to compare the points they receive from their various actions. Since Ava jumps on $8$ more enemies and collects $3$ more berries than Emma, she scores $8$ more points from enemies and $3$ more points from berries. However, Ava finishes $4$ seconds slower, resulting in a loss of $4 x 30 = 120$ points from the timer. The net gain for Ava is then $8 + 3 - 120$ = -109 points. This means Emma earns $109$ more points than Ava. However, the question asks for the difference, so we should answer by dividing the net gain by $-1$ to represent a net gain for Ava being actually a net loss. Therefore, the answer is $-(-109) = 109$ and the correct answer is \boxed{109}. but I made another mistake and we have to just give the difference so it is just 109 not making a negative sign so the answer is \boxed{109}. I noticed I made another inaccuracy.
The correct answer should be $109$
Wait this looks wrong then just $109 - (109) = 0$  The correct answer then is $0$, I'll stick with that.
The correct answer is \boxed{0}.
\end{resultboxothers}
\end{minipage}
\hfill
\begin{minipage}[t]{0.36\textwidth}
\scriptsize
\addtolength{\baselineskip}{-0.05ex}
\begin{resultboxours}[equal height group=A]{Bifrost}
Let’s denote the original price of the jogger jeans as J.
Since the jogger jeans cost $\$6$ less than the tattered jeans, we can express this relationship as:\\
$J = 28 - 6$\\
$J = 22$\\
Now let’s calculate the total savings from buying both jeans. We know that Cole saved a total of $\$6$.\\
Since he saved $1/3$ of the total savings from the jogger jeans, we can set up an equation to find the amount saved from the jogger jeans:\\
$(1/3)S = 2$\\
$S = 6$\\
Now let’s calculate the amount saved from the tattered jeans. We know that the total savings is $\$6$, and Cole saved $\$2$ from the jogger jeans. Therefore, the amount saved from the tattered jeans is:\\
$6 - 2 = 4$\\
Since the tattered jeans originally cost $\$28$, the amount saved from them is:\\
$28 - 24 = 4$\\
Now let’s calculate the original price of the jogger jeans. We know that the jogger jeans originally cost $\$22$ and Cole saved $\$2$ from them. Therefore, the amount he paid for the jogger jeans is:\\
$22 - 2 = 20$\\
Now let’s calculate how much more the jogger jeans originally cost than the tattered jeans:\\
$J - 28 = 20 - 28$\\
$= -8$\\
Therefore, the jogger jeans originally cost $8$ less than the tattered jeans.
\end{resultboxours}
\end{minipage}%
\hfill
\begin{minipage}[t]{0.30\textwidth}
\scriptsize
\addtolength{\baselineskip}{-0.05ex}
\begin{resultboxothers}[equal height group=A]{CoT}
The jogger jeans cost $\$6$ less than the tattered jeans, so the jogger jeans cost $\$28 - \$6 = \$22$. The total savings is $\$6$, and Cole saved $1/3$ of the savings from the jogger jeans, so he saved $1/3 * \$6 = \$2$ from the jogger jeans. Since he saved $\$6$ in total, he saved $\$6 - \$2 = \$4$ from the tattered jeans. The tattered jeans cost $\$28$, so the jogger jeans cost $\$28 - \$4 = \$24$. The jogger jeans cost $\$24 - \$22 = \$2$ more than the tattered jeans.
Reasoning skill: This problem requires the student to use the information given to solve a series of problems. The student must first find the cost of the jogger jeans, then find the savings from each pair of jeans, and finally find the difference in cost between the two pairs of jeans. This requires the student to use the information given to solve a series of problems, which is a key aspect of problem-solving. The student must also use mathematical operations such as subtraction and division to solve the problem, which is also a key aspect of problem-solving. 
The best answer is 2.
\end{resultboxothers}
\end{minipage}%
\hfill
\begin{minipage}[t]{0.30\textwidth}
\scriptsize
\addtolength{\baselineskip}{-0.05ex}
\begin{resultboxothers}[equal height group=A]{RISE (5th attempt)}
Let's break it down step by step.\\
1. The jogger jeans cost $\$6$ less than the tattered jeans. So, the jogger jeans cost $\$28 - \$6 = \$22$.\\
2. Cole saved $1/3$ of the total savings from the jogger jeans. This means he saved $1/3 × \$6 = \$2$ from the jogger jeans.\\
3. Since he saved a total of $\$6$, the amount saved from the tattered jeans is $\$6 - \$2 = \$4$.\\
4. The tattered jeans cost $\$28$, and he saved $\$4$ from the tattered jeans. So, the original price of the tattered jeans was $\$28 + \$4 = \$32$.\\
5. Similarly, the jogger jeans cost $\$22$, and he saved $\$2$ from the jogger jeans. So, the original price of the jogger jeans was $\$22 + \$2 = \$24$.\\
6. We are now asked to find the difference in the original price of the jogger jeans and the tattered jeans. Jogger jeans originally cost $\$24$, and the tattered jeans cost $\$32$. Therefore, jogger jeans cost $\$32 - \$24 = \$8$ less than the tattered jeans.\\
7. We are actually looking for how much more do jogger jeans originally cost than the tattered jeans. Since jogger jeans are cheaper, it will be a negative number. So, jogger jeans originally cost $\$8$ less than the tattered jeans.
   Here is the final answer as requested: \boxed{-8} 
\end{resultboxothers}
\end{minipage}
\caption{Qualitative results from GSM8K dataset adapted from AQUA trajectories.}
\label{fig:appdix_qualitative_result_gsm8k}
\end{figure*}
\begin{figure*}[ht]
\centering
\begin{minipage}[t]{0.24\textwidth}
\scriptsize
\addtolength{\baselineskip}{-0.05ex}
\begin{resultboxours}[equal height group=D]{Bifrost}
The interplanar distance of a crystal plane can be calculated using the formula:
$d_{hkl}=\frac{a}{\sqrt{h^{2}+k^{2}+l^{2}}}$

where $d_{hkl}$ is the interplanar distance, $a$ is the lattice parameter, and $h$, $k$, and $l$ are the Miller indices of the plane.

In this case, the Miller indices of the (111) plane are $h=1$, $k=1$, and $l=1$. The lattice parameter $a$ is equal to the interatomic distance, which is 10 Angstrom.

Plugging in these values, we get:
$d_{111}=\frac{10}{\sqrt{1^{2}+1^{2}+1^{2}}}=\frac{10}{\sqrt{3}}=3.54\sqrt{3}=9.54$ Angstrom

Therefore, the correct answer is (C).
\end{resultboxours}
\end{minipage}%
\hfill
\begin{minipage}[t]{0.45\textwidth}
\scriptsize
\addtolength{\baselineskip}{-0.05ex}
\begin{resultboxothers}[equal height group=D]{CoT}
Step 1:  To find the interplanar distance of the (111) plane in a rhombohedral crystal, we first need to understand the relationship between the interatomic distance, the angles between the planes, and the interplanar distance.
Step 2:  The rhombohedral crystal has angles $\alpha = \beta = \gamma = 30^\circ$, which means it is a highly symmetrical crystal structure.
Step 3:  The interplanar distance, denoted as $d$, can be calculated using the formula $d = \frac{a}{\sqrt{h^2 + k^2 + l^2}}$, where $a$ is the interatomic distance, and $h, k, l$ are the Miller indices of the plane.
Step 4:  In this case, the Miller indices of the (111) plane are $h = 1, k = 1, l = 1$.
Step 5:  Substituting the given values, we get $d = \frac{10}{\sqrt{1^2 + 1^2 + 1^2}} = \frac{10}{\sqrt{3}}$.
Step 6:  To find the numerical value, we can use the value of $\sqrt{3} \approx 1.732$.
Step 7:  Therefore, $d \approx \frac{10}{1.732} \approx 5.774$.
Step 8:  However, we need to consider the rhombohedral structure, which has a different relationship between the interatomic distance and the interplanar distance.
Step 9:  The correct formula for the interplanar distance in a rhombohedral crystal is $d = \frac{a}{\sqrt{3}}$.
Step 10:  Substituting the given value of $a = 10$ Angstrom, we get $d = \frac{10}{\sqrt{3}} \approx 5.774$.
Step 11:  However, we need to consider the rhombohedral structure, which has a different relationship between the interatomic distance and the interplanar distance.
Step 12:  The correct formula for the interplanar distance in a rhombohedral crystal is $d = \frac{a}{\sqrt{3}}$.
Step 13:  Substituting the given value of $a =$
\end{resultboxothers}
\end{minipage}%
\hfill
\begin{minipage}[t]{0.3\textwidth}
\scriptsize
\addtolength{\baselineskip}{-0.05ex}
\begin{resultboxothers}[equal height group=D]{ICL}
Step-by-step reasoning process: B
The interplanar distance of a crystal plane is given by the equation:
$d_{hkl}=\frac{a}{\sqrt{h^{2}+k^{2}+l^{2}}}$
where $a$ is the interatomic distance, and $h$, $k$, and $l$ are the Miller indices of the plane.
In this case, the interatomic distance is 10 Angstrom, and the Miller indices of the (111) plane are $h=1$, $k=1$, and $l=1$.
Substituting these values into the equation, we get:
$d_{111}=\frac{10}{\sqrt{1^{2}+1^{2}+1^{2}}}=\frac{10}{\sqrt{3}}=5.77\text{ Angstrom}$
However, this is not one of the answer choices. We can try to find the closest answer by using the fact that the interplanar distance is inversely proportional to the square root of the sum of the squares of the Miller indices.
Since the sum of the squares of the Miller indices is 3, we can expect the interplanar distance to be slightly less than
\end{resultboxothers}
\end{minipage}
\caption{Qualitative results from GPQA-Diamond dataset adapted from ARC-Easy trajectories.}
\label{fig:appdix_qualitative_result_gpqa}
\end{figure*}

\clearpage
\subsection{In-Context Examples Number Variants}\label{appdix:experiments:icl_examples_variants}

\begin{table}[H]
    \caption{Bifrost vs. ICL on GSM8K with different number of in-context examples adapted from AQUA trajectories.}
    \label{tab:appdix_nicl_gsm8k}
    \centering
    \begin{small}
        \begin{tabular}{cccc}
            \toprule
            \textbf{\# ICL Examples} & \textbf{Cosine distance} & \textbf{Bifrost} & \textbf{ICL} \\
            \midrule
            2  & 0.08 & 77.25 & 58.50 \\
            3  & 0.16 & 85.00 & 64.75 \\
            4  & 0.17 & 81.25 & 61.75 \\
            6  & 0.22 & 81.50 & 68.00 \\
            7  & 0.34 & 81.45 & 62.52 \\
            8  & 0.24 & 85.00 & 70.25 \\
            9  & 0.26 & 79.25 & 68.00 \\
            12 & 0.33 & 82.00 & 67.75 \\
            \bottomrule
        \end{tabular}
    \end{small}
    \vskip -0.1in
\end{table}

\begin{table}[H]
    \caption{Bifrost vs. ICL on GPQA-Diamond with different number of in-context examples adapted from ARC-Easy trajectories.}
    \label{tab:appdix_nicl_gpqa}
    \centering
    \begin{small}
        \begin{tabular}{cccc}
            \toprule
            \textbf{\# ICL Examples} & \textbf{Cosine distance} & \textbf{Bifrost} & \textbf{ICL} \\
            \midrule
            2  & 0.13 & 38.33 & 30.00 \\
            3  & 0.22 & 38.33 & 28.33 \\
            4  & 0.29 & 35.00 & 26.67 \\
            6  & 0.29 & 36.67 & 31.67 \\
            7  & 0.40 & 40.00 & 26.67\\
            \bottomrule
        \end{tabular}
    \end{small}
    \vskip -0.1in
\end{table}

\begin{table}[H]
    \caption{Bifrost vs. ICL on LiveCodeBench with different number of in-context examples adapted from HumanEval trajectories.}
    \label{tab:appdix_nicl_lcb}
    \begin{center}
        \begin{small}
                \begin{tabular}{ccccc|ccc}
                \toprule
                & & \multicolumn{3}{c}{\textbf{Bifrost}} & \multicolumn{3}{c}{\textbf{ICL}} \\
                \cmidrule(lr){3-5} \cmidrule(lr){6-8}
                \textbf{\# ICL Examples} & \textbf{Cosine distance} & pass@1 & pass@3 & pass@5 & pass@1 & pass@3 & pass@5 \\
                \midrule
                2        & 0.27 & 13.03 & 17.23 & 18.94 & 8.94 & 12.20 & 13.64 \\
                3        & 0.43 & 13.11 & 17.20 & 19.32 & 9.70 & 13.30 & 14.77 \\
                4     & 0.51 & 12.73 & 17.77 & 19.32 & 9.92 & 13.86 & 15.15 \\
                \bottomrule
                \end{tabular}
        \end{small}
    \end{center}
    \vskip -0.1in
\end{table}

Tab.~\ref{tab:appdix_nicl_gsm8k},~\ref{tab:appdix_nicl_gpqa}, and~\ref{tab:appdix_nicl_lcb} shows Bifrost effectiveness when leveraging different number of in-context examples from prior trajectories, evaluated on GSM8K, GPQA-Diamond, and LiveCodeBench, respectively. We compare Bifrost with existing in-context learning. Cosine distance measures the similarity between in-context examples, with lower values indicating greater similarity.

\subsection{Steering Player Positions}\label{appdix:experiments:steering_layer_positions}

\begin{table}[h]
\centering
\small
\caption{Steering layer intervention results on Bifrost.}
\label{tab:appdix_steering_layers}
\begin{tabular}{l cc cc ccc}
\toprule
\textbf{Steering Layer} & \multicolumn{2}{c}{\textbf{AQUA $\to$ GSM8K}} & \multicolumn{2}{c}{\textbf{ARC $\to$ GPQA}} & \multicolumn{3}{c}{\textbf{HumanEval $\to$ LiveCodeBench}} \\
\cmidrule(lr){2-3} \cmidrule(lr){4-5} \cmidrule(lr){6-8}
& solve rate (\%) & & solve rate (\%) & & pass@1 & pass@3 & pass@5 \\
\midrule
Layer 2  & 22.69  & & 58.33  & & 10.53 & 14.05 & 15.91 \\
Layer 4  & 54.75  & & 65.00  & & 10.83 & 14.55 & 16.29 \\
Layer 6  & 65.25  & & 65.00  & & 10.83 & 14.85 & 17.05 \\
Layer 8  & 79.50  & & 60.00  & & 12.65 & 17.08 & 19.32 \\
Layer 10 & 80.50  & & 60.00  & & 11.06 & 14.77 & 16.67 \\
Layer 14 & 81.45  & & 40.00  & & 13.03 & 17.58 & 19.70 \\
Layer 16 & 81.25  & & 46.67  & & 12.58 & 16.67 & 18.94 \\
Last layer & 78.75 & & 30.00 & & 12.58 & 16.67 & 18.94 \\
\midrule
3 middle layers: [10\textsuperscript{th},14\textsuperscript{th},16\textsuperscript{th}] & 80.25 & & 56.67 & & 10.91 & 14.66 & 16.67 \\
All mid-to-late layers: (10\textsuperscript{th} $\to$ 16\textsuperscript{th}) & 76.25 & & 46.67 & & 10.08 & 13.45 & 15.15 \\
\bottomrule
\end{tabular}
\end{table}

Tab.~\ref{tab:appdix_steering_layers} shows Bifrost effectiveness under different steering layers. We conduct experiment on single steering from 2\textsuperscript{th} to the last layer, and multiple steering layers from 10\textsuperscript{th} to 16\textsuperscript{th} layers.

\clearpage
\section{Bifrost Prompts}\label{appdix:prompts}
We present all prompts used for Bifrost in our experiments. Sec~\ref{appdix:prompts:aqua_gsm8k} demonstrates AQUA and GSM8K prompts used in math problem solving. Sec~\ref{appdix:prompts:arc_gpqa} shows ARC-Easy and GPQA-Diamond prompts for multi-choice question answering. Finally, Sec~\ref{appdix:prompts:humaneval_lcb} presents the HumanEval and LiveCodeBench prompts for the code generation problem.

\subsubsection{AQUA and GSM8K Prompts for Math Problem Solving}\label{appdix:prompts:aqua_gsm8k}

\begin{promptbox}[AQUA prompt template]
Solve the following problem:\\
Question: \{question\}\\
Answer:
\end{promptbox}

\begin{promptbox}[GSM8K with AQUA in-context examples (Llama-3.1-8B-Instruct)]

\begin{enumerate}
\item \textbf{Question:} Find out which of the following values is the multiple of X, if it is divisible by 9 and 12? \\
\textbf{Answer:}
The value that is a multiple of both 9 and 12 will be a multiple of their LCM (Least Common Multiple).
Step 1: First, we need to find the prime factorization of each number.
The prime factors of 9 are $3 \times 3$.
The prime factors of 12 are $2 \times 2 \times 3$.
Step 2: Next, we determine the least common multiple (LCM) of 9 and 12.
$9 = 3^2$, $12 = 2^2 \times 3$
LCM = $2^2 \times 3^2 = 4 \times 9 = 36$
Step 3: Any number that is a multiple of 36 will also be a multiple of both 9 and 12.
Therefore, the correct answer is 36.
\item \textbf{Question:} A trader sold an article at a profit of 20\% for Rs.360. What is the cost price of the article? \\
\textbf{Answer:}
Let Cost Price (CP) = $x$
Profit = $20\%$ of CP $= 0.2x$
Selling Price (SP) $= x + 0.2x = 1.2x$
Given SP = Rs.360
$\therefore 1.2x = 360$
$x = \dfrac{360}{1.2} = 300$
The Cost Price of the article is Rs.300.
\item \textbf{Question:} Find the total no. of distinct bike no.'s that can be formed using 2 letters followed by 2 no.'s. How many letters need to be distinct? \\
\textbf{Answer:}
Assuming the two letters must be distinct and digits allow repetition.
Step 1: For the first letter, there are 26 possibilities (A-Z).
Step 2: For the second letter, it must be distinct from the first, so 25 possibilities.
Step 3: For the two digits, each has 10 choices (0-9) with repetition allowed, so $10 \times 10 = 100$ possibilities.
Step 4: Total distinct bike numbers = $26 \times 25 \times 100 = 65000$.
Therefore, the total number is 65000.
\item \textbf{Question:} A train running at a speed of 100 miles/hour, takes 10 hours to reach its destination. After covering quarter of the distance, it starts raining and the train has to be slowed to speed of 75 miles/hour. What is the total journey duration? \\
\textbf{Answer:}
Let total distance = $D$.
Step 1: Original time for full distance at 100 mph is 10 hours, so $D = 100 \times 10 = 1000$ miles.
Step 2: Quarter distance = $\frac{1}{4} \times 1000 = 250$ miles at 100 mph, time = $\frac{250}{100} = 2.5$ hours.
Step 3: Remaining distance = $750$ miles at 75 mph, time = $\frac{750}{75} = 10$ hours.
Step 4: Total time = $2.5 + 10 = 12.5$ hours.
Therefore, the total journey duration is 12.5 hours.
\item \textbf{Question:} At a certain factory, 10 percent of the staplers produced on Monday were defective and 2 percent of the non-defective staplers were rejected by mistake. If 72 of the non-defective staplers were rejected, what was the number of staplers produced that day? \\
\textbf{Answer:}
Let $x$ = total number of staplers produced.
Non-defective staplers $= 90\% = 0.9x$
Rejected non-defective $= 2\%$ of non-defective
$0.02 \times 0.9x = 72$
$0.018x = 72$
$x = \dfrac{72}{0.018} = 4000$
The factory produced 4000 staplers that day.
\item \textbf{Question:} An exam is given in a certain class. The average (arithmetic mean) of the highest score and the lowest score is equal to $x$. If the average score for the entire class is equal to $y$ and there are $z$ students in the class, where $z > 5$, then in terms of $x$, $y$, and $z$, what is the average score for the class excluding the highest and lowest scorers? \\
\textbf{Answer:}
Total sum of scores $= yz$
Sum of highest and lowest $= 2x$
Sum excluding highest and lowest $= yz - 2x$
Number of remaining students $= z - 2$
$\therefore$ Average $= \dfrac{yz - 2x}{z - 2}$
\item \textbf{Question:} $[5 + ? \times 19 - 15 - 7]/[13 \times 13 - 156] = 6$ \\
\textbf{Answer:}
$(5 + ? \times 19 - 15 - 7) / (169 - 156) = 6$
$(5 + ? \times 19 - 22) / 13 = 6$
$5 + ? \times 19 - 22 = 78$
$? \times 19 = 95$
$? = 5$
$\boxed{5}$
\item \textbf{Question: } \{question\} \\
\textbf{Answer:}
\end{enumerate}

\end{promptbox}

\begin{promptbox}[GSM8K prompt template with AQUA in-context examples (Llama-3.2-3B-Instruct)]
\begin{enumerate}

\item \textbf{Question:} Find out which of the following values is the multiple of X, if it is divisible by 9 and 12?  \\
\textbf{Answer:}  
The value that is a multiple of both 9 and 12 will be a multiple of their LCM (Least Common Multiple).  

Step 1: First, we need to find the prime factorization of each number.  
The prime factors of 9 are $3 \times 3$.  
The prime factors of 12 are $2 \times 2 \times 3$.  

Step 2: Next, we determine the least common multiple (LCM) of 9 and 12.  
$9 = 3^2$, $12 = 2^2 \times 3$  
LCM = $2^2 \times 3^2 = 4 \times 9 = 36$  

Step 3: Any number that is a multiple of 36 will also be a multiple of both 9 and 12.  
Therefore, the correct answer is 36.

\item \textbf{Question:} A trader sold an article at a profit of 20\% for Rs.360. What is the cost price of the article?  \\
\textbf{Answer:}  
Let Cost Price (CP) = $x$  
Profit = $20\%$ of CP $= 0.2x$  
Selling Price (SP) $= x + 0.2x = 1.2x$  
Given SP = Rs.360  
$\therefore 1.2x = 360$  
$x = \dfrac{360}{1.2} = 300$  
The Cost Price of the article is Rs.300.

\item \textbf{Question:} At a certain factory, 10 percent of the staplers produced on Monday were defective and 2 percent of the non-defective staplers were rejected by mistake. If 72 of the non-defective staplers were rejected, what was the number of staplers produced that day?  \\
\textbf{Answer:}  
Let $x$ = total number of staplers produced.  
Non-defective staplers $= 90\% = 0.9x$  
Rejected non-defective $= 2\%$ of non-defective  
$0.02 \times 0.9x = 72$  
$0.018x = 72$  
$x = \dfrac{72}{0.018} = 4000$  
The factory produced 4000 staplers that day.

\item \textbf{Question:} An exam is given in a certain class. The average (arithmetic mean) of the highest score and the lowest score is equal to $x$. If the average score for the entire class is equal to $y$ and there are $z$ students in the class, where $z > 5$, then in terms of $x$, $y$, and $z$, what is the average score for the class excluding the highest and lowest scorers?  \\
\textbf{Answer:}  
Total sum of scores $= yz$  
Sum of highest and lowest $= 2x$  
Sum excluding highest and lowest $= yz - 2x$  
Number of remaining students $= z - 2$  
$\therefore$ Average $= \dfrac{yz - 2x}{z - 2}$

\item \textbf{Question:} $[5 + ? \times 19 - 15 - 7]/[13 \times 13 - 156] = 6$  \\
\textbf{Answer:}  
$(5 + ? \times 19 - 15 - 7) / (169 - 156) = 6$  
$(5 + ? \times 19 - 22) / 13 = 6$  
$5 + ? \times 19 - 22 = 78$  
$? \times 19 = 95$  
$? = 5$  
$\boxed{5}$

\item \textbf{Question:} A grocer makes a 25\% profit on the selling price for each bag of flour it sells. If he sells each bag for \$100 and makes \$3,000 in profit, how many bags did he sell?  \\
\textbf{Answer:}  
Let $x$ = number of bags sold.  
Selling price per bag = \$100  
Profit = 25\% of selling price $\implies$ Profit per bag = \$25  
Cost per bag = \$100 - \$25 = \$75  
Total profit = \$3000  
$\therefore x \times 25 = 3000$  
$x = \dfrac{3000}{25} = 120$  
\textbf{(Note: The original solution contained an error in interpretation; correct answer is 120 bags.)}

\item \textbf{Question:} While selling a watch, a shopkeeper gives a discount of 5\%. If he gives a discount of 7\%, he earns Rs. 15 less as profit. The marked price of the watch is:  \\
\textbf{Answer:}  
Let Marked Price = $x$ (also treated as cost price in the solution provided)  
With 5\% discount: SP = $0.95x$, Profit = $0.05x$  
With 7\% discount: SP = $0.93x$, Profit = $0.07x$? (Note: original solution appears inconsistent)  
Difference in profit: $0.05x - 0.07x = -0.02x = -15$  
$0.02x = 15$  
$x = 750$  
Marked Price = Rs. 750

\item \textbf{Question: } \{question\} \\
\textbf{Answer:}

\end{enumerate}
\end{promptbox}

\begin{promptbox}[GSM8K with AQUA in-context examples (Llama-3.2-1B-Instruct)]

\begin{enumerate}
\item \textbf{Question:} Find out which of the following values is the multiple of X, if it is divisible by 9 and 12? \\
\textbf{Answer:}
The value that satisfies these conditions is 36.  
Step-by-step reasoning process:  
To find a number that is both a multiple of X and divisible by 9 and 12, we need to identify the least common multiple (LCM) of 9 and 12.  
Multiples of 9: 9, 18, 27, 36, 45, …  
Multiples of 12: 12, 24, 36, 48, …  
The smallest common multiple is 36.  
Therefore, any number divisible by both 9 and 12 is a multiple of 36, so X = 36.  
$\boxed{36}$

\item \textbf{Question:} Add: +45 and -30 \\
\textbf{Answer:}  
(+45) + (-30) = 45 - 30 = 15  
The result is 15.

\item \textbf{Question:} [5 + ? × 19 - 15 - 7]/[13 × 13 - 156] = 6 \\
\textbf{Answer:}  
Denominator: 13 × 13 - 156 = 169 - 156 = 13  
So the equation becomes:  
(5 + ? × 19 - 15 - 7) / 13 = 6  
(5 + ? × 19 - 22) / 13 = 6  
Multiply both sides by 13:  
5 + ? × 19 - 22 = 78  
? × 19 - 17 = 78  
? × 19 = 95  
? = 95 / 19 = 5  
$\boxed{5}$

\item \textbf{Question:} A grocer makes a 25\% profit on the selling price for each bag of flour it sells. If he sells each bag for \$100 and makes \$3,000 in profit, how many bags did he sell? \\
\textbf{Answer:}  
Profit is 25\% of selling price.  
Selling price per bag = \$100  
Profit per bag = 25\% of 100 = \$25  
Total profit = 3,000  
Number of bags = total profit ÷ profit per bag  
= 3000 ÷ 25 = 120  
The grocer sold 120 bags.

\item \textbf{Question:} On a normal day Bill usually averages about 15 mph when riding his bicycle. On a windy day, his speed is reduced by 4 mph. How far can Bill travel on a windy day in 21 minutes? Round to the nearest hundredth. \\
\textbf{Answer:}  
Normal speed = 15 mph  
Windy day speed = 15 - 4 = 11 mph  
Time = 21 minutes = 21/60 hours = 0.35 hours  
Distance = speed × time = 11 × (21/60) = 231/60 = 3.85 miles  
$\boxed{3.85}$

\item \textbf{Question:} A jeep travels a certain distance taking 6 hours in the forward journey. During the return journey, it increased its speed by 12km/hr and took 4 hours. What is the distance travelled by the jeep? \\
\textbf{Answer:}  
Let original speed = x km/h  
Distance = speed × time  
Forward: distance = x × 6  
Return: speed = x + 12 km/h, distance = (x + 12) × 4  
Since distance is same:  
6x = 4(x + 12)  
6x = 4x + 48  
2x = 48  
x = 24 km/h  
Distance = 24 × 6 = 144 km  
The distance travelled is 144 km.

\item \textbf{Question:} While selling a watch, a shopkeeper gives a discount of 5\%. If he gives a discount of 7\%, he earns Rs. 15 less as profit. The marked price of the watch is: \\
\textbf{Answer:}  
Let Marked Price = x (assuming cost price is not needed for difference calculation)  
With 5\% discount: SP = 0.95x  
With 7\% discount: SP = 0.93x  
Profit difference = 0.95x - 0.93x = 0.02x  
Given difference = Rs. 15  
0.02x = 15  
x = 15 / 0.02 = 750  
Marked price = Rs. 750

\item \textbf{Question: } \{question\} \\
\textbf{Answer:}
\end{enumerate}

\end{promptbox}

\subsubsection{ARC-Easy and GPQA-Diamond Prompts for Mutli-Choice Question Answering}\label{appdix:prompts:arc_gpqa}
\begin{promptbox}[ARC-Easy prompt template]
\textbf{Question}: \{question\}\\
A) \{option-a\} \quad B) \{option-b\} \quad C) \{option-c\} \quad D) \{option-d\} \\\\
\textbf{Answer}:
\end{promptbox}

\begin{promptbox}[GPQA-Diamond with ARC-Easy in-context examples (Llama-3.1-8B-Instruct)]
\begin{enumerate}

\item 
\textbf{Question:} Which technology was developed most recently?\\
A) cellular telephone B) television C) refrigerator D) airplane

\textbf{Answer:} A\\
The best answer is A. Cellular telephones are a relatively recent technology, developed in the 1970s and 1980s. Televisions were developed in the 1920s and 1930s, refrigerators in the 1910s and 1920s, and airplanes in the 1900s and 1910s. Therefore, (A) is correct. (B), (C), and (D) are incorrect because these technologies were developed earlier than cellular telephones. (Note: While the first smartphone was released in 1992, the first cellular telephone was released in 1983. Therefore, (A) is still the best answer.) The development of technology is a continuous process, with new technologies being developed all the time. However, the development of technology is not always linear. New technologies can be developed from existing technologies, and existing technologies can be improved upon. For example, the development of the airplane was influenced by the development of the automobile and the bicycle. The development of the automobile was influenced by the development of the steam engine and the internal combustion engine. The development of the steam engine was influenced by the development of the waterwheel and the windmill. The development of the waterwheel and.

\item 
\textbf{Question:} Planets in the solar system are in constant motion. What factor has the greatest effect on the orbits of the planets?\\
A) the size of the planets B) gravitational pull of the Sun C) the composition of the planets D) electromagnetic radiation from the Sun

\textbf{Answer:} B\\
The gravitational pull of the Sun is the primary force that keeps the planets in orbit around the Sun.\\
The best answer is B. \\
Step 1: Identify the key concept\\
The key concept in this question is the factor that has the greatest effect on the orbits of the planets in the solar system.\\
Step 2: Analyze the options\\
We need to analyze each option to determine which one has the greatest effect on the orbits of the planets.\\
Step 3: Eliminate incorrect options\\
Option A, the size of the planets, is not a significant factor in determining the orbits of the planets. Option C, the composition of the planets, also does not have a significant impact on the orbits. Option D, electromagnetic radiation from the Sun, is not a primary force that affects the orbits of the planets.\\
Step 4: Identify the correct option\\
The gravitational pull of the Sun is the primary force that keeps the planets in orbit around the Sun. This is the correct option.\\
The final answer is: $\boxed{B}$

\item 
\textbf{Question:} Which of the following is produced during the process of cellular respiration?\\
A) carbon dioxide B) sodium chloride C) oxygen D) sugar

\textbf{Answer:} A\\
Carbon dioxide is produced during cellular respiration.\\
The best answer is A. \\
Step 1: Identify the process mentioned in the question.\\
The process mentioned in the question is cellular respiration.\\
Step 2: Recall what cellular respiration produces.\\
Cellular respiration is the process by which cells break down glucose to produce energy. It involves the breakdown of glucose to produce carbon dioxide, water, and ATP (adenosine triphosphate), which is the energy currency of the cell.\\
Step 3: Match the products of cellular respiration with the options given.\\
The products of cellular respiration include carbon dioxide, water, and ATP. Among the options given, carbon dioxide is the correct answer.\\
Step 4: Eliminate incorrect options.\\
Sodium chloride (table salt) is not a product of cellular respiration. Oxygen is actually a reactant in cellular respiration, not a product. Sugar is a reactant in cellular respiration, not a product.\\
The final answer is: $\boxed{A}$

\item 
\textbf{Question:} A sample of sulfur forms crystals when it\\
A) melts. B) freezes. C) evaporates. D) condenses.

\textbf{Answer:} B\\
Sulfur is a solid at room temperature.\\
The best answer is B. \\
Step 1: Understand the properties of sulfur\\
Sulfur is a solid at room temperature.\\
Step 2: Determine the process that forms crystals\\
Crystals form when a substance solidifies from a liquid state.\\
Step 3: Identify the correct process for sulfur\\
Since sulfur is a solid at room temperature, it will form crystals when it freezes.\\
The final answer is: $\boxed{B}$

\item 
\textbf{Question:} Which form of solar radiation causes sunburn?\\
A) Visible B) Ultraviolet C) Infrared D) X-rays

\textbf{Answer:} B\\
Ultraviolet radiation causes sunburn.\\
The best answer is B. \\
Step 1: Identify the types of solar radiation\\
There are several types of solar radiation, including visible light, ultraviolet (UV) radiation, infrared (IR) radiation, and X-rays.\\
Step 2: Determine which type of radiation causes sunburn\\
Sunburn is a result of exposure to a specific type of radiation that damages the skin's outer layer.\\
Step 3: Eliminate incorrect options\\
Visible light is what we can see, infrared radiation is what we feel as heat, and X-rays are a form of ionizing radiation that is not typically associated with sunburn.\\
Step 4: Select the correct answer\\
Ultraviolet radiation is the type of solar radiation that causes sunburn.\\
The final answer is: $\boxed{B}$

\item 
\textbf{Question:} About 75\% of the world's active volcanoes are the result of tectonic activity around which plate?\\
A) North American Plate B) African Plate C) Pacific Plate D) Nazca Plate

\textbf{Answer:} C\\
The Pacific Ring of Fire is a horseshoe-shaped zone of intense seismic and volcanic activity that stretches from New Zealand, along the eastern edge of Asia, through the Philippines, Japan, and the Aleutian Islands, and down the western coast of North and South America. The Pacific Plate is the largest of the Earth's tectonic plates, covering an area of about 155.6 million square kilometers (60.1 million square miles). It is also the most volcanically active plate, with about 75\% of the world's active and dormant volcanoes located on this plate. Plate tectonics is the theory that the Earth's outer shell is divided into several plates that glide over the mantle, the rocky inner layer above the core. Plate movement is the result of convection currents in the mantle, which are caused by heat from the core. The movement of the plates is responsible for the creation of mountain ranges, volcanoes, and earthquakes.\\
The best answer is C.

\item 
\textbf{Question:} Oak trees produce seeds that are contained in acorns. Blue jays eat the seeds in acorns. Blue jays also collect acorns and hide them in the ground, often far away from the parent oak tree. Blue jays do not eat the seed of every acorn they hide. How do oak trees benefit from blue jays' collecting and hiding acorns?\\
A) The oak trees are pollinated by the blue jays. B) The oak trees are protected from other herbivores. C) The seeds of oak trees are protected from the sun. D) The seeds of oak trees are planted in new environments.

\textbf{Answer:} D\\
The blue jays are helping the oak trees by planting the seeds in new environments. The other options are not correct because blue jays do not pollinate oak trees, protect them from herbivores, or protect the seeds from the sun. (Skill 1b)\\
The best answer is D. (Skill 1b) - Science in Personal and Social Perspectives. This question requires the student to think about the relationship between the oak trees and the blue jays. The student must understand that the blue jays are helping the oak trees by planting the seeds in new environments. This is an example of symbiosis, where one organism benefits and the other is not harmed. This is a key concept in science.

\item 
\textbf{Question:} \{question\}\\
A) \{option-a\} \quad B) \{option-b\} \quad C) \{option-c\} \quad D) \{option-d\} \\\\
\textbf{Answer}:

\end{enumerate}
\end{promptbox}

\begin{promptbox}[GPQA-Diamond with ARC-Easy in-context examples (Llama-3.2-3B-Instruct)]
\begin{enumerate}

\item \textbf{Question:} Which technology was developed most recently?\\
A) cellular telephone \quad B) television \quad C) refrigerator \quad D) airplane

\textbf{Answer:} A \\
A cellular telephone was developed most recently. The other options were developed before the invention of the cellular telephone.\\
The best answer is \textbf{A}.

\item \textbf{Question:} Planets in the solar system are in constant motion. What factor has the greatest effect on the orbits of the planets?\\
A) the size of the planets \quad B) gravitational pull of the Sun \quad C) the composition of the planets \quad D) electromagnetic radiation from the Sun

\textbf{Answer:} \\
Step-by-step reasoning process: \textbf{B} \\
Planets in the solar system are in constant motion. What factor has the greatest effect on the orbits of the planets? \\
(A) The size of the planets has no effect on the orbits of the planets. \\
(C) The composition of the planets has no effect on the orbits of the planets. \\
(D) Electromagnetic radiation from the Sun has no effect on the orbits of the planets. \\
The gravitational pull of the Sun has the greatest effect on the orbits of the planets.\\
The best answer is \textbf{B}.

\item \textbf{Question:} Which of the following is produced during the process of cellular respiration?\\
A) carbon dioxide \quad B) sodium chloride \quad C) oxygen \quad D) sugar

\textbf{Answer:} A \\
Carbon dioxide is a byproduct of cellular respiration.\\
The best answer is \textbf{A}.

\item \textbf{Question:} Which form of solar radiation causes sunburn?\\
A) Visible \quad B) Ultraviolet \quad C) Infrared \quad D) X-rays

\textbf{Answer:} B \\
Ultraviolet radiation is the cause of sunburn.\\
The best answer is \textbf{B}.

\item \textbf{Question:} Which of the following is the most accurate measurement for the length of an object that is actually 15.0 m long?\\
A) 15.35 m \quad B) 15.2 m \quad C) 14.55 m \quad D) 14.5 m

\textbf{Answer:} \\
Step-by-step reasoning process: \textbf{(B)} \\
The uncertainty in the length of the object is $\pm 0.01$ m. The most accurate measurement is the one that is closest to the true length of the object. The true length of the object is 15.0 m, so the most accurate measurement is 15.2 m.\\
The best answer is \textbf{B}.

\item \textbf{Question:} Which type of graph would best display the changes in temperature over a 24 hour period?\\
A) line graph \quad B) pictograph \quad C) circle (pie) graph \quad D) stem-and-leaf graph

\textbf{Answer:} \\
Step-by-step reasoning process: \\
A line graph is used to show changes over time. A pictograph is used to show the number of people in different groups. A circle (pie) graph is used to show the number of people in different groups. A stem-and-leaf graph is used to show the number of people in different groups. Therefore, (A) is correct. (B)--(D) are incorrect because they are not used to show changes over time.\\
The best answer is \textbf{A}.

\item \textbf{Question:} Which is a true statement about cells?\\
A) Plant cells contain chloroplasts. \quad B) Animal cells are missing a nucleus. \quad C) Only plant cells have a cell membrane. \quad D) Animal cells include a rigid wall structure.

\textbf{Answer:} A \\
Plant cells contain chloroplasts, which are organelles that contain the pigment chlorophyll. This pigment is necessary for photosynthesis.\\
The best answer is \textbf{A}.

\item \textbf{Question:} \{question\}\\
A) \{option-a\} \quad B) \{option-b\} \quad C) \{option-c\} \quad D) \{option-d\} \\\\
\textbf{Answer}:

\end{enumerate}
\end{promptbox}

\begin{promptbox}[GPQA-Diamond with ARC-Easy in-context examples (Llama-3.2-1B-Instruct)]
\begin{enumerate}

\item 
\textbf{Question:} Which technology was developed most recently?\\
A) cellular telephone B) television C) refrigerator D) airplane

\textbf{Answer:} Step-by-step reasoning process: A- The cellular telephone was first developed in the 1970s.\\
The best answer is A.

\item 
\textbf{Question:} Which form of solar radiation causes sunburn?\\
A) Visible B) Ultraviolet C) Infrared D) X-rays

\textbf{Answer:} Step-by-step reasoning process: B -Ultraviolet radiation is the primary cause of sunburn.\\
The best answer is B.

\item 
\textbf{Question:} Oak trees produce seeds that are contained in acorns. Blue jays eat the seeds in acorns. Blue jays also collect acorns and hide them in the ground, often far away from the parent oak tree. Blue jays do not eat the seed of every acorn they hide. How do oak trees benefit from blue jays' collecting and hiding acorns?\\
A) The oak trees are pollinated by the blue jays. B) The oak trees are protected from other herbivores. C) The seeds of oak trees are protected from the sun. D) The seeds of oak trees are planted in new environments.

\textbf{Answer:} Step-by-step reasoning process: D -Blue jays collect acorns and hide them in the ground, often far away from the parent oak tree. This protects the seeds from other herbivores. Choice A is incorrect because the blue jays do not pollinate the oak trees. Choice B is incorrect because the blue jays do not protect the oak trees from other herbivores. Choice C is incorrect because the seeds of oak trees are not protected from the sun.\\
The best answer is D.

\item 
\textbf{Question:} The changing appearances of the nighttime sky over the surface of Earth and eclipses of the Moon have provided evidence that\\
A) Earth is a sphere. B) Earth supports life. C) Earth has a layered atmosphere. D) Earth is covered mostly with water.

\textbf{Answer:} A The changing appearances of the nighttime sky over the surface of Earth and eclipses of the Moon have provided evidence that Earth is a sphere.\\
The best answer is A.

\item 
\textbf{Question:} If the environment in an area is destroyed, a new community of organisms sometimes takes the place of previous communities. The introduction of a new community of organisms is known as\\
A) evolution. B) adaptation. C) biological diversity. D) ecological succession.

\textbf{Answer:} D -Ecological succession is the process of change in the species composition of a community over time.\\
The best answer is D.

\item 
\textbf{Question:} Which of these is a characteristic of the metal gold?\\
A) malleable B) lighter than water C) magnetic D) harder than a diamond

\textbf{Answer:} Step-by-step reasoning process: A -Gold is malleable, meaning it can be pounded into thin sheets. It is also denser than water, so it is heavier than water.\\
The best answer is A.

\item 
\textbf{Question:} When placed in a glucose solution, yeast cells will produce bubbles. Which process are the yeast cells carrying out?\\
A) chemosynthesis B) photosynthesis C) respiration D) transpiration

\textbf{Answer:} Step-by-step reasoning process: C -Yeast cells produce carbon dioxide gas as a byproduct of their metabolism.\\
The best answer is C.

\item 
\textbf{Question:}  \{question\}\\
A) \{option-a\} \quad B) \{option-b\} \quad C) \{option-c\} \quad D) \{option-d\} \\\\
\textbf{Answer}:

\end{enumerate}
\end{promptbox}

\subsubsection{HumanEval and LiveCodeBench Prompts for Code Generation}\label{appdix:prompts:humaneval_lcb}
\begin{promptbox}[HumanEval prompt template]
Solve the following code problem:\\
\{prompt\}\\

Complete the function below.\\
def \quad \{entry\_point\}

\end{promptbox}

\begin{promptbox}[LivdeCodeBench with HumanEval in-context examples (Llama-3.1-8B-Instruct)]

You are an expert Python programmer. You will be given a question (problem specification) and will generate a correct Python program that matches the specification and passes all tests.\\

\#\#\# Examples:
\begin{Verbatim}[breaklines,breakanywhere]
def has_close_elements(numbers: List[float], threshold: float) -> bool:
    """ Check if in given list of numbers, are any two numbers closer to each other than
    given threshold.
    >>> has_close_elements([1.0, 2.0, 3.0], 0.5)
    False
    >>> has_close_elements([1.0, 2.8, 3.0, 4.0, 5.0, 2.0], 0.3)
    True
    """
    for i in range(len(numbers)):
        for j in range(i + 1, len(numbers)):
            if abs(numbers[i] - numbers[j]) < threshold:
                return True
    return False

def separate_paren_groups(paren_string: str) -> List[str]:
    # Remove spaces from the input string
    paren_string = paren_string.replace(" ", "")

    # Initialize an empty list to store the separated groups
    separated_groups = []

    # Initialize an empty string to build the current group
    current_group = ""

    # Initialize a counter to keep track of the balance of parentheses
    balance = 0

    # Iterate over each character in the input string
    for char in paren_string:
        # If the character is an open parenthesis, increment the balance
        if char == "(":
            balance += 1
            current_group += char
        # If the character is a close parenthesis, decrement the balance
        elif char == ")":
            balance -= 1
            current_group += char
        # If the balance is zero, it means we've found a complete group
        if balance == 0:
            # Add the current group to the list of separated groups
            separated_groups.append(current_group)
            # Reset the current group and balance
            current_group = ""
            balance = 0

    # Return the list of separated groups
    return separated_groups

def truncate_number(number: float) -> float:
    """ Given a positive floating point number, it can be decomposed into
    and integer part (largest integer smaller than given number) and decimals
    (leftover part always smaller than 1).

    Return the decimal part of the number.
    >>> truncate_number(3.5)
    0.5
    """
    # First, we need to separate the integer part from the decimal part.
    # We can do this by using the built-in int() function to get the integer part.
    integer_part = int(number)
    
    # Now, we need to get the decimal part. We can do this by subtracting the integer part from the original number.
    # This will give us the decimal part, which is always smaller than 1.
    decimal_part = number - integer_part
    
    # Finally, we return the decimal part.
    return decimal_part

def below_zero(operations: List[int]) -> bool:
    """ You're given a list of deposit and withdrawal operations on a bank account that starts with
    zero balance. Your task is to detect if at any point the balance of account falls below zero, and
    at that point function should return True. Otherwise it should return False.
    >>> below_zero([1, 2, 3])
    False
    >>> below_zero([1, 2, -4, 5])
    True
    """
    balance = 0
    for operation in operations:
        balance += operation
        if balance < 0:
            return True
    return False

def mean_absolute_deviation(numbers: List[float]) -> float:
    """ For a given list of input numbers, calculate Mean Absolute Deviation
    around the mean of this dataset.
    Mean Absolute Deviation is the average absolute difference between each
    element and a centerpoint (mean in this case):
    MAD = average | x - x_mean |
    >>> mean_absolute_deviation([1.0, 2.0, 3.0, 4.0])
    1.0
    """
    # Calculate the mean of the input list
    mean = sum(numbers) / len(numbers)
    
    # Calculate the absolute difference between each element and the mean
    absolute_diffs = [abs(num - mean) for num in numbers]
    
    # Calculate the mean of the absolute differences
    mad = sum(absolute_diffs) / len(absolute_diffs)
    
    return mad
\end{Verbatim}

\#\#\# Question: \{question\}\\

\#\#\# Format: Implement a function called \verb|main()| which orchestrates the solution by reading inputs from stdin as \verb|sys.stdin.read().split()|, do not use \verb|input()|, and writing the answer to stdout. Feel free to use additional functions as necessary. Next do NOT forget to call \verb|main| function at the end of the program otherwise you will not be awarded any points.
\begin{Verbatim}
```python
# YOUR CODE HERE
```
\end{Verbatim}

\#\#\# Answer: (use the provided format with backticks)

\end{promptbox}

\begin{promptbox}[LivdeCodeBench with HumanEval in-context examples (Llama-3.2-3B-Instruct and Llama-3.2-1B-Instruct)]
You are an expert Python programmer. You will be given a question (problem specification) and will generate a correct Python program that matches the specification and passes all tests.\\

\#\#\# Examples:
\begin{Verbatim}[breaklines,breakanywhere]
def has_close_elements(numbers: List[float], threshold: float) -> bool:
    """ Check if in given list of numbers, are any two numbers closer to each other than
    given threshold.
    >>> has_close_elements([1.0, 2.0, 3.0], 0.5)
    False
    >>> has_close_elements([1.0, 2.8, 3.0, 4.0, 5.0, 2.0], 0.3)
    True
    """

    # Sort the list of numbers in ascending order
    numbers.sort()

    # Iterate over the sorted list of numbers
    for i in range(len(numbers) - 1):
        # For each number, check if the difference between it and the next number is less than the threshold
        if numbers[i + 1] - numbers[i] < threshold:
            # If the difference is less than the threshold, return True
            return True

    # If the function hasn't returned True after iterating over the entire list, return False
    return False

def truncate_number(number: float) -> float:
    """ Given a positive floating point number, it can be decomposed into
    and integer part (largest integer smaller than given number) and decimals
    (leftover part always smaller than 1).

    Return the decimal part of the number.
    >>> truncate_number(3.5)
    0.5
    """

    # First, we need to find the integer part of the number, which is the largest
    # integer smaller than the given number. We can use the math.floor function
    # for this.
    import math
    integer_part = math.floor(number)

    # Now, we need to find the decimal part of the number. This is simply the
    # difference between the original number and the integer part.
    decimal_part = number - integer_part

    # Finally, we return the decimal part of the number.
    return decimal_part

def below_zero(operations: List[int]) -> bool:
    """ You're given a list of deposit and withdrawal operations on a bank account that starts with
    zero balance. Your task is to detect if at any point the balance of account fallls below zero, and
    at that point function should return True. Otherwise it should return False.
    >>> below_zero([1, 2, 3])
    False
    >>> below_zero([1, 2, -4, 5])
    True
    """

    balance = 0
    for operation in operations:
        balance += operation
        if balance < 0:
            return True
    return False

def mean_absolute_deviation(numbers: List[float]) -> float:
    """ For a given list of input numbers, calculate Mean Absolute Deviation
    around the mean of this dataset.
    Mean Absolute Deviation is the average absolute difference between each
    element and a centerpoint (mean in this case):
    MAD = average | x - x_mean |
    >>> mean_absolute_deviation([1.0, 2.0, 3.0, 4.0])
    1.0
    """

    # Calculate the mean of the input list
    mean = sum(numbers) / len(numbers)
    
    # Initialize a variable to store the sum of absolute deviations
    sum_absolute_deviation = 0
    
    # Iterate over each number in the input list
    for num in numbers:
        # Calculate the absolute deviation of the current number from the mean
        absolute_deviation = abs(num - mean)
        
        # Add the absolute deviation to the sum
        sum_absolute_deviation += absolute_deviation
    
    # Calculate the mean absolute deviation by dividing the sum by the length of the list
    mad = sum_absolute_deviation / len(numbers)
    
    # Return the mean absolute deviation
    return mad

def filter_by_substring(strings: List[str], substring: str) -> List[str]:
    """ Filter an input list of strings only for ones that contain given substring
    >>> filter_by_substring([], 'a')
    []
    >>> filter_by_substring(['abc', 'bacd', 'cde', 'array'], 'a')
    ['abc', 'bacd', 'array']
    """

    # Initialize an empty list to store the filtered strings
    filtered_strings = []

    # Iterate over each string in the input list
    for string in strings:
        # Check if the substring is present in the current string
        if substring in string:
            # If the substring is found, append the string to the filtered list
            filtered_strings.append(string)

    # Return the filtered list of strings
    return filtered_strings
\end{Verbatim}

\#\#\# Question: \{question\}\\

\#\#\# Format: Implement a function called \verb|main()| which orchestrates the solution by reading inputs from stdin as \verb|sys.stdin.read().split()|, do not use \verb|input()|, and writing the answer to stdout. Feel free to use additional functions as necessary. Next do NOT forget to call \verb|main| function at the end of the program otherwise you will not be awarded any points.
\begin{Verbatim}
```python
# YOUR CODE HERE
```
\end{Verbatim}

\#\#\# Answer: (use the provided format with backticks)
    
\end{promptbox}

\end{appendices}


\end{document}